\documentclass{article}
\usepackage[T1]{fontenc}
\usepackage{microtype,graphicx,booktabs}
\usepackage[hidelinks]{hyperref}

\usepackage[preprint]{icml2026}
\usepackage{amsmath,amssymb,mathtools,amsthm}
\newtheorem{theorem}{Theorem}[section]

\newtheorem{proposition}[theorem]{Proposition}
\newtheorem{corollary}[theorem]{Corollary}
\newcommand{\R}{\mathbb R}
\newcommand{\cX}{\mathcal X}
\newcommand{\op}{\mathrm{op}}

\DeclareMathOperator{\diag}{diag}
\DeclareMathSizes{9}{9}{7}{5}
\makeatletter
\renewcommand{\ICML@preprint}{\textit{Preprint. October 30, 2026.}}
\makeatother
\icmltitlerunning{Certifying Retraining Regret through Residual Structure}
\hypersetup{pdfauthor={Peida Li},pdftitle={Certifying Retraining Regret through Residual Structure},pdfsubject={Certified completion-relative ridge regret},pdfkeywords={incomplete data, certified learning, ridge regression, effective dimension}}
\begin{document}
\twocolumn[
\icmltitle{Certifying Retraining Regret through\\Residual Structure}
\begin{icmlauthorlist}
\icmlauthor{Peida Li}{pku}
\end{icmlauthorlist}
\icmlaffiliation{pku}{School of Basic Medical Sciences, Peking University}
\icmlcorrespondingauthor{Peida Li}{2410301207@stu.pku.edu.cn}
\icmlkeywords{incomplete data, certified learning, ridge regression, effective dimension}
\vskip 0.3in
]
\printAffiliationsAndNotice{}
\begin{abstract}
We certify a fixed ridge predictor's regret against full-feature retraining over every continuous completion of an interval-valued table. Our main theorem gives verifiable original-row residual conditions for a uniform error bound independent of nuisance-feature scale and rank, with an asymptotically sharp leading constant and quantitative leakage stability. Full-rank constructions separate certification dimension from feature dimension, including examples requiring directions beyond the candidate. Observed-row anchors make the conditions computable, while exact row geometry and global search return valid upper bounds and feasible original-table witnesses. A finite-scale contraction and common-domain moment fallback implement the structural result. Four real-world tasks retain all model features and certify twenty rank-three instances to width $10^{-5}$. Their primary median widths improve over centered moments. Joint residual refinement additionally certifies all twenty new response-independent masks to the same precision within 76.21 seconds. For a concrete fixed-point predictor contract, pure energy certifies thirteen of fifteen stopping decisions in ten seconds, versus five for moments. Refinement certifies all fifteen within 91.68 seconds at the same objective threshold.
\end{abstract}

\section{Introduction}
Before acquiring finer training measurements, a practitioner can ask whether any legal completion would substantially improve the objective after retraining. Completion-relative regret compares a fixed candidate with each table's retrained optimum. A small uniform upper bound supports stopping acquisition; a feasible violating table rules out robust acceptance and motivates further information or a model update.

Completion-relative learning, model envelopes and repair motivate the target~\citep{zhen2024sigmod,zhu2024zorro,repair2026,vanli2014}. The retrained comparator changes with the completed table, and continuous interior worst cases can defeat endpoint enumeration. We certify arbitrary finite intervals while exploiting residual structure in the prediction task.

The central question is structural: why should a high-dimensional model admit a small certifying subspace? Feature variance describes changing inputs; regret depends on how these changes excite residuals and interact with the retrained comparator. Our original-row residual-energy conditions permit high-rank, large nuisance features on nearly fitted rows. A full-rank construction gives each uncertain column a genuine regret effect, while one learned direction supplies a small uniform error. Leading midpoint-Gram directions can miss nearly all regret even at rank $p-1$.

Residual/coercivity certification is classical~\citep{quarteroni2011,balabanov2019}. We add original-row interval conditions, a sharp uniform bound independent of nuisance scale/rank, and global certification against full retraining (Section~8). Observed anchors construct low-rank spans for four real-world tasks; the general moment verifier covers broader coarsening.
\section{Completion-Relative Ridge Regret}
Let $\cX=\prod_i[L_i,U_i]$ be a compact product of row boxes, with observed responses $y_i$, weights $w_i\geq0$, and $\lambda>0$. For a fixed candidate $\beta\in\R^p$, define
\begin{align}
 J_X(\theta)&=\sum_iw_i(x_i^\top\theta-y_i)^2+\lambda\|\theta\|^2,\\
 h_\beta(X)&=J_X(\beta)-\min_\theta J_X(\theta),\qquad
 H^*=\max_{X\in\cX}h_\beta(X).\label{eq:target}
\end{align}
The intercept, when included, is a feature and receives the same ridge penalty in our experiments. Put
\begin{align}
 M_X&=X^\top W X+\lambda I,\qquad r_X=X^\top Wy,\nonumber\\
 g_X&=M_X\beta-r_X.
\end{align}
Then $h_\beta(X)=g_X^\top M_X^{-1}g_X$. Our output is an interval $[L,U]$ with $L\leq H^*\leq U$, accompanied by a feasible original-table witness for $L$. Our fixed-point candidate uses eight-bit signed coefficients with four fractional bits, sixteen-bit inputs with ten fractional bits, and a thirty-two-bit accumulator (Appendix~\ref{app:fixed-point-v91}). The comparator is unrestricted full-feature ridge retraining.

\begin{proposition}[Endpoints can miss all regret]\label{prop:interior}
Take three observations $((t,0),0)$, $((1,0),1)$ and $((0,1),0)$ with weights $1/3$, $t\in[-A,A]$, $A>0$, and $\lambda\geq0$. For $\beta=(1/(1+A^2+3\lambda),0)$ both endpoint regrets vanish, whereas
\[
 h_\beta(t=0)=\frac{A^4}{3(1+3\lambda)(1+A^2+3\lambda)^2}>0.
\]
\end{proposition}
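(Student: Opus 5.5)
The plan is a direct computation through the identity $h_\beta(X)=g_X^\top M_X^{-1}g_X$ from Section~2. The configuration is built so that every quantity is diagonal. I will write $b=1/(1+A^2+3\lambda)$ for the first coordinate of $\beta$ and $W=\tfrac13 I$.

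First I assemble $M_X$, $r_X$ and $g_X$ as functions of $t$. The three rows are $(t,0)$, $(1,0)$ and $(0,1)$, so the columns of $X$ are orthogonal. This gives
\[
M_X=\diag\!\Big(\tfrac{t^2+1}{3}+\lambda,\ \tfrac13+\lambda\Big),\qquad r_X=X^\top W y=\big(\tfrac13,\,0\big)^\top .
\]
Both diagonal entries are at least $1/3$, so $M_X$ is invertible even when $\lambda=0$. Hence $\min_\theta J_X(\theta)$ is attained and the identity $h_\beta=g_X^\top M_X^{-1}g_X$ holds for every $\lambda\ge0$. I will check this point explicitly, since the proposition allows $\lambda=0$.

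Because $\beta_2=0$ and $(r_X)_2=0$, the second component of $g_X$ vanishes. The first component is $g_1(t)=\big(\tfrac{t^2+1}{3}+\lambda\big)b-\tfrac13$. The regret therefore reduces to a one-dimensional expression,
\[
h_\beta(t)=\frac{g_1(t)^2}{\tfrac{t^2+1}{3}+\lambda}.
\]

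Next I evaluate at the endpoints. At $t=\pm A$ the first diagonal entry equals $(1+A^2+3\lambda)/3=1/(3b)$, so $g_1=\tfrac13-\tfrac13=0$ and both endpoint regrets vanish. This is exactly why $b$ was chosen: $\beta$ is the exact ridge fit for the endpoint tables.

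At $t=0$ the first diagonal entry is $(1+3\lambda)/3$. Then
\[
g_1(0)=\frac{1+3\lambda}{3(1+A^2+3\lambda)}-\frac13=-\frac{A^2}{3(1+A^2+3\lambda)} .
\]
Squaring and dividing by $(1+3\lambda)/3$ gives
\[
h_\beta(0)=\frac{A^4}{3(1+3\lambda)(1+A^2+3\lambda)^2},
\]
which is strictly positive because $A>0$.

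There is no real obstacle here; the only care needed is the algebra for $g_1(0)$ and the invertibility remark at $\lambda=0$. Optionally, I would note that $h_\beta(t)$ depends on $t$ only through $t^2$ and vanishes exactly at $t^2=A^2$. So the interior point $t=0$ is a genuine completion that endpoint enumeration misses, which is the intended message.
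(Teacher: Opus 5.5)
Your proposal is correct and follows essentially the paper's argument: both use the diagonal normal equations, with the paper computing the first-coordinate ridge optimum $(1+t^2+3\lambda)^{-1}$ directly and you routing the same computation through $g_X^\top M_X^{-1}g_X$. Your invertibility check at $\lambda=0$ plays the role of the paper's remark that every table has full column rank, including the unregularized case.
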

The endpoint and interior ridge optima follow from diagonal normal equations. At $A=1,\lambda=1/3$ the interior regret is $1/54$, with positive regularization and zero regret at both endpoints. The published SIGMOD Theorem~8.2 of \citet{zhen2024sigmod} explicitly assumes missing-value intervals $[-\infty,+\infty]$. The arXiv v2 Theorem~8.4 uses a sufficiently-wide-interval argument~\citep{zhen2024}. The 2026 repair formulation directly defines worst-completion suboptimality and searches edge repairs~\citep[Section~4.1]{repair2026}. Proposition~\ref{prop:interior} identifies the interior case that a complete certificate over finite intervals must cover.

\section{Residual Structure beyond Feature Rank}\label{sec:structure}
Put $u_X=M_X^{-1}g_X$ and $q_X(v)=2g_X^\top v-v^\top M_Xv$. For a linear subspace $S$, define the intrinsic uniform error and dimension
\begin{align}
 E(S)&=\sup_{X\in\cX}\min_{v\in S}\|u_X-v\|_{M_X}^2,\nonumber\\
 d_\varepsilon&=\min\{\dim S:E(S)\leq\varepsilon\}.\label{eq:intrinsic}
\end{align}
The identity $h_\beta(X)-\max_{v\in S}q_X(v)=\operatorname{dist}_{M_X}(u_X,S)^2$ bounds the global reduction gap by $E(S)$. Uniformity avoids the vacuous global-gap definition: one maximizing displacement alone spans a globally exact line. The energy and moment theorems bound $E$ for their physical spans. Spectral compression controls an affine span; including its offset yields linear rank at most $r+1$. Appendix~\ref{app:dimension} proves these links to classical widths~\citep{binev2011}.

We give sufficient conditions directly in the original table. Let $Q$ have orthonormal columns, $P$ be an orthonormal complement, and $\beta=Q\beta_S$. Partition rows into signal rows $\mathcal S$ and nuisance-rich rows $\mathcal N$. For every legal completion require $P^\top x_i=0$ on $\mathcal S$. Define
\begin{align*}
 A_X&=W_{\mathcal N}^{1/2}X_{\mathcal N}Q,&
 Z_X&=W_{\mathcal N}^{1/2}X_{\mathcal N}P,\\
 e_X&=W_{\mathcal N}^{1/2}(X_{\mathcal N}\beta-y_{\mathcal N}).
\end{align*}
Suppose uniformly $\|e_X\|\leq\epsilon$, $\|A_X\|_{\op}\leq a$, and
\begin{align*}
 \lambda I+X_{\mathcal S,Q}^\top W_{\mathcal S}X_{\mathcal S,Q}&\succeq\kappa I,\\
 \|\lambda\beta_S+X_{\mathcal S,Q}^\top W_{\mathcal S}
       (X_{\mathcal S}\beta-y_{\mathcal S})\|&\leq G,
\end{align*}
where $X_{\mathcal S,Q}=X_{\mathcal S}Q$ and $\kappa>0$. These conditions bound residual energy and retained signal on nuisance-rich rows, with no restriction on the size or rank of $Z_X$.

\begin{theorem}[Residual-energy certificate]\label{thm:energy}
With positive ridge, fixed nonnegative weights and a compact nonempty completion domain, put $r=\dim\operatorname{range}(Q)$, $R=(G+a\epsilon)/\kappa$ and $\eta=(\epsilon+aR)^2$. Then
\[
 E(\operatorname{range}(Q))\le\eta,\qquad d_\eta\le r.
\]
Consequently, for
\[
 V_Q=\max_{\|s\|\leq R}\max_X
 \{2g_X^\top Qs-s^\top Q^\top M_XQs\},
\]
the global enclosure is
\begin{equation}
 V_Q\leq H^*\leq V_Q+\eta.\label{eq:energy}
\end{equation}
The same enclosure holds on the containing cube $[-R,R]^r$.
\end{theorem}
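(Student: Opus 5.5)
The plan is to reduce everything to one per-table estimate. Expanding $J_X(\beta-v)=J_X(\beta)-2g_X^\top v+v^\top M_Xv$ gives $q_X(v)=J_X(\beta)-J_X(\beta-v)$. Combined with the identity $h_\beta(X)-q_X(v)=\|u_X-v\|_{M_X}^2$ from Section~\ref{sec:structure}, this yields
\[
\min_{v\in\operatorname{range}(Q)}\|u_X-v\|_{M_X}^2\le J_X(Qt)-\min_\theta J_X(\theta)\qquad\text{for every } t\in\R^r ,
\]
with $v=Q(\beta_S-t)$. So for each legal $X$ I will exhibit one $t$ that makes the right side at most $\eta$ and also satisfies $\|\beta_S-t\|\le R$. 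The second property is what feeds the enclosure.

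\textbf{Splitting the objective.} Write $\theta=Qt+Pz$. Since $P^\top x_i=0$ on $\mathcal S$, the signal rows see only $t$. Hence
\[
J_X(\theta)=J_{\mathcal S}(t)+\|A_Xt+Z_Xz-\tilde y\|^2+\lambda\|z\|^2 ,
\]
where $\tilde y=W_{\mathcal N}^{1/2}y_{\mathcal N}$ and $J_{\mathcal S}(t)=\sum_{i\in\mathcal S}w_i(x_i^\top Qt-y_i)^2+\lambda\|t\|^2$. I will minimize over $z$ first. With $K=\lambda(Z_XZ_X^\top+\lambda I)^{-1}$, which satisfies $0\preceq K\preceq I$, the partial minimum is $\phi(t)=(A_Xt-\tilde y)^\top K(A_Xt-\tilde y)$. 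Let $t^*$ minimize $F=J_{\mathcal S}+\phi$, so that $\min_\theta J_X=F(t^*)$. I choose $t=t^*$. Then
\[
J_X(Qt^*)-F(t^*)=\|A_Xt^*-\tilde y\|^2-\phi(t^*)\le\|A_Xt^*-\tilde y\|^2 .
\]
Because $\beta=Q\beta_S$, we have $A_X\beta_S-\tilde y=e_X$. Therefore $\|A_Xt^*-\tilde y\|\le\epsilon+a\|t^*-\beta_S\|$.

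\textbf{Bounding $\|t^*-\beta_S\|$ (the main step).} The Hessian of $J_{\mathcal S}$ is $2(\lambda I+X_{\mathcal S,Q}^\top W_{\mathcal S}X_{\mathcal S,Q})\succeq2\kappa I$, and $\phi$ is convex. So $F$ is $2\kappa$-strongly convex, which gives $\|t^*-\beta_S\|\le\|\nabla F(\beta_S)\|/(2\kappa)$. At $\beta_S$, the gradient of $J_{\mathcal S}$ is twice the vector bounded by $G$. The gradient of $\phi$ is $2A_X^\top Ke_X$, whose norm is at most $2a\epsilon$ since $\|K\|_{\op}\le1$. Hence $\|t^*-\beta_S\|\le R$, and the gap is at most $(\epsilon+aR)^2=\eta$. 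Nothing here depends on the size or rank of $Z_X$, because $Z_X$ enters only through $K\preceq I$. Taking the supremum over $X$ gives $E(\operatorname{range}(Q))\le\eta$, and then $d_\eta\le r$ follows from the definition~\eqref{eq:intrinsic}. The delicate point is to work with the $z$-eliminated objective $F$, rather than comparing directly with $J_X(\beta)$. Only in the eliminated form does the nuisance block contribute the clean gradient bound $a\epsilon$ instead of a bound that scales with $Z_X$.

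\textbf{Enclosure.} For the lower bound, $q_X(v)\le h_\beta(X)$ holds for every $v$, so maximizing over $X$ and over $v=Qs$ with $s$ in the ball (or in the cube) gives $V_Q\le H^*$. For the upper bound, fix any $X$ and set $s_X=\beta_S-t^*$, so that $\|s_X\|\le R$. Then
\[
h_\beta(X)\le q_X(Qs_X)+\eta\le V_Q+\eta ,
\]
and maximizing over $X$ yields~\eqref{eq:energy}. Compactness of $\cX$ and continuity of $q$ ensure that the maxima defining $V_Q$ and $H^*$ are attained. The cube $[-R,R]^r$ contains the ball, so the corresponding cube quantity is still at least $V_Q$ and at most $H^*$. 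The same two-sided argument therefore applies there, with the cube maximum in place of $V_Q$.
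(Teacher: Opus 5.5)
Your proposal is correct and is essentially the paper's argument, written in parameter coordinates $\theta=Qt+Pz$ rather than displacement coordinates: your $t^*$ is the retained component of the full optimizer, so $\beta_S-t^*$ is the paper's $s_X$, and your strong-convexity estimate at $\beta_S$ is exactly the retained Schur normal equation with curvature at least $\kappa$ and right-hand norm at most $G+a\epsilon$. Your gap $(A_Xt^*-\tilde y)^\top(I-K)(A_Xt^*-\tilde y)$ is the paper's eliminated gain $(e_X-A_Xs_X)^\top Z_X(\lambda I+Z_X^\top Z_X)^{-1}Z_X^\top(e_X-A_Xs_X)$, and your lower bound and ball-to-cube step match the paper's as well.
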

\begin{proof}
At fixed $X,s$, eliminating the discarded coordinate adds
\[
 (e-As)^\top Z(\lambda I+Z^\top Z)^{-1}Z^\top(e-As)
 \le\|e-As\|^2.
\]
The retained Schur normal equation has curvature at least $\kappa$ and right-hand norm at most $G+a\epsilon$, so every full optimizer projects into the radius-$R$ ball. Apply the gain bound at that point and the pointwise error identity to obtain the uniform error. The dimension and global enclosure follow. Appendix~\ref{app:dimension} gives the full pointwise derivation and containing-box argument.
\end{proof}

Theorem~\ref{thm:energy} exploits the actual discarded curvature. A scalar-curvature residual estimate would bound its eliminated gain by $\|Z^\top(e-As)\|^2/\lambda$, which can grow without bound as discarded features are scaled. The contraction in the proof instead bounds it by $\|e-As\|^2$, uniformly in the size and rank of $Z$. Strong retained curvature also supplies the smaller radius $R$. These original-row conditions yield the dimension separation below; the leakage extension below preserves this mechanism nearby, and the general moment bound covers arbitrary tables.

\subsection{Observed Anchors and Constructive Discovery}\label{sec:anchored-energy-v84}
Let $T\in\R^{p\times p}$ be an invertible, numerically stored coordinate transformation: a parameter displacement $z$ represents the physical displacement $Tz$. Complete observations supply an aggregate quadratic anchor. Put $B=\lambda I+X_{\mathcal A}^\top W_{\mathcal A}X_{\mathcal A}$, $r_A=X_{\mathcal A}^\top W_{\mathcal A}y_{\mathcal A}$, $K=T^\top BT$, and $h=T^\top(B\beta-r_A)$. Verify $\mu I\preceq K\preceq\nu I$ for the actual $T$. For $z_0\in\operatorname{range}(Q)$ set $h_c=h-Kz_0$ and $e_c=W^{1/2}[X(\beta-Tz_0)-y]$.

\begin{corollary}[Anchored residual energy]\label{cor:anchor-energy-v84}
Partition the remaining rows into $\mathcal S,\mathcal N$ and require $X_{\mathcal S}TP=0$. Uniformly bound $\|W_{\mathcal N}^{1/2}X_{\mathcal N}TQ\|\le a$, $\|(e_c)_{\mathcal N}\|\le\epsilon$, $\|P^\top h_c\|\le d$, and
\[
 \|Q^\top h_c+(W_{\mathcal S}^{1/2}X_{\mathcal S}TQ)^\top(e_c)_{\mathcal S}\|\le G.
\]
With $c=(\nu-\mu)/2$, put
\begin{align}
 R_A&=\frac{G+a\epsilon+(c/\mu+a/(2\sqrt\mu))d+c\epsilon/(2\sqrt\mu)}{\mu},\nonumber\\
 \eta_{\rm anch}&=[\epsilon+aR_A+(d+cR_A)/\sqrt\mu]^2.\label{eq:anchor-energy-v84}
\end{align}
Then $E(\operatorname{range}(TQ))\le\eta_{\rm anch}$ and $d_{\eta_{\rm anch}}\le\dim Q$. The corresponding original retained optimum gives $V_Q\le H^*\le V_Q+\eta_{\rm anch}$; every full optimizer's retained component lies within $R_A$ of $z_0$.
\end{corollary}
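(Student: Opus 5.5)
We work in the transformed coordinates $\theta=\beta-Tz$ and repeat the proof of Theorem~\ref{thm:energy}, now carrying the anchor rows as an exactly known quadratic rather than as rows. In $z$ the full objective is $J_X(\beta-Tz)$. The anchor rows contribute the fixed quadratic $-2h^\top z+z^\top Kz$ to $J_X(\beta)-J_X(\beta-Tz)$. The remaining rows contribute the usual residual terms with design $W^{1/2}XT$. Shifting $z=z_0+\zeta$ replaces $h$ by $h_c$ and the residual by $e_c$. We split $\zeta=Qs+Pt$. The hypothesis $X_{\mathcal S}TP=0$ makes the signal rows see only $s$. The nuisance rows see $A s+Zt$, with $A=W_{\mathcal N}^{1/2}X_{\mathcal N}TQ$ and $Z=W_{\mathcal N}^{1/2}X_{\mathcal N}TP$ unrestricted in size and rank. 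The new feature is that $K$ couples $s$ and $t$ through $Q^\top KP$, and $P^\top KP$ replaces the pure $\lambda I$ curvature on $t$.

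\textbf{Step 1 (radius).} Write $K=\bar k I+\Delta$ with $\bar k=(\mu+\nu)/2$ and $\|\Delta\|_{\op}\le c$. The optimality conditions for the full optimizer split into a $t$-equation and an $s$-equation. The $t$-equation is $(P^\top KP+Z^\top Z)t=P^\top h_c+Z^\top(e-As)-P^\top\Delta Qs$. From it we bound the quantities that feed the $s$-equation. The key bound is $\|Zt\|\le\|e-As\|+(d+c\|s\|)/(2\sqrt\mu)$. This comes from the contraction $Z(\mu I+Z^\top Z)^{-1}Z^\top\preceq I$ together with $\|Z(\mu I+Z^\top Z)^{-1}\|\le 1/(2\sqrt\mu)$. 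We also get $\|t\|\le(d+c\|s\|+\cdots)/\mu$. Substituting into the $s$-equation gives curvature at least $\mu$, since $Q^\top KQ\succeq\mu I$ and the signal/nuisance Gram terms are PSD. The right-hand side is at most $G+a\epsilon+a\,\|Zt\|$-type terms plus $c\|t\|$. Solving the resulting linear inequality in $\|s\|$ should give exactly $\|s\|\le R_A$.

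The delicate part is the bookkeeping. The cross terms $c/\mu\cdot d$, $a d/(2\sqrt\mu)$ and $c\epsilon/(2\sqrt\mu)$ must appear linearly and without an $\|s\|$-dependent coefficient on the right. I expect this to require grouping the $Q^\top\Delta P$ contribution with the $t$-solve rather than bounding it crudely. If a term $c^2/\mu$ multiplying $\|s\|$ appears, I would absorb it by using the Schur complement of the full $K$ block, which is still at least $\mu$, instead of $Q^\top KQ$. This is the main obstacle.

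\textbf{Step 2 (uniform error).} Fix $X$ and take the optimizer's $s^\star$, which satisfies $\|s^\star\|\le R_A$. Compare the best retained gain with the full regret. By the pointwise error identity of Section~\ref{sec:structure}, $\operatorname{dist}_{M_X}(u_X,\operatorname{range}(TQ))^2$ is at most the extra gain from optimizing $t$ at $s=s^\star$. With $M_X$-norm measured in physical coordinates, this extra gain is the $t$-eliminated quadratic. That quadratic is bounded by the squared norm of its "right-hand side" in the metric $(P^\top KP+Z^\top Z)^{-1}$. Splitting that vector into the nuisance-row part, of norm at most $\epsilon+aR_A$ after contraction, and the anchor part, $P^\top h_c-P^\top\Delta Qs^\star$, with norm at most $(d+cR_A)$ measured against curvature $\mu$, a triangle inequality in the inverse metric gives the root bound $\epsilon+aR_A+(d+cR_A)/\sqrt\mu$. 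Squaring it yields $\eta_{\rm anch}$.

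\textbf{Step 3 (consequences).} Here $d_{\eta_{\rm anch}}\le\dim Q$ is immediate from the definition of $d_\varepsilon$. The enclosure $V_Q\le H^*\le V_Q+\eta_{\rm anch}$ follows as in Theorem~\ref{thm:energy}. The lower bound holds because every retained gain is a feasible $q_X$ value. For the upper bound, maximizing the retained gain over the $R_A$-ball around $z_0$ loses nothing, since the optimal retained point lies in it by Step 1, and Step 2 bounds the remaining gap. The statement about full optimizers' retained components is exactly Step 1.
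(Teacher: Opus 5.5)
\paragraph{Assessment.} Your Steps 2 and 3 coincide with the paper's argument. Step 1, however, does not produce $R_A$ as you plan it.

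\paragraph{Why Step 1 fails as planned.} Write $D=K_{PP}+Z^\top Z$, $K_{QP}=Q^\top KP$, $e_N=(e_c)_{\mathcal N}$, $S=W_{\mathcal S}^{1/2}X_{\mathcal S}TQ$ and $g_S=Q^\top h_c+S^\top(e_c)_{\mathcal S}$. Then the $s$-equation reads $(K_{QQ}+S^\top S+A^\top A)s=g_S+A^\top e_N-(K_{QP}+A^\top Z)t$. Suppose you lower-bound the left side by $\mu\|s\|$ and insert your bounds $\|Zt\|\le\|e_N-As\|+(d+c\|s\|)/(2\sqrt\mu)$ and $\|t\|\le(\epsilon+a\|s\|)/(2\sqrt\mu)+(d+c\|s\|)/\mu$ on the right. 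You then obtain
\[
 \mu\|s\|\le G+2a\epsilon+\Bigl(\frac c\mu+\frac a{2\sqrt\mu}\Bigr)d+\frac{c\epsilon}{2\sqrt\mu}
 +\Bigl(a+\frac c{\sqrt\mu}\Bigr)^2\|s\|.
\]
This bound has two defects.
\begin{itemize}
\item The term $a\epsilon$ is doubled.
\item The $\|s\|$-feedback coefficient is $(a+c/\sqrt\mu)^2$. So at best you get $\|s\|\le N'/(\mu-(a+c/\sqrt\mu)^2)$, with $N'$ the constant term above. Once $a^2\ge\mu$ you get nothing at all, and that is exactly the large-coupling regime the corollary is meant to allow.
\end{itemize}
The feedback is not only the $c^2/\mu$ piece you anticipated. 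Its $a^2$ part comes from $A^\top ZD^{-1}Z^\top As$ inside $A^\top Zt$. A Schur complement of $K$ alone cannot remove it.

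\paragraph{The missing step: exact elimination.} Eliminate $t$ exactly before taking any norms. Substitute $t=D^{-1}[(h_c)_P+Z^\top(e_N-As)-K_{PQ}s]$. This moves all $s$-dependence to the left, where the matrix is
\[
\Sigma=K_{QQ}+S^\top S+A^\top A-(K_{QP}+A^\top Z)D^{-1}(K_{PQ}+Z^\top A).
\]
This $\Sigma$ is the Schur complement of the whole transformed Hessian $H=[Q\ P]^\top(K+V^\top V)[Q\ P]\succeq\mu I$, where $V$ stacks the weighted transformed non-anchor rows. Your justification (``$Q^\top KQ\succeq\mu I$ and the Gram terms are PSD'') covers only the uneliminated block. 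For $\Sigma$, use the variational identity $u^\top\Sigma u=\min_v w^\top Hw$ with $w=(u,v)$, which is at least $\mu(\|u\|^2+\|v\|^2)\ge\mu\|u\|^2$. The right-hand side is then free of $s$:
\[
 g_S+A^\top(I-ZD^{-1}Z^\top)e_N-K_{QP}D^{-1}Z^\top e_N-(K_{QP}+A^\top Z)D^{-1}(h_c)_P.
\]
Bound it termwise using $\|I-ZD^{-1}Z^\top\|\le1$, $\|ZD^{-1}\|\le1/(2\sqrt\mu)$, $\|D^{-1}\|\le1/\mu$ and $\|K_{QP}\|\le c$. This gives exactly the numerator of $R_A$. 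The single $a\epsilon$ appears because $A^\top e_N$ and $-A^\top ZD^{-1}Z^\top e_N$ are combined before norms are taken.

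\paragraph{A smaller point.} The bound $\|ZD^{-1}\|\le1/(2\sqrt\mu)$ does not follow by monotonicity from the scalar case $\mu I+Z^\top Z$. Instead use $Z^\top Z\preceq D-\mu I$, which gives $D^{-1}Z^\top ZD^{-1}\preceq D^{-1}-\mu D^{-2}\preceq I/(4\mu)$.

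With Step 1 replaced in this way, your Steps 2 and 3 go through as written.
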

The proof uses $ZD^{-1}Z^\top\preceq I$ and $\|ZD^{-1}\|\le1/(2\sqrt\mu)$ for $D=K_{PP}+Z^\top Z$. The original candidate and full retrained comparator are preserved.

Write $A=W_{\mathcal N}^{1/2}X_{\mathcal N}TQ$, $Z=W_{\mathcal N}^{1/2}X_{\mathcal N}TP$, and $e_N=(e_c)_{\mathcal N}$. Verify $\|Z\|^2\le\rho$ and $\|A^\top e_N\|\le b$. A verified radius $R\le R_A$ for both optima's retained components gives
\begin{align}
 \gamma^2&=\rho/(\mu+\rho),\nonumber\\
 E_N(R)&=\min\{(\epsilon+aR)^2,\epsilon^2+2bR+a^2R^2\},\nonumber\\
 \eta_C&=\left[\gamma\sqrt{E_N(R)}+(d+cR)/\sqrt\mu\right]^2.\label{eq:contraction-main-v89}
\end{align}
It gives $V_Q\le H^*\le V_Q+\eta_C$. The mixed moment also tightens $R$; Appendix~\ref{app:contraction-v88} gives its formula and proof. The choice $\gamma=1$, $R=R_A$ and the first residual envelope recovers~\eqref{eq:anchor-energy-v84}.

\paragraph{Common-domain adaptation.}
Let $R_c$ be the verified containing coordinate radius around the same retained reference. Aggregate discarded reference gradients and cross blocks give $G_P,C_R$ as in Theorem~\ref{thm:moment}. On this common domain use
\begin{equation}
 \eta_{\rm adapt}=\min\{\eta_C,(G_P+C_RR_c)^2/\mu\}.\label{eq:adaptive-v90}
\end{equation}
Both bounds enclose the same full-comparator problem. The single search keeps their smaller tail and pays for both verifications. Since $R_c$ is no larger than the independently centered moment radius, its sufficient tail is no larger than that moment tail. Appendix~\ref{app:contraction-v88} gives the containing-domain proof.

\paragraph{A constructive rank bound.}
Let $\theta_A=B^{-1}r_A$. Include the transformed centers and interval generators of the signal rows, and $T^{-1}(\beta-\theta_A)$, in the retained span. Taking $z_0=T^{-1}(\beta-\theta_A)$ makes $h_c=0$ and satisfies signal support. If signal row $i$ has $k_i$ unknowns, the required rank is at most $1+\sum_{i\in\mathcal S}(1+k_i)$, irrespective of nuisance rank and feature count. One signal row with one unknown therefore proposes at most three directions. Appendix~\ref{app:anchor-energy-v84} proves the construction and verifies its actual rounded implementation.

\paragraph{Support and structural rank.}
Let $J_{\mathcal S}=\{j:\exists i\in\mathcal S,\ U_{ij}>L_{ij}\}$. Under an invertible transformation $T$, exact support $X_{\mathcal S}TP=0$ for every completion implies
\[
 T^\top e_j\in\operatorname{range}(Q)\quad(j\in J_{\mathcal S}),
 \qquad r\geq |J_{\mathcal S}|.
\]
Indeed, subtract two signal rows differing only in coordinate $j$; invertibility makes the resulting generators independent. This constraint concerns uncertain signal columns. Nuisance-rich rows can have independent uncertainty in every feature. Approximate signal support is covered by the leakage extension.

\paragraph{Support leakage.} A quantitative extension allows nonzero discarded signal components, with constants converging to Theorem~\ref{thm:energy} as leakage vanishes. Appendix~\ref{app:leakage-v79} gives its original-row assumptions, radius and candidate-component correction.

\subsection{Full Rank and Many Uncertain Columns}
For $m=4^k$, take $p=m+1$, $n=2m$, weights $1/(2m)$, $\lambda=1/2$ and candidate $\beta=e_0$. Every scalar below varies independently:
\begin{align}
 x_i&=u_ie_0,&u_i&\in[3/4,5/4],& y_i&=2,\nonumber\\
 x_{m+j}&=\alpha e_0+t_je_j,&t_j&\in[\sqrt m,2\sqrt m],&y_{m+j}&=\alpha,
 \label{eq:family}
\end{align}
for $i,j=1,\ldots,m$ and dyadic $\alpha>0$. Every table has full column rank, every feature column contains a nondegenerate uncertain interval, and $\beta$ is exactly the full-feature midpoint ridge fit. Changing any nuisance $t_j$ changes regret whenever the signal gradient is nonzero. The nuisance relative interval widths and weighted feature energies stay bounded below as $m$ grows; total row energy grows with dimension.

The original and one-dimensional optima are exactly
\begin{equation}
 H^*=\frac5{4000+512\alpha^2},\qquad
 V_{e_0}=\frac1{800+512\alpha^2}.\label{eq:family_optima}
\end{equation}
Theorem~\ref{thm:energy} has $\epsilon=0$, $a=\alpha/\sqrt2$, $G=1/32$, $\kappa=25/32$ and $R=1/25$, giving error $\alpha^2/1250$, independently of $p$. At $\alpha=1/32$ this is $7.8125\times10^{-7}$, while the exact omitted regret is about $6.2453\times10^{-7}$. \begin{corollary}[Sharp leading energy constant]\label{cor:energy-sharp-v81}
Scale only the nuisance intervals in~\eqref{eq:family} by $s>0$ and put $\Delta_s=H_s^*-V_{e_0}$. For every $\alpha>0$,
\[
 \lim_{s\to\infty}\frac{\Delta_s}{\alpha^2/1250}
   =\frac{800}{800+512\alpha^2}\longrightarrow1
 \quad(\alpha\downarrow0).
\]
Thus the coefficient one in~\eqref{eq:energy} cannot be uniformly reduced over its assumption class, already at $\epsilon=0$ and retained rank one.
\end{corollary}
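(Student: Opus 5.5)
The plan is to compute $H_s^*$ exactly through the block structure of \eqref{eq:family}, pass to the limit $s\to\infty$, and compare with $V_{e_0}$.

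\textbf{Step 1: $V_{e_0}$ does not depend on $s$.} The $e_0$-block of $M_X$ and the $e_0$-component of $g_X$ involve only the $u_i$ and $\alpha$, not the $t_j$. Hence $V_{e_0}$ is unchanged by scaling and keeps its value from \eqref{eq:family_optima}.

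\textbf{Step 2: eliminate the nuisance coordinates exactly.} Follow the Schur elimination in the proof of Theorem~\ref{thm:energy}. On nuisance rows $e_X=0$, since $\alpha\cdot1-\alpha=0$. The matrix $Z$ is diagonal with entries $st_j/\sqrt{2m}$, and $A$ has entries $\alpha/\sqrt{2m}$. Maximizing over the discarded coordinates at fixed retained scalar $\sigma$ therefore adds
\[
 \sigma^2\frac{\alpha^2}{2}\varphi_s,\qquad
 \varphi_s=\frac1m\sum_j\frac{\tau_j}{\lambda+\tau_j},\qquad
 \tau_j=\frac{s^2t_j^2}{2m}.
\]
Write $\bar u=\frac1m\sum_iu_i$ and $\overline{u^2}=\frac1m\sum_iu_i^2$. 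With weights $1/(2m)$ and $\lambda=1/2$,
\[
 M_{00}=\tfrac12(\overline{u^2}+\alpha^2+1),\qquad
 g_0=\tfrac12(\overline{u^2}+1)-\bar u .
\]
Maximizing the resulting concave quadratic in $\sigma$ gives
\[
 h_\beta(X)=\frac{g_0^2}{M_{00}-\tfrac{\alpha^2}{2}\varphi_s},
\]
to be compared with $g_0^2/M_{00}$ for the rank-one problem.

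\textbf{Step 3: pass to the limit.} Since $t_j\ge\sqrt m$, we have $\tau_j\ge s^2/2$, so $\varphi_s\to1$ uniformly over the compact domain. The denominator stays bounded below by $\tfrac12(\overline{u^2}+1)>0$. Hence $h_\beta\to g_0^2\big/\bigl(\tfrac12(\overline{u^2}+1)\bigr)$ uniformly, and the maxima converge:
\[
 H_\infty^*=\max_u\frac{2g_0^2}{\overline{u^2}+1}.
\]

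\textbf{Step 4: maximize over $u$ (the main obstacle).} This step requires controlling mixed and interior choices of the $u_i$, not only constant ones. At $u\equiv 3/4$ one gets $g_0=1/32$ and $\tfrac12(\overline{u^2}+1)=25/32$, so $H_\infty^*\ge 1/800$. To show this is the maximum, I would first reduce to the two statistics $(\bar u,\overline{u^2})$. For fixed $\bar u$, the quantity $\overline{u^2}$ ranges over an interval whose extremes are attained at endpoint-valued configurations. The objective is then a one-variable rational function of $\overline{u^2}$, which can be checked monotone or convex. This leaves a two-parameter check.

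The same reduction should reproduce the stated $V_{e_0}=1/(800+512\alpha^2)$, attained again at $u\equiv3/4$ where $M_{00}=25/32+\alpha^2/2$. Using that known value also serves as a consistency check on the maximizer.

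\textbf{Step 5: assemble the ratio.} We obtain
\[
 \lim_s\Delta_s=\frac1{800}-\frac1{800+512\alpha^2}=\frac{512\alpha^2}{800(800+512\alpha^2)}.
\]
Dividing by $\alpha^2/1250$ gives $800/(800+512\alpha^2)$, which tends to $1$ as $\alpha\downarrow0$.

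Finally, the energy bound $\alpha^2/1250$ of Theorem~\ref{thm:energy} is independent of $s$, since $\epsilon=0$ and $a,G,\kappa$ do not involve the nuisance scale. So the family lies in the assumption class for every $s$. Any coefficient $c<1$ in front of $\eta$ in \eqref{eq:energy} would therefore fail for small $\alpha$ and large $s$, which proves the sharpness claim.
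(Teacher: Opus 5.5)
Your Steps 1--3 are correct and use the same Schur elimination as the paper: $M_{00}-\tfrac{\alpha^2}{2}\varphi_s$ is exactly the paper's Schur complement $K$ with the nuisance scale inserted. The arithmetic in Step 5 is also right. The gap is Step 4, which you call the main obstacle and then leave as a plan. The upper bound $\max_u 2g_0^2/(\overline{u^2}+1)\le 1/800$ is never proved; it is only deferred to ``a two-parameter check.'' The sketched reduction is also inaccurate. For fixed $\bar u$, the minimum of $\overline{u^2}$ is attained at the constant configuration $u\equiv\bar u$, which is generally interior. The maximum generally needs one non-endpoint coordinate. So the extremes are not, in general, attained at endpoint-valued configurations.

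The missing observation is that numerator and denominator decouple, so mixed and interior configurations never need to be examined. Write $g_0=\tfrac12(\overline{u^2}+1)-\bar u=(2m)^{-1}\sum_i(u_i-1)^2$. Since $|u_i-1|\le1/4$, this gives $g_0^2\le1/1024$, with equality at every endpoint configuration. For every $s$ the denominator is
\[
 K_s=\tfrac12+\frac1{2m}\sum_iu_i^2+\frac{\alpha^2}{2m}\sum_j\frac1{1+s^2t_j^2/m}.
\]
This is a sum of monotone one-variable terms, minimized at $u_i=3/4$ and $t_j=2\sqrt m$. Both extremes occur at the same legal completion. This yields the exact finite-scale value $H_s^*=1/(800+512\alpha^2/(1+4s^2))$ of Appendix~\ref{app:scale-v80}, so the limit is immediate. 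Your uniform-convergence Step 3 is valid but then unnecessary. The same termwise bounds on $g_0$ and $M_{00}$ give $V_{e_0}$. Finally, the retained maximizer satisfies $g_0/M_{00}\le1/25=R$, so the ball-constrained $V_Q$ of Theorem~\ref{thm:energy} equals $V_{e_0}$; your closing sharpness argument uses this implicitly and should state it.
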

Schur elimination gives the limit; Appendix~\ref{app:scale-v80} supplies the exact full-rank original-row calculation.

\paragraph{Automatic direction recovery.} The uncertainty-gradient proposal defined in Appendix~\ref{app:new} recovers $e_0$ under scalar ridge-anchor preprocessing: first-order signal terms cancel, second-order terms lie in $e_0$, and nuisance generators vanish at the fitted candidate.

The endpoint-average second moment has at least $m$ eigenvalues $\ge5/4$, forcing rank-$r$ uniform ridge-whitened tails to have $\rho\ge5/2$ for $r\le m-1$. At $\alpha=1/32$, every leading midpoint-Gram PCA span of rank $1\le r\le p-1$ captures at most $9/3271682$, about 0.23\% of $H^*$. One residual-oriented direction suffices; Appendix~\ref{app:energy} proves these specific Gram-PCA and spectral comparisons.

\noindent\begin{minipage}{\columnwidth}
\begin{proposition}[Intrinsic dimension beyond one]\label{prop:intrinsic-two-v83}
For the two-active-coordinate full-rank family of Appendix~\ref{app:intrinsic-two-v83}, every line $S$ satisfies $E(S)\ge25600/1371850897$, while its active plane has error at most $17/20480000$. Hence $d_{10^{-5}}=2$.
\end{proposition}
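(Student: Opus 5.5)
The plan has two halves: an upper bound showing $E(S_2)\le 17/20480000$ for the active plane $S_2$, and a lower bound showing every line $S$ has $E(S)\ge 25600/1371850897$. Since $17/20480000<10^{-5}<25600/1371850897$, together with the trivial fact that $E(\{0\})=\sup_X h_\beta(X)=H^*>10^{-5}$, these give $d_{10^{-5}}=2$.

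For the upper bound I will put the family of Appendix~\ref{app:intrinsic-two-v83} in the form of Theorem~\ref{thm:energy}, by analogy with~\eqref{eq:family}.
\begin{itemize}
\item Take $Q=[e_0\ e_1]$ for the two active coordinates and $P$ the remaining coordinate basis.
\item Let the signal rows $\mathcal S$ be supported on the two active coordinates, and the nuisance rows $\mathcal N$ carry $\alpha$-type active components plus large nuisance entries.
\item Read off $\epsilon$ (zero if the candidate fits the nuisance rows exactly), $a=\|A_X\|_{\op}$ from the active components, and $\kappa,G$ from the $2\times 2$ signal Gram and gradient uniformly over the signal intervals.
\end{itemize}
Then $\eta=(\epsilon+aR)^2$ with $R=(G+a\epsilon)/\kappa$, and I check that this evaluates to $17/20480000$. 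If the crude $\eta$ overshoots, I will fall back to the contraction form~\eqref{eq:contraction-main-v89}, using the mixed-moment envelope $E_N(R)$. Either way Theorem~\ref{thm:energy} gives $E(S_2)\le\eta$ at once.

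The lower bound is the main obstacle, since it must hold for every line $S=\operatorname{span}(v)$, including lines outside the active plane. My strategy is to exhibit two completions $X^{(1)},X^{(2)}$ whose full-regret displacements $u_{X^{(k)}}$ point in substantially different directions, so that no single line approximates both. The steps are:
\begin{enumerate}
\item Compute $u_X=M_X^{-1}g_X$ in closed form for chosen completions. The block structure reduces this to a Schur complement on the two active coordinates, with nuisance coordinates eliminated exactly. The natural choices are completions pushing the signal interval of coordinate 1 to one endpoint while coordinate 2 sits at an interior or opposite point, and vice versa.
\item Use the pointwise error identity $\operatorname{dist}_{M_X}(u_X,\operatorname{span} v)^2=\|u_X\|_{M_X}^2-(v^\top M_Xu_X)^2/(v^\top M_Xv)$.
\item Lower bound $\max_k \operatorname{dist}_{M_{X^{(k)}}}(u_{X^{(k)}},\operatorname{span} v)^2$ over all $v$.
\end{enumerate}

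For step 3 I first decompose $v$ into active-plane and nuisance parts. I will argue that any nuisance component only increases the $M_X$-norm penalty, because the nuisance blocks are large and nearly decoupled. Bounding this carefully, possibly using $ZD^{-1}Z^\top\preceq I$ as in Corollary~\ref{cor:anchor-energy-v84}, reduces the problem to $v$ in the plane. There it becomes a one-parameter minimax over the angle of $v$. The worst angle equalizes the two errors, and solving that rational equation should yield exactly $25600/1371850897$.

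If two completions are insufficient, I will add a third, such as the midpoint completion. In any case the witnesses are explicit feasible tables, so the bound is certified by exact rational arithmetic.
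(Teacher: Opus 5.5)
Your upper bound is the paper's. Apply Theorem~\ref{thm:energy} to $\operatorname{span}(e_0,e_1)$ with $\epsilon=0$, $\kappa=25/64$, $G^2=17/65536$ and $a^2=1/2048$ (Frobenius). This gives $R^2=17/10000$ and $\eta=a^2R^2=17/20480000$. Your operator-norm $a^2=1/4096$ only halves this, and no contraction fallback is needed.

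\textbf{The gap is in step 3 of the lower bound.} The claim that nuisance components of $v$ ``only increase the penalty'' is false here. The coupling $M_{0\ell}=\alpha t/n=1/256\neq0$ gives $u^{(0)}=M^{-1}g$ nonzero leaf entries, each equal to $-s_0/320$. So $u^{(0)}$ is not in the active plane. The line $\operatorname{span}(u^{(0)})$ has zero error at $X^{(0)}$, whereas every in-plane line has positive error there. Minimizing only over in-plane $v$ therefore gives an \emph{upper} bound on the two-completion minimax over all lines, not the lower bound you need.

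Relatedly, equalizing the two exact $M_X$-errors over an angle gives an equation quadratic in $\tan^2\phi$, and its value is not $25600/1371850897$. That constant is $\lambda s_0^2s_1^2/(s_0^2+s_1^2)$ with $\lambda=1/4$, $s_0=320/8001$ and $s_1=80/9041$. It comes from a Euclidean relaxation, not from an exact computation.

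\textbf{The missing idea is orthogonality plus a Bessel-type inequality.} Choose the completions as follows:
\begin{itemize}
\item put block $j$'s signal rows at their lower endpoints;
\item put the other block's signal rows exactly at their midpoints;
\item set every nuisance entry to $8$.
\end{itemize}
The midpoint is essential: the gradient coordinate $\frac1{64}\sum_i(u_{j,i}-1)^2$ vanishes only there. Your ``opposite endpoint'' option would put gradient in both blocks.

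With this choice, $M_{X^{(j)}}$ is block diagonal between the two feature blocks and $g_{X^{(j)}}$ is supported in block $j$. Hence $u^{(0)}$ and $u^{(1)}$ have disjoint supports and are Euclidean-orthogonal. For any unit $q\in\R^{34}$, $(q^\top\hat u^{(0)})^2+(q^\top\hat u^{(1)})^2\le1$. This forces $\max_j\operatorname{dist}_2(u^{(j)},\operatorname{span}q)^2\ge AB/(A+B)$, where $A=\|u^{(0)}\|^2\ge s_0^2$ and $B=\|u^{(1)}\|^2\ge s_1^2$. The bound holds for every line, with no reduction to the plane. Then $M_X\succeq\lambda I$ and monotonicity of $AB/(A+B)$ give the stated constant.

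Alternatively, you can repair your own reduction by eliminating the nuisance part of the \emph{difference} rather than of $v$. This gives $\min_c\|u-cv\|_{M}^2\ge\min_c(u_A-cv_A)^\top K(u_A-cv_A)$, with $K\succeq\lambda I$ the active Schur complement and $u_A=s_je_j$. That reduces to the Euclidean in-plane bound and yields the same constant.

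A third completion at the full midpoint would not help, since there $g_X=0$ and so $u_X=0$.
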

Two legal completions have orthogonal full-optimizer displacements, giving the lower bound for every line. The energy theorem supplies the plane upper bound; the appendix gives the exact original-row proof.
\end{minipage}

\begin{figure*}[t]\centering
\includegraphics[width=\textwidth]{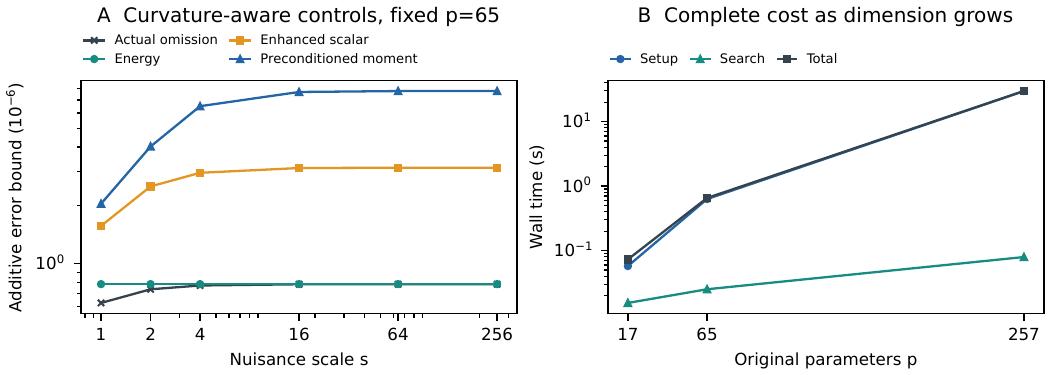}
\caption{The structural gain and its complete cost. (A) At fixed $p=65$, energy, enhanced scalar curvature and an actual dyadically preconditioned moment bound all remain bounded under nuisance scaling; energy has the smallest bound. These additive tails precede independent original-objective caps. (B) Single complete-pipeline timings at $\alpha=1/32$ and retained rank one. Setup includes input, anchor and method verification; their sum with search is the total. The ten-second search cap excludes setup; at $p=257$, setup is $29.74$ seconds and search $0.079$ seconds.}
\label{fig:structure-main-v82}\end{figure*}

\section{Exact Row Geometry}
For $v=\beta-\theta$, expose the comparator:
\begin{align}
 H^*&=\max_v F(v),\label{eq:dual}\\
 F(v)&=\max_{X\in\cX}\{2g_X^\top v-v^\top M_Xv\}\nonumber\\
 &=\lambda(2\beta^\top v-\|v\|^2)
   +\sum_iw_i\max_{x\in[L_i,U_i]}q_i(x;v),\nonumber\\
 q_i(x;v)&=(x^\top\beta-y_i)^2-(x^\top(\beta-v)-y_i)^2.\nonumber
\end{align}
The maxima share the same pairs $(X,v)$; positive ridge and bounded data give a bounded optimizing region.
\begin{theorem}[Row-face extremizers]\label{thm:row}
There exists a maximizer of~\eqref{eq:target} in which each row has at most one nondegenerate interval coordinate strictly between its endpoints.
\end{theorem}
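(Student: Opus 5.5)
Our plan is to pass to the joint formulation~\eqref{eq:dual} and to exploit the row-separable structure of $F(v)$. First we check that a maximizer of~\eqref{eq:target} exists. The map $X\mapsto h_\beta(X)=g_X^\top M_X^{-1}g_X$ is continuous on the compact set $\cX$ because $M_X\succeq\lambda I$ with $\lambda>0$. We then take a maximizing pair $(X^\star,v^\star)$ with $v^\star=u_{X^\star}=\beta-\theta_{X^\star}$. Since $h_\beta(X)=\max_v q_X(v)$ and the two maxima agree, $X^\star$ maximizes $X\mapsto 2g_X^\top v^\star-v^{\star\top}M_Xv^\star$ over $\cX$. By the decomposition displayed after~\eqref{eq:dual}, this objective splits into independent row problems. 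Each row $x_i^\star$ therefore maximizes $q_i(\cdot;v^\star)$ over the box $[L_i,U_i]$. Conversely, any $X'$ whose rows each maximize $q_i(\cdot;v^\star)$ satisfies $h_\beta(X')\ge q_{X'}(v^\star)=F(v^\star)=H^*$, so $X'$ is again a maximizer of~\eqref{eq:target}. It therefore suffices to show the following: for fixed $v$, each row box problem has a maximizer with at most one nondegenerate coordinate strictly interior.

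The second step is to expand the row objective. With $a=x^\top\beta-y_i$ and $b=x^\top v$, we get $q_i(x;v)=a^2-(a-b)^2=2ab-b^2$. This is a quadratic in $x$ with Hessian $2(\beta v^\top+v\beta^\top)-2vv^\top$, which has rank at most two. More usefully, $q_i$ depends on $x$ only through the two linear forms $\ell_1(x)=x^\top\beta$ and $\ell_2(x)=x^\top v$. The problem is therefore $\max\phi(\ell_1(x),\ell_2(x))$ over a box, where $\phi(s,t)=2(s-y_i)t-t^2$.

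Take any maximizer $x^\star$ of the row problem and consider the fibre $\{x\in[L_i,U_i]:\ell_1(x)=\ell_1(x^\star),\ \ell_2(x)=\ell_2(x^\star)\}$. This is a polytope on which $q_i$ is constant, so every point of it is again a maximizer. A vertex of this polytope lies in a face of the box cut by two linear equations, so at most two free box coordinates are strictly interior. This already gives the weaker bound of two, and the main obstacle is reducing it to one. For that we consider the two-dimensional image $\Pi=\{(\ell_1(x),\ell_2(x))\}$, a zonotope, and study how $\phi$ is maximized over $\Pi$. Along the $s$-direction $\phi$ is linear, and along $t$ it is concave. If the maximum of $\phi$ over $\Pi$ is attained at a vertex of $\Pi$, the preimage can be taken to be a vertex of the box, with zero interior coordinates. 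Otherwise it is attained on an edge of $\Pi$ or in its interior. An edge of a zonotope is generated by the parallel generators $(\beta_j,v_j)(U_{ij}-L_{ij})$. Points on such an edge can be realised by moving a single coordinate, after grouping parallel generators and sliding them sequentially so that all but one sit at endpoints. In the interior case, $\partial\phi/\partial s=2t=0$ and $\partial\phi/\partial t=0$ would force a degenerate stationary point. There $\phi$ is constant along $s$, so we can slide $s$ to the boundary of $\Pi$ without loss, which reduces this case to the edge case.

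The delicate part is this last dichotomy. We must check that linearity in $s$ combined with concavity in $t$ always permits moving to $\partial\Pi$. We also need to handle the case where $(\beta_j,v_j)=0$, where the coordinate is irrelevant and can be set to an endpoint. Coordinates with $L_{ij}=U_{ij}$ are excluded from the count by definition.
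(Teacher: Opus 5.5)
Your proof is correct, but it takes a different route from the paper's. The paper never passes to the planar image. For a globally maximizing pair $(X,\theta)$ it fixes only the comparator residual $x^\top\theta-y_i$ in one row. It then notes that the remaining term $(x^\top\beta-y_i)^2$ is convex in $x$, so it is maximized at an extreme point of the box cut by that single hyperplane. Such a point has at most one non-endpoint coordinate, because two free coordinates would admit a nonzero perturbation staying in the hyperplane. That is exactly the move that lowers your fibre count from two to one: fix one linear form and use convexity in the other, instead of fixing both.

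Your route instead shows that $\phi$ is maximized on $\partial\Pi$ and that every boundary point lifts to a box point with one interior coordinate. The interior case you flag as delicate is easier than you make it. $\phi$ has Hessian $\bigl(\begin{smallmatrix}0&2\\2&-2\end{smallmatrix}\bigr)$, which is indefinite, so no interior point of a two-dimensional $\Pi$ can be a local maximum at all. Your slide along $t=0$, where $\phi\equiv0$, is also valid.

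Your edge-lifting step is itself the one-hyperplane vertex argument, applied to the sum of parallel generators. It needs the bookkeeping you mention for zero generators and for a segment or point image. Note also that for a row with $w_i=0$ the row $x_i^\star$ need not maximize $q_i(\cdot;v^\star)$, although your converse construction already covers this.

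The paper's proof is shorter and works directly in the box. Yours is phrased in the same planar-zonotope terms as the paper's exact row oracle, so it doubles as a justification that scanning the edges of $\Pi$ suffices.
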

For a fixed comparator, projecting a row to its two residuals gives a planar zonotope with at most $2k_i$ boundary edges. Sorting the generators and checking edge endpoints and concave scalar stationary points yields an exact oracle in $O(p+k_i\log(k_i+1))$ arithmetic operations, including center projection. The same procedure handles $a^2-cb^2$, $c>0$. Segment and point images are handled directly; original-box witnesses are reconstructed from the maximizing edge. The proof, coefficient intervals and degeneracies are in Appendix~\ref{app:method-details}.

\section{Computable Certificates in an Actual Metric}
The algorithm verifies any proposed span against full-feature retraining over the original intervals. Appendix Table~\ref{tab:tails} distinguishes three sufficient bounds. The sharp spectral theorem appears in Appendix~\ref{app:spectral-v79}; the actual-metric moment bound below supports general coarsening and the acquisition study. Real-world energy uses~\eqref{eq:contraction-main-v89}.

\subsection{Observed Rows and an Actual Dense Metric}\label{sec:anchor}
Let $\mathcal A$ contain fully observed rows. With the original weights and ridge included once, form
\begin{equation}
 G_0=\lambda I+\sum_{i\in\mathcal A}w_ix_ix_i^\top,
 \qquad r_0=\sum_{i\in\mathcal A}w_ix_iy_i.
\end{equation}
General moments use a rounded inverse-transpose midpoint-Gram Cholesky factor; energy uses the observed-anchor factor. Both round $T$ elementwise to $2^{-20}$. The anchor remains $G_0$; structural controls also use ridge-only preprocessing. We verify the actual metric
\begin{equation}
 K=T^\top G_0T,\qquad 0<\mu I\preceq K\preceq\nu I.
 \label{eq:actual_metric}
\end{equation}
The moment method retains $K$ exactly. Observed rows can improve curvature and residuals independently. The spectral theorem's exact whitening and safe rounded surrogate are detailed in the appendix.

\subsection{A Regret-Oriented Residual-Moment Tail}\label{sec:moment}
For arbitrary invertible $T$, define $\widehat M_X=T^\top M_XT$ and $\widehat g_X=T^\top g_X$. Suppose $\widehat M_X\succeq\mu I$ and $U_0\geq H^*$ is valid. For a full-column-rank retained matrix $R$, put $H=R^\top R\succeq\mu_HI$ and $\Pi=RH^{-1}R^\top$. The retained original objective is
\begin{equation}
 F_R(s)=\max_X\{2\widehat g_X^\top Rs-s^\top R^\top\widehat M_XRs\}.
\end{equation}
It uses the actual anchor matrix in~\eqref{eq:actual_metric}. We compute verified constants
\begin{align}
 G_r&\geq\sup_X\|(I-\Pi)\widehat g_X\|,\nonumber\\
 C_r&\geq\sup_X\|(I-\Pi)\widehat M_XR\|_{\op},\qquad
 R_s=\sqrt{U_0/(\mu\mu_H)}.\nonumber
\end{align}
\begin{theorem}[Residual-moment certificate]\label{thm:moment}
For $\mathcal B=[-R_s,R_s]^r$, let $V_r=\max_{s\in\mathcal B}F_R(s)$. The uniform original-coordinate error satisfies $E(\operatorname{range}(TR))\le\eta_r$ and $d_{\eta_r}\le r$. Consequently,
\begin{equation}
 V_r\leq H^*\leq V_r+\eta_r,\qquad
 \eta_r=\frac{(G_r+C_rR_s)^2}{\mu}.\label{eq:moment_tail}
\end{equation}
A retained search interval $[f,U_R]$ therefore gives the original interval $[f,U_R+\eta_r]$, which can be tightened by any feasible full-model witness and by $U_0$.
\end{theorem}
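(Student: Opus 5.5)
We work in transformed coordinates: a physical displacement $v$ is written $v=Tz$. Then $q_X(Tz)=2\widehat g_X^\top z-z^\top\widehat M_Xz$, the full optimizer is $\widehat u_X=\widehat M_X^{-1}\widehat g_X=T^{-1}u_X$, and $\|u_X-Tw\|_{M_X}=\|\widehat u_X-w\|_{\widehat M_X}$. The pointwise error identity from Section~\ref{sec:structure} then gives, for every $X$ and every $s$,
\[
 h_\beta(X)-q_X(TRs)=\|\widehat u_X-Rs\|_{\widehat M_X}^2 .
\]
The plan is to pick, for each $X$, one retained point $s_X$ inside $\mathcal B$ and show this distance is at most $\eta_r$. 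That gives $E(\operatorname{range}(TR))\le\eta_r$ and $d_{\eta_r}\le r$ directly. Maximizing over $X$ then gives $H^*\le V_r+\eta_r$. The lower bound $V_r\le H^*$ is immediate, since $q_X(v)\le h_\beta(X)$ for all $v$.

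\textbf{Step 1: choice of $s_X$ and the radius.} Take $s_X=H^{-1}R^\top\widehat u_X$, so that $Rs_X=\Pi\widehat u_X$ is the Euclidean projection of $\widehat u_X$ onto $\operatorname{range}(R)$. From $h_\beta(X)=\widehat u_X^\top\widehat M_X\widehat u_X\ge\mu\|\widehat u_X\|^2$ and $h_\beta(X)\le U_0$ we get $\|\widehat u_X\|\le\sqrt{U_0/\mu}$. Since $\|H^{-1/2}R^\top\|_{\op}=1$, it follows that $\|s_X\|\le\|H^{-1/2}\|_{\op}\|\widehat u_X\|\le\sqrt{U_0/(\mu\mu_H)}=R_s$. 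Hence $s_X$ lies in the ball of radius $R_s$, and therefore in the cube $\mathcal B$. This also explains why the enclosure holds on the containing cube: enlarging the search set can only increase $V_r$, and $V_r$ stays at most $H^*$.

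\textbf{Step 2: bounding the residual error.} Write $\delta=\widehat u_X-Rs_X=(I-\Pi)\widehat u_X$. Using $\widehat M_X\widehat u_X=\widehat g_X$,
\[
 \|\delta\|_{\widehat M}^2=\delta^\top\widehat M\widehat u_X-\delta^\top\widehat MRs_X=\delta^\top\widehat g_X-\delta^\top\widehat MRs_X .
\]
Because $\delta\in\operatorname{range}(I-\Pi)$ and $I-\Pi$ is a symmetric projector, $\delta^\top v=\delta^\top(I-\Pi)v$ for every $v$. Therefore
\[
 \|\delta\|_{\widehat M}^2\le\|\delta\|\,\bigl(\|(I-\Pi)\widehat g_X\|+\|(I-\Pi)\widehat M_XR\|_{\op}\|s_X\|\bigr)\le\|\delta\|(G_r+C_rR_s).
\]
Combining this with $\|\delta\|\le\mu^{-1/2}\|\delta\|_{\widehat M}$ gives $\|\delta\|_{\widehat M}\le (G_r+C_rR_s)/\sqrt\mu$. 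Squaring yields $\eta_r$. The main point to get right is this step: the residual must be expressed through the optimality equation, so that only the projected gradient and the projected cross block appear, rather than an unknown $\widehat M$-projection.

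\textbf{Step 3: the interval statement.} If a search over $\mathcal B$ certifies $V_r\in[f,U_R]$, then $f\le V_r\le H^*\le U_R+\eta_r$. The lower value $f$ comes from a feasible pair $(X,s)$, and it can be raised by any full-model witness $h_\beta(X)$. The upper end can be capped by $\min(\cdot,U_0)$. Boundedness and compactness, which make the maxima attained, follow from positive ridge and compact $\cX$, as noted after~\eqref{eq:dual}.
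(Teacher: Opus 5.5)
Your proof is correct and matches the paper's argument: it uses the same projected coordinate $s_X=H^{-1}R^\top\widehat u_X$, the same coercivity-plus-$U_0$ radius bound, and the same control of the discarded gain through $(I-\Pi)(\widehat g_X-\widehat M_XRs_X)$. Your Cauchy--Schwarz step on $\delta=(I-\Pi)\widehat u_X$ is the paper's elimination identity $h_\beta(X)-f_X(TRz_X)=q_X^\top K_{D,X}^{-1}q_X\le\|q_X\|^2/\mu$ written out, because $\delta$ is exactly the restricted solve $K_{D,X}^{-1}q_X$.
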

Schur elimination bounds the discarded gain by $\|(I-\Pi)(\widehat g_X-\widehat M_XRs)\|^2/\mu$; coercivity places the full optimizer's projection inside $\mathcal B$. Appendix~\ref{app:new} details the actual-metric calculation.

Aggregate coordinate ranges are summed across rows before taking norms, preserving cancellation near a midpoint optimum. Verified midpoint cross blocks and original row-interval perturbations supply $C_r$. Uncertainty-gradient generators propose $R$; its actual Gram/projector and every tail constant determine the guarantee. Appendix~\ref{app:new} gives their formulas.

\section{Complete Search and Complexity}
The outer objective is a maximum of concave quadratics, so direct concavity is unavailable. Its special curvature still gives a complete certificate.

For the following statement let $\mathcal F(s)=\max_X q_X(s)$ denote a search envelope, with each $q_X$ a concave quadratic; this includes $F$ and $F_R$.

\begin{theorem}[Curvature interpolation]\label{thm:interp}
Suppose the negative quadratic coefficient in coordinate $j$ is uniformly at most $H_j$. For any parameter box $B=[l,u]$,
\begin{equation}
 \sup_{s\in B}\mathcal F(s)\leq
 \max_{q\in\mathrm{vert}(B)}\mathcal F(q)
      +\frac14\sum_jH_j(u_j-l_j)^2.\label{eq:interpolation}
\end{equation}
The same statement holds with the actual nonorthogonal metric $H$ and the corresponding curvature diagonals.
\end{theorem}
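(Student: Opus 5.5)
The approach is to reduce to a single concave quadratic and then apply multilinear interpolation on the box. Fix $s\in B$ and pick $X$ attaining $\mathcal F(s)=q_X(s)$; the maximum exists by compactness and continuity. It suffices to show that every concave quadratic $q$ whose coordinate curvatures satisfy $-\partial_{jj}^2q/2\le H_j$ obeys
\[
 q(s)\le \sum_{v\in\mathrm{vert}(B)}\omega_v(s)\,q(v)+\tfrac14\sum_jH_j(u_j-l_j)^2 .
\]
Here $\omega_v(s)\ge0$ are the multilinear (tensor-product) barycentric weights of $s$ in $B$, which sum to one. Granting this, $q_X(v)\le\mathcal F(v)\le\max_{\mathrm{vert}}\mathcal F$ at each vertex, and averaging gives \eqref{eq:interpolation} at $s$. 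Since $s$ was arbitrary, the supremum bound follows.

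To prove the single-quadratic bound, I would write $q(x)=c+b^\top x-x^\top Dx$ with $D\succeq0$ and $D_{jj}\le H_j$. Let $\mathcal I q$ denote the multilinear interpolant of $q$ on the vertices. The key observation is that the interpolant is exact on every monomial that is affine in each coordinate separately. This includes the constant, linear and off-diagonal terms $x_jx_k$ with $j\neq k$, because the tensor weights factor across coordinates. Hence
\[
 q-\mathcal Iq=-\sum_jD_{jj}\bigl(x_j^2-\mathcal I[x_j^2]\bigr).
\]
In one dimension, $\mathcal I[x_j^2]-x_j^2=(x_j-l_j)(u_j-x_j)\le(u_j-l_j)^2/4$. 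Since $D_{jj}\ge0$, it follows that
\[
 q(s)-\mathcal Iq(s)=\sum_jD_{jj}(s_j-l_j)(u_j-s_j)\le\tfrac14\sum_jH_j(u_j-l_j)^2 .
\]
Only nonnegativity of the diagonal is used here; full concavity is not needed. I would also check that the uniform bound on the diagonal coefficient holds for every $q_X$ in the envelope. For $F$ and $F_R$ this comes from bounding $(M_X)_{jj}$, respectively $(R^\top\widehat M_XR)_{jj}$, over the compact domain.

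For the nonorthogonal-metric version, I would reparametrize. If the search coordinates are $s=Ct$ for the actual metric or basis, then $q_X(Ct)$ is again a concave quadratic in $t$. Its coordinate curvatures are the diagonals of $C^\top(\cdot)C$, and the box is taken in $t$. The same argument then applies verbatim with those diagonals in place of $H_j$.

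The main obstacle I expect is the exactness claim for cross terms. I would verify it by writing $\mathcal I$ as the tensor product of one-dimensional linear interpolation operators $\mathcal I_j$. Each $\mathcal I_j$ reproduces functions affine in $x_j$ and acts as the identity in the other variables, so the product monomial $x_jx_k$ is reproduced. The remaining step, that $\mathcal F$ is controlled by $\mathcal F$ at the vertices, is only the pointwise dominance described in the first paragraph. No exchange of max and interpolation beyond monotonicity and convexity of the weights is needed.
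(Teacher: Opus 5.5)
Your proof is correct and is essentially the paper's argument: your tensor-product barycentric weights are exactly the law of a vertex whose coordinates are independently randomized to have mean $s$, and the paper likewise uses exactness on cross terms under this independent endpoint interpolation, the diagonal defect $(s_j-l_j)(u_j-s_j)\le(u_j-l_j)^2/4$, and a final maximization over completions. Your reparametrization for the nonorthogonal metric also matches the paper, which uses the diagonals of the actual quadratic coefficient matrix in the search coordinates.
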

The proof is in Appendix~\ref{app:method-details}.

Best-first branch and bound bisects the curvature-weighted widest side and retrains row witnesses in all features. The largest surviving leaf upper remains valid at a time cap. Outward arithmetic encloses every query. A center-gradient bound supports full-dimensional controls without root-corner enumeration.

For a root box of radius $R_0$ and exact row queries, a uniform grid with side at most $2\sqrt{\delta/\sum_jH_j}$ has global search error at most $\delta$. It uses at most
\begin{equation}
 \left[1+\left\lceil R_0\sqrt{\frac{\sum_jH_j}{\delta}}\right\rceil\right]^r
\end{equation}
grid points. The exponent is the retained model dimension $r$. Dense anchor and metric setup require $O(np^2+p^3)$ arithmetic; each attempted full-model witness fit has the same order. Proposal and tail costs are listed in Appendix~\ref{app:cost}. The total error adds its verified tail (Appendix Table~\ref{tab:tails}). Exact low rank is the zero-tail case.

\begin{proposition}[Validity and rank-adaptive completeness]\label{prop:complete-v91}
For any fixed retained rank and verified tail, every returned interval contains $H^*$. If rank enrichment reaches $p$, queries can be enclosed to arbitrary accuracy, and search time is unrestricted, then every precision $\delta>0$ is attainable with a finite original-table certificate.
\end{proposition}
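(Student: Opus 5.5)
The plan splits Proposition~\ref{prop:complete-v91} into validity and completeness. Both parts combine the enclosure theorems (Theorems~\ref{thm:energy}, \ref{thm:moment} and Corollary~\ref{cor:anchor-energy-v84}) with the interpolation bound of Theorem~\ref{thm:interp}.

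\textbf{Validity.} Fix a retained span and a verified tail $\eta$ from any of these theorems. Each gives $V\le H^*\le V+\eta$, where $V$ is the maximum of the retained envelope over a verified box $\mathcal B$. The branch and bound keeps a finite partition of $\mathcal B$ into leaves. On each leaf, Theorem~\ref{thm:interp} bounds the supremum of $\mathcal F$ by the vertex maximum plus the curvature term. Each vertex value is computed by the exact row oracle of Theorem~\ref{thm:row} and enclosed from above by outward arithmetic, so every leaf upper bound is valid. The largest surviving leaf upper bound therefore dominates $V$, whatever the time cap, and adding $\eta$ gives $U\ge H^*$. For the lower bound, every reported $L$ is the value $h_\beta(X)$ of a feasible original table. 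That table is reconstructed from a maximizing row edge and fitted by full-feature retraining, so $L\le H^*$ by the definition~\eqref{eq:target}. Capping $U$ by $U_0$ or by another valid bound only intersects valid enclosures.

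\textbf{Completeness.} Take $Q=I$ (or $R=I$ in the $T$-coordinates), which rank enrichment reaches at $p$. Then the projector is the identity, so $G_r=C_r=0$, the moment tail $\eta_r$ of Theorem~\ref{thm:moment} vanishes, and $V_r=H^*$. This is the exact low-rank zero-tail case. The root box $[-R_s,R_s]^p$ is finite because $\mu>0$ and $U_0<\infty$, and such a $U_0$ exists, for instance $J_X(\beta)$ maximized over the compact $\cX$. On this box, the uniform-grid count shows that finitely many points, with side at most $2\sqrt{\delta/(2\sum_jH_j)}$, give search error at most $\delta/2$ with exact queries. Best-first bisection of the widest curvature-weighted side eventually refines every surviving leaf below that side length, so it terminates in finite time with the same guarantee. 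Each query is enclosed to accuracy $\delta/4$, which is possible by assumption. For the lower bound, the grid point $s^\dagger$ attaining the vertex maximum yields, through the row oracle, a table $X^\dagger$ with $q_{X^\dagger}(s^\dagger)$ within $\delta/4$ of the upper bound. Since $h_\beta(X^\dagger)=\max_v q_{X^\dagger}(v)\ge q_{X^\dagger}(s^\dagger)$, retraining on $X^\dagger$ gives a witness $L$ with $U-L\le\delta$.

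\textbf{Main obstacle.} The delicate step is the lower witness. Theorem~\ref{thm:interp} only bounds suprema from above, so the argument needs $\max_X q_X(s)$ at a grid vertex to be certified within the query tolerance. That certification must produce an actual table, so that $U-L$, and not just $U-H^*$, is small. The plan handles this with the exact row-face oracle of Theorem~\ref{thm:row}, whose maximizer is a feasible row-wise completion. The gap from the query tolerance is then absorbed by the inequality $h_\beta\ge q_X$. A secondary point is that the curvature bounds $H_j$ must be uniform over $\cX$. This follows from compactness, since $\sup_X\|M_X\|$ is finite.
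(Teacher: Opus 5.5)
Your proposal is correct and follows essentially the paper's own argument. Validity comes from valid row-query enclosures, a verified containing root, valid node uppers, the verified additive tail and feasible-table lowers; completeness takes $R=I$ so the discarded tail vanishes, then uses a finite grid with interpolation error $\delta/2$ and query enclosures to accuracy $\delta/4$, whose best vertex supplies a feasible original-table lower. Your added claim that best-first bisection itself terminates is not needed, since the paper uses only the finite grid for attainability; as phrased it also overstates the search, which stops once the largest-upper leaf is small rather than after refining every surviving leaf.
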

At full rank the tail vanishes; a finite grid and query precision $\delta/4$ give the claimed width and a feasible original-table lower (Appendix~\ref{app:complete-v91}).

\paragraph{Refining the original domain.}
When a sufficient tail remains above the requested precision, a fixed reference $\vartheta$ gives the joint bound
\[
h_\beta(X)\le J_X(\beta)-J_X(\vartheta)+\mu^{-1}\|T^\top(M_X\vartheta-r_X)\|^2.
\]
Its degree-four cell polynomial preserves the objective--residual dependence. A moment relaxation with independently validated dyadic duals bounds each completion box; covering subdivision refines the global interval. This supplies a precision fallback with the same full retrained comparator (Appendix~\ref{app:joint-refinement-v92}).

\subsection{Rank Selection and Error Allocation}
Our selector searches specified proposals at ranks 1,2,3 for a verified tail at most $2.5\times10^{-6}$. It returns a verified sufficient rank when available. Otherwise, the span with the smallest verified tail supplies the first-stage interval, and joint residual refinement continues toward the requested total precision.

Algorithm~\ref{alg:certificate} in Appendix~\ref{app:joint-refinement-v92} gives the full procedure.

The total error is search plus tail, with a verified anchor error for identity-metric surrogates; the actual-metric moment method has zero anchor error. The complete interval determines precision, and its endpoints remain valid at every checkpoint. The decision is STOP if $U\leq\tau$, violating if $L>\tau$, and unresolved otherwise. Width $U-L$ measures numerical precision; $\tau$ is the model-retention tolerance. If $U\le\tau$, every full retrained optimum satisfies
\[
 |z^\top(\beta-\theta_X^*)|\le\sqrt{\tau z^\top G_0^{-1}z}
\]
for any feature vector $z$, using $M_X\succeq G_0\succ0$. This bounds prediction deviation from full retraining (Appendix~\ref{app:extensions}).

\begin{figure*}[t]\centering\includegraphics[width=.97\textwidth]{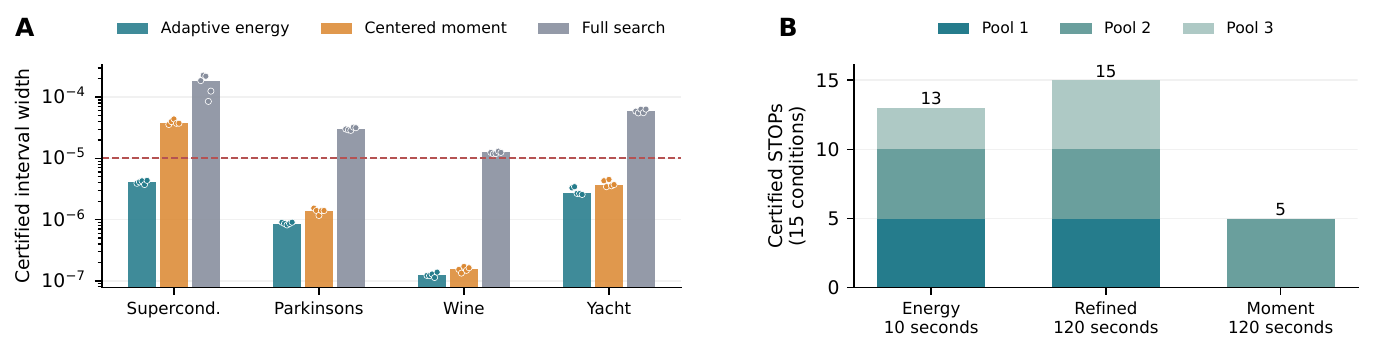}
\caption{Residual structure connects precision to acquisition decisions. A: four tasks with five masks each, all features retained; points are masks and bars are median widths under ten-second complete budgets. B: the fixed 2\% rule on fifteen conditions in three disjoint pools. Energy certifies thirteen STOPs by ten seconds. Refinement certifies all fifteen by 120 seconds, versus five for moments at the same deadline, avoiding 330 additional cells under the fixed thirty-three-cell policy.}\label{fig:public-energy-v84}\end{figure*}

\section{Experiments}\label{sec:experiments}
We test residual structure, full-feature certification and decision value. Timed pairs use one pinned i7-13800H logical CPU and single-thread libraries, reserving two performance cores. Additional queues report their own conditions. Deterministic enclosures cover masks on fixed record pools. Code and protocols: arXiv source archive, \texttt{anc/README.md}; \url{https://github.com/Peida-Lee/certifying-retraining-regret}.

\subsection{The Structural Separation in Computation}
The family~\eqref{eq:family} crosses $m=16,64,256$ with $\alpha=0,1/64,1/32,1/16$. Both methods share the learned rank-one direction, full comparator, ten-second search cap and $10^{-6}$ target. Energy verifies the conditions and meets all twelve targets; moment meets the three zero-coupling targets, reaches six tail floors and exhausts three caps. At $p=65,\alpha=1/32$, widths are $0.657\times10^{-6}$ versus $57.1\times10^{-6}$. Complete energy costs are $0.073,0.655,29.82$ seconds at $p=17,65,257$ (Figure~\ref{fig:structure-main-v82}B). The largest case spends $29.74$ seconds in setup and $0.079$ in search (Appendix~\ref{app:cost}).

\paragraph{Curvature-aware controls at fixed dimension.}
Fix $p=65$, the candidate, direction and signal intervals, and scale nuisance features by $s\in\{1,2,4,16,64,256\}$. Energy stays $\alpha^2/1250$. An enhanced scalar-curvature bound is $2\alpha^2s^2/[625(1+s^2)]$; a verified preconditioner $T_s=\operatorname{diag}(1,s^{-1}I)$ also keeps moments bounded. Figure~\ref{fig:structure-main-v82}A compares these stronger controls with exact omitted regret. Energy is sharper across this diagonal family (Appendix~\ref{app:scale-v80}).

\subsection{Automatic Multidimensional Structure}
Eighteen two/three-active-coordinate conditions test learned directions. In the unperturbed cases, candidate-line errors exceed $2\times10^{-4}$ while learned energy tails are below $8.34\times10^{-7}$. A verified sparse-block elimination completes the full residual/leakage grid to width $10^{-5}$ in 0.06--0.13 seconds per condition. Appendix~\ref{app:multirank-v82} reports the structural first-stage ablation and all eighteen exact block audits.

Nonzero-residual and leakage series, including six automatic rank-one cases, appear in Appendices~\ref{app:epsilon}, \ref{app:leakage-v79} and \ref{app:auto-energy-v80}.

\subsection{Real-World Experiments}
We evaluate Superconductivity, Parkinsons Telemonitoring, Wine Quality and Yacht Hydrodynamics~\citep{uci_super,uci_parkinsons,uci_wine,uci_yacht}. Every model retains all supplied predictor columns and its intercept. Recorded outcomes use the specified affine normalization and $2^{-10}$ quantization. Each instance has one signal-row unknown and sixteen nuisance rows with two unknowns each, giving 33 uncertain cells across multiple features. The candidate is midpoint ridge rounded to four fractional binary bits; the comparator retrains in all original features. The signal row has maximum pilot residual; sixteen nuisance rows are sampled from the 32 smallest residual-interval scores after three fixed passes. Five masks per task use the constructive rank-three span.

Adaptive energy~\eqref{eq:adaptive-v90} gives the smallest median width among A, M and full search on all four datasets (Figure~\ref{fig:public-energy-v84}, Table~\ref{tab:public-energy-v84}). The median paired width reductions against independently centered moments are 89.7\%, 38.4\%, 20.3\%, 25.3\%, respectively. All twenty adaptive intervals meet $10^{-5}$ precision within ten-second complete budgets. Appendix~\ref{app:public-energy-v84} reports the tail and search contributions to this total precision. The comparison uses identical row boxes, candidates, retained spans, feasible midpoint witnesses and coarse caps. Preprocessing, bound verification, search and full-feature witness fits all enter the budget.

The refined contraction links the structural theorem to these gains: verified discarded row energy reduces the contraction factor, and mixed residual moments tighten the retained radius and residual envelope. Independent rational audits reconstruct twenty primary receipts and feasible original-table witnesses, including a complete search tree for each dataset. Appendix~\ref{app:public-energy-v84} gives the full parameters, paired runs and cost breakdown.
\begin{table*}[t]\centering\small\setlength{\tabcolsep}{4pt}
\caption{Four real-world tasks, all features and five masks retained. All regret quantities use $10^{-6}$ normalized-objective units. $h_c$ is midpoint regret; the increment column bounds the median of $H^*-h_c$. Widths are marginal medians; gain is the median paired width reduction against M. A is adaptive energy and M independently centered moments. All methods have ten-second complete budgets.}
\label{tab:public-energy-v84}
\begin{tabular}{lrrrrrrr}\toprule Data&$p$&$h_c$&Increment bounds&A width&M width&Full width&Gain\\\midrule
Supercond. & 82 & 1664.298 & [23.232, 27.351] & 4.119 & 37.438 & 186.00 & 89.7\%\\
Parkinsons & 20 & 817.937 & [10.773, 11.677] & 0.862 & 1.403 & 30.10 & 38.4\%\\
Wine Quality & 12 & 424.515 & [2.323, 2.461] & 0.122 & 0.153 & 12.49 & 20.3\%\\
Yacht & 7 & 744.339 & [17.220, 19.789] & 2.669 & 3.723 & 58.99 & 25.3\%\\
\bottomrule\end{tabular}\end{table*}

Matched controls vary nuisance-row sampling and candidate precision on the same boxes. With a common 600-second cap, refinement certifies all twenty-four matched conditions to width $10^{-5}$ and centered moments certify fifteen. Refinement's measured algorithm costs are at most 459.69 seconds from the stored inputs. Appendix~\ref{app:precision-controls} separates the ten-second first-stage ablation from refinement; Table~\ref{tab:public-energy-v84} links midpoint cost, uncertainty increment and precision.

\paragraph{Response-independent masks and precision refinement.}
We evaluate five further masks per task, with all seventeen row indices sampled independently of responses and residuals. The reference, branching and solver rules are fixed before these runs. Our complete algorithm reaches $10^{-5}$ width in all twenty instances: eight complete by 10 seconds, sixteen by 60 seconds and all twenty by 120 seconds. The longest complete run is 76.21 seconds; centered moments reach the target in five instances under the common 600-second cap. Table~\ref{tab:independent-v91} compares the final full-comparator certificates and success counts. Appendix~\ref{app:core-decisions-v91} gives the structural-stage ablation and every mask; Appendix~\ref{app:joint-refinement-v92} gives the polynomial bound, validated moment dual and complete timing protocol.
\begin{table}[t]\centering\small\setlength{\tabcolsep}{2.5pt}
\caption{Five new response-independent masks per task. Median widths use $10^{-6}$ units. Our complete algorithm combines structural certification and joint refinement; M is centered moments. Time includes all algorithm stages. Success counts use the common $10^{-5}$ target and 600-second cap.}\label{tab:independent-v91}
\begin{tabular}{lrrrrrr}\toprule Data&Ours width&M width&Ours (s)&Ours&M\\\midrule
Supercond. & 3.174 & 254.4 & 65.55 & 5/5 & 0/5\\
Parkinsons & 0.1055 & 57.95 & 52.29 & 5/5 & 0/5\\
Wine Quality & 6.321 & 7.278 & 6.03 & 5/5 & 5/5\\
Yacht & 0.001748 & 25.39 & 9.92 & 5/5 & 0/5\\
\bottomrule\end{tabular}\end{table}

\begin{table}[t]\centering\small\setlength{\tabcolsep}{3pt}
\caption{All twenty identical primary instances. Entries are median width ($10^{-6}$) / median complete seconds across five masks. A/M use outward enclosures; SCIP uses its numerical dual with $10^{-10}$ search tolerance and a nominal ten-second cap. Actual costs include indivisible-call overruns.}
\label{tab:external-main-v90}
\begin{tabular}{lrrr}\toprule Data&A&M&SCIP\\\midrule
Supercond. & 4.119 / 6.82 & 37.44 / 6.29 & 106.6 / 9.96\\
Parkinsons & 0.8616 / 4.06 & 1.403 / 4.16 & 0.7925 / 9.98\\
Wine Quality & 0.1222 / 4.60 & 0.1532 / 4.31 & 0.2836 / 9.95\\
Yacht & 2.669 / 5.29 & 3.723 / 5.12 & 0.002423 / 9.79\\
\bottomrule\end{tabular}\end{table}

The SCIP comparison covers all twenty primary masks. ZORRO refines valid model envelopes throughout its complete ten-second budget, retaining each parent's bound until child evidence is complete. Table~\ref{tab:zorro-refined-v91} and Figure~\ref{fig:zorro-time-v91} give its widths, costs and anytime trajectories on the four first masks.

\paragraph{Additional coarsening and directions.}
All sixty full-feature conditions in Diabetes, Concrete and Superconductivity~\citep{efron2004,sklearn,uci_concrete,uci_super} reach $3\times10^{-6}$ width, with independent exact replay of every certificate. Complete costs are at most 156.76 seconds (Appendix~\ref{app:general-public}). Eighteen further masks compare rank-three gradient/POD/greedy. Gradient is narrower in 16/18 and 17/18 metric-matched pairs (Table~\ref{tab:metric-v79}); twelve further SCIP controls and their objective identity audit are in Appendix~\ref{app:scip-followup}.

\subsection{Core Energy and Measurement Decisions}\label{sec:core-decisions-v91}
We freeze three mutually disjoint 1024-record Superconductivity evaluation pools, excluding the named development and scaling pools, and five masks per pool before running their solvers. One development predictor with the fixed-point contract is stored and shared across all fifteen tables. The coarsening uses the primary signal/nuisance rule. The predeclared criterion allows uncertainty to add at most 2\% to the nominal midpoint regret:
\begin{equation}
 H^*-h_\beta(X_c)\le0.02h_\beta(X_c).\label{eq:core-decision-v91}
\end{equation}
The midpoint uses only the available intervals and responses. Its verified lower endpoint defines a safe STOP threshold; its upper defines a violation threshold. Thus the rule is fixed independently of the searched worst-regret interval. Each method has ten complete seconds. The fixed acquisition policy reveals the thirty-three uncertain cells unless STOP is certified.

Pure contraction energy and adaptive energy each certify 13/15 STOPs, compared with 5/15 for centered moments and 0/15 for the common cap, SCIP and fully refined ZORRO. Energy's pool counts are $(5,5,3)$ and moment's are $(0,5,0)$. Summed across the fifteen audit instances, the policy avoids 429 cells for energy and 165 for moment, an additional 264 cells at ten seconds. Energy's median first-STOP time among its thirteen successes is 3.10 seconds. The pure and adaptive decisions coincide, identifying the structural energy bound's contribution. All ninety method-condition records and the predeclared 1\%/5\% sensitivity rules are in Appendix~\ref{app:core-decisions-v91}. A separate general-moment acquisition study appears in Appendix~\ref{app:calibration-v81}.

Joint residual refinement on the same stored instances certifies all fifteen STOP decisions within 91.68 seconds, avoiding all 495 cells under the fixed policy (Table~\ref{tab:decisions-refined-v92}). At the common 120-second checkpoint, moments certify five STOPs and refinement certifies fifteen, avoiding 330 additional cells (Figure~\ref{fig:public-energy-v84}B). The original 2\% threshold and candidate remain fixed.

\paragraph{Verification.}
Independent rational audits check original-row queries, representative complete trees and feasible table witnesses. The input-space baseline and every timing boundary are specified in Appendix~\ref{app:input} and Table~\ref{tab:protocol-v80}.

\section{Related Work and Scope}
Table~\ref{tab:closest-main-v81} summarizes the closest optimization and certification guarantees. ZORRO~\citep[Theorem 4.2]{zhu2024zorro} encloses all exact ridge optima under training-feature/label uncertainty. Its model/data zonotope can be converted to a regret upper through $h_\beta(X)=g_X^\top(\beta-\theta_X^*)$. Our direct objective bounds add an original-table lower witness and a verified residual-structure compression error; Table~\ref{tab:zorro-v90} measures the explicit model-envelope conversion. MinPrep studies cleaning necessity~\citep{minprep2026}; dataset multiplicity analyzes predictions under possible training datasets~\citep{meyer2023}. Poisoning certificates enclose perturbed training trajectories~\citep{sosnin2025,lorenz2025mibp}; interval least-squares PPS encloses solution coordinates~\citep{shary2021}. Classical widths~\citep{binev2011} and task-oriented/sketched approximation~\citep{chen2025coupled,patil2024sketched} give complementary dimension notions.

\clearpage
\typeout{CONTENT_PAGES:\number\numexpr\value{page}-1\relax}
\section*{Impact Statement}
This work provides deterministic bounds on training-objective regret under explicitly specified feature intervals. It can help practitioners decide when further feature acquisition is useful. Application-specific interval validity and downstream predictive or clinical utility require their own assessment. Experiments use public benchmark records with artificial coarsening and controlled synthetic tables. LLM tools assisted coding and manuscript preparation, including discussion and presentation of mathematical arguments. The paper provides proofs and executable audit programs for its mathematical and computational claims. The author is responsible for the content.

\clearpage
\appendix
\section{Observed-Anchor Residual-Energy Certification}\label{app:anchor-energy-v84}
This section supplies the extension that connects the energy mechanism to the measured public tables.
Write $V=W_{\mathcal A^c}^{1/2}X_{\mathcal A^c}T$, $e=W_{\mathcal A^c}^{1/2}(X_{\mathcal A^c}\beta-y_{\mathcal A^c})$. The original gain in transformed displacement $z$ has Hessian $K+V^\top V$ and linear term $h+V^\top e$. A retained translation $z=z_0+Qu+Pv$ changes the linear term to $h_c+V^\top e_c$. Since $z_0$ belongs to the retained span, the original restricted-comparator class is unchanged.

Let $A=V_{\mathcal N}Q$, $Z=V_{\mathcal N}P$ and $S=V_{\mathcal S}Q$. Signal support removes $V_{\mathcal S}P$. The discarded block is $D=K_{PP}+Z^\top Z\succeq\mu I$. Therefore
\begin{align*}
 ZD^{-1}Z^\top&\preceq I,\\
 D^{-1}Z^\top ZD^{-1}&\preceq D^{-1}-\mu D^{-2}\preceq I/(4\mu).
\end{align*}
The last inequality follows by diagonalizing $D$: $t^{-1}-\mu t^{-2}\le1/(4\mu)$ for every $t\ge\mu$. Thus $\|ZD^{-1}\|\le1/(2\sqrt\mu)$. Also $\|K_{QP}\|\le c=(\nu-\mu)/2$, because subtracting the midpoint scalar matrix leaves the off-diagonal block unchanged.

The Schur complement of $K+V_{\mathcal N}^\top V_{\mathcal N}$ is at least $\mu I$: its quadratic form at $u$ is the minimum over $v$ of a form bounded below by $\mu(\|u\|^2+\|v\|^2)$. Adding $S^\top S$ preserves the bound. Its retained right-hand side is
\begin{align*}
 g_S&+A^\top(I-ZD^{-1}Z^\top)e_N-K_{QP}D^{-1}Z^\top e_N\\
 &\quad-(K_{QP}+A^\top Z)D^{-1}(h_c)_P,
\end{align*}
where $g_S=Q^\top h_c+S^\top(e_c)_{\mathcal S}$ and $e_N=(e_c)_{\mathcal N}$. The displayed contractions and $\|D^{-1}\|\le1/\mu$ bound its norm by the numerator of $R_A$. Every full optimizer therefore has $\|u\|\le R_A$. The retained-only optimizer has right-hand norm at most $G+a\epsilon$ and satisfies the same radius.

At fixed retained $u$, optimizing the discarded coordinate adds
\begin{align*}
 &\|D^{-1/2}[(h_c)_P-K_{PQ}u+Z^\top(e_N-Au)]\|^2\\
 &\qquad\le[\epsilon+aR_A+(d+cR_A)/\sqrt\mu]^2.
\end{align*}
Apply this at each completion's full optimum to bound the pointwise best-subspace error, then take the supremum over tables. This proves Corollary~\ref{cor:anchor-energy-v84}, including its intrinsic-dimension statement and its shifted containing domain.

\subsection{A Sharper Contraction Certificate}\label{app:contraction-v88}
The observed-anchor bound admits a finite-scale refinement that uses the actual discarded row energy while retaining its scale-independent limit. Use the notation of Corollary~\ref{cor:anchor-energy-v84}, with $A=V_{\mathcal N}Q$, $Z=V_{\mathcal N}P$ and $e=(e_c)_{\mathcal N}$. Verify uniform bounds
\[
 \|Z\|^2\le\rho,\qquad \|A^\top e\|\le b.
\]
Define
\begin{align*}
 \gamma^2&=\frac{\rho}{\mu+\rho},\\
 t&=\min\left\{\frac1{2\sqrt\mu},\frac{\sqrt\rho}{\mu}\right\},\\
 q&=\min\{a\epsilon,b+a\gamma^2\epsilon\},\\
 N_C&=G+q+(c/\mu+at)d+ct\epsilon,\\
 R_C&=\min\{R_A,N_C/\mu\}.
\end{align*}
An independently verified bound $\|h_c+V^\top e_c\|\le B$ additionally permits $R=\min\{R_C,B/\mu\}$. Both the full optimizer and the retained optimizer have retained norm at most $R$. Put
\begin{align}
 E_N(R)&=\min\{(\epsilon+aR)^2,\epsilon^2+2bR+a^2R^2\},\nonumber\\
 \eta_C&=\left[\gamma\sqrt{E_N(R)}+\frac{d+cR}{\sqrt\mu}\right]^2.\label{eq:contraction-v88}
\end{align}
Then $E(\operatorname{range}(TQ))\le\eta_C$ and the same containing search domain gives $V_Q\le H^*\le V_Q+\eta_C$. The factor $\gamma\le1$ adapts to the observed nuisance energy; its limiting value recovers the original contraction guarantee.

\paragraph{Proof.}
Since $D=K_{PP}+Z^\top Z\succeq\mu I+Z^\top Z$,
\[
 ZD^{-1}Z^\top\preceq Z(\mu I+Z^\top Z)^{-1}Z^\top\preceq\gamma^2I.
\]
Consequently $\|D^{-1/2}Z^\top\|\le\gamma$ and $\|ZD^{-1}\|\le t$. In the retained Schur right-hand side,
\[
 \|A^\top(I-ZD^{-1}Z^\top)e\|\le\min\{a\epsilon,b+a\gamma^2\epsilon\}.
\]
The remaining two terms are bounded by $ct\epsilon$ and $(c/\mu+at)d$. The Schur coercivity bound gives $R_C$. Full coercivity gives $B/\mu$; the retained-only optimum obeys both corresponding bounds. Finally,
\[
 \|e-Au\|^2\le\epsilon^2+2b\|u\|+a^2\|u\|^2
\]
and the triangle inequality gives the other term in $E_N$. Applying the refined contraction to the discarded gain proves~\eqref{eq:contraction-v88}.

\paragraph{Verified row-space implementation.}
For the actual rational projector, each discarded row norm has its maximum at an original-box vertex. Their sum supplies $\rho_{\rm tr}$. A second bound uses $Z=Z_0+\Delta Z$:
\begin{align*}
 \rho_{\rm op}&=\bigl(\sqrt{\bar\lambda(Z_0Z_0^\top)}+\sqrt{\overline{\|\Delta Z\|_F^2}}\bigr)^2,\\
 \rho&=\min\{\rho_{\rm tr},\rho_{\rm op}\}.
\end{align*}
The center Gram has only as many rows as the nuisance set. A rounded eigenvector test matrix, its actual Gram lower bound and outward Gershgorin arithmetic verify $\bar\lambda$. Original-box vertex norms bound the perturbation trace. Exact quadratic product ranges are summed before bounding $\|A^\top e\|$ in the actual retained Gram. This preserves cancellation across rows in the mixed moment. Upward square roots, a dyadic containing radius and the existing exact row oracle complete the certificate. All costs enter the energy method's complete clock.

\paragraph{Common-domain adaptive certificate.}
At the retained reference $z_0$, let $B\ge\sup_X\|\widehat g_{X,\rm ref}\|$. Strong convexity bounds both full and restricted displacement norms by $B/\mu$. Together with the structural radius $R_C$, this gives the containing coordinate radius $R_c=\lceil\min\{R_C,B/\mu\}/\sqrt{\mu_H}\rceil_{2^{-24}}$. The ordinary centered-moment radius is $R_M=\lceil B/(\mu\sqrt{\mu_H})\rceil_{2^{-24}}$, so $R_c\le R_M$. With the same actual projector and reference, Schur elimination bounds the discarded gain by $(G_P+C_RR_c)^2/\mu$. The contraction bound is also valid on the same original retained objective and containing root. Taking their minimum proves~\eqref{eq:adaptive-v90}, and its sufficient tail is at most $(G_P+C_RR_M)^2/\mu$. The dyadic containing box can include extra retained comparators; each remains feasible for the full problem. Independently verified table witnesses and the common cap tighten final endpoints. The tail comparison concerns the shared domain; completed search and witness quality determine final widths.

\subsection{Constructive discovery and actual arithmetic}
For each signal row $x_i=c_i+\sum_j[-1,1]s_{ij}e_j$, its transformed vector lies in the span of $T^\top c_i$ and the $k_i$ vectors $s_{ij}T^\top e_j$. Adjoin $z_A=T^{-1}(\beta-\theta_A)$ with $\theta_A=B^{-1}r_A$. Then $h-Kz_A=0$, and the dimension is at most $1+\sum_i(1+k_i)$. This gives a deterministic construction from a chosen signal set, with no spectral-gap assumption. The residual and retained-coupling constants decide whether its rank suffices.

The implementation uses the actual dyadic $T$ rounded at $2^{-20}$. For one signal row and one unknown, its first two proposal columns are exact power-of-two scalings of the transformed center and generator. It appends the observed-anchor correction orthogonal to that span, rounded at $2^{-24}$. With the actual proposal $R$, compute $H=R^\top R$, its exact rational inverse, and the projector $RH^{-1}R^\top$. The signal's two discarded squared norms are checked to equal zero exactly. The retained reference solves
\[
 s_0=(R^\top KR)^{-1}R^\top h.
\]
Its reference coefficients are $\beta_c=\beta-TRs_0$, and $R^\top h_c=0$ is checked exactly. The remaining discarded-gradient norm $d$ is bounded from these actual rational quantities. The verified Gram minimum is the larger of its Gershgorin lower bound and $1/\operatorname{tr}(H^{-1})$.

Nuisance residual energy uses exact affine row ranges at $\beta_c$. Retained energy uses the smaller of a rowwise trace bound and a verified three-dimensional operator bound. The latter whitens the actual retained Gram by a dyadic test matrix and encloses every moment entry from original row boxes before a Gershgorin upper bound. The signal-gradient squared norm is a univariate quartic; dyadic Bernstein subdivisions give a rational upper bound, combined with its coordinate-box bound. All square roots round upward. The search domain contains $s_0$ plus the certified coordinate-radius cube. Node radii enclose both endpoint distances from the actual evaluated float center, accounting for translated-box rounding.

\section{Residual-Energy Proof and Exact Separation}\label{app:energy}
\begin{table*}[t]
\caption{Three error budgets for the same original full-comparator regret. Each follows from eliminating discarded coordinates; the applicable conditions determine the certificate used. Search error is added separately.}
\label{tab:tails}
\centering\small\setlength{\tabcolsep}{4pt}
\begin{tabular}{p{.12\textwidth}p{.31\textwidth}p{.22\textwidth}p{.27\textwidth}}
\toprule
Tail & Verified sufficient condition & Additive error & Role and evidence \\
\midrule
Residual energy & Signal residual and coupling; exact support or verified leakage $b$ & $(\epsilon+aR)^2$; leakage $\eta_b$ & Full-rank structural separation; nonzero residual and support-leakage series \\
Residual moment & Actual metric; aggregate gradient and cross-block bounds & $(G_r+C_rR_s)^2/\mu$ & General real-table certificate; fixed-rank direction controls and acquisition decisions \\
Spectral & Uniform discarded, whitened covariance bounded by $\rho$ & $\rho B_0^2/(1+\rho)$ & Sharp benchmark and covariance-based regime; exact sharpness and high-rank separation \\
\bottomrule
\end{tabular}
\end{table*}

\subsection{Original-data conditions and anchor extension}
Theorem~\ref{thm:energy} assumes $\lambda>0$, nonnegative fixed weights, a compact nonempty completion domain, $Q^\top Q=I$ and $\beta\in\operatorname{span}(Q)$. Its signal-support condition holds for every completion, and the constants bound the stated supremums over the same domain. The box version needs the tail at the full optimum's projected point, whose norm is at most $R$; a box can contain additional points outside that ball, all of which remain feasible comparators.

For a quadratic observed-row anchor, first remove its rows $\mathcal A$ and partition the remaining rows into $\mathcal S,\mathcal N$. With original weights and ridge counted once, put $B=\lambda I+\sum_{i\in\mathcal A}w_ix_ix_i^\top\succ0$, $r_A=\sum_{i\in\mathcal A}w_ix_iy_i$ and $g_A=B\beta-r_A$. The same proof applies when $P^\top BQ=0$ and $P^\top g_A=0$. Replace retained ridge terms in $\kappa$ and $f_{\mathcal S}$ by $Q^\top BQ$ and $Q^\top g_A$. Then the discarded contraction is $Z(B_D+Z^\top Z)^{-1}Z^\top\preceq I$, with $B_D=P^\top BP\succ0$. The general moment theorem covers arbitrary actual dense metrics and retains their gradient and cross terms.

In the ridge-only variant, with a separately verified retained radius, a nonzero discarded candidate component can be represented by ridge pseudo-observations with residual $\sqrt\lambda P^\top\beta$. The energy tail becomes
\[
 \left(\sqrt{\epsilon^2+\lambda\|P^\top\beta\|^2}+aR\right)^2.
\]
The sharper original-data radius in the main theorem is not automatically preserved by this extension.

\subsection{Moments, exact optima and genuine nuisance influence}
For~\eqref{eq:family}, let $d=(2m)^{-1}\sum_i(u_i-1)^2$. Direct original moments give
\begin{align*}
 g_X&=de_0,\quad 0\leq d\leq1/32,\\
 M_{00}&=\tfrac12+(2m)^{-1}\sum_i u_i^2+\alpha^2/2,\\
 M_{0j}&=\alpha t_j/(2m),\qquad M_{jj}=\tfrac12+t_j^2/(2m),
\end{align*}
with every other nuisance off-diagonal zero. The Schur complement is
\[
 K=\tfrac12+(2m)^{-1}\sum_i u_i^2
       +\frac{\alpha^2}{2m}\sum_j\frac1{1+t_j^2/m}.
\]
Thus $h_\beta(X)=d^2/K$, while the retained value is $d^2/M_{00}$. Simultaneously $d^2\leq1/1024$, $K\geq25/32+\alpha^2/10$ and $M_{00}\geq25/32+\alpha^2/2$. All bounds are attained by $u_i=3/4$, and $K$ additionally takes $t_j=2\sqrt m$, proving~\eqref{eq:family_optima}.

At the midpoint $u_i=1$, so $g_X=0$ and $\beta=e_0$ is the unique full-feature ridge optimum. One signal row and all nuisance rows form a submatrix with determinant $u_i\prod_jt_j\ne0$. Every completed table has full column rank. Moreover
\[
 \frac{\partial K}{\partial t_j}
 =-\frac{\alpha^2t_j}{m^2(1+t_j^2/m)^2}<0
\]
for $\alpha>0$. Consequently each uncertain nuisance coordinate strictly affects regret whenever $d>0$. Its individual endpoint effect scales as $1/m$; the joint effect of moving all nuisance endpoints is
\[
 \frac5{4000+512\alpha^2}-\frac1{800+256\alpha^2}>0,
\]
independently of $m$. The nuisance row norms grow with $\sqrt m$ to keep feature energies bounded below; this scaling is part of the construction.

At $\alpha=1/32$, $H^*=10/8001$, $V_{e_0}=2/1601$, omitted regret $8/12809601$, and energy tail $1/1280000$. The full retained displacement is at most $1/25$. With $T=I$, the general moment choices $G_r=0,C_r=\alpha,\mu=1/2,U_0=1/512$ give tail $\alpha^2/128$. The residual-energy result uses both the tighter radius and the contraction by the actual discarded ridge-plus-data block. Numerical generic envelopes may be looser still; their actual verified values are reported.

\subsection{Recovery and covariance comparisons}
The first-order signal generator is zero because its midpoint residual is $-1$ and $\beta_0=1$. Its second-order generator is a nonzero multiple of $T^\top e_0$. On nuisance rows, residuals and the uncertain-coordinate candidate coefficients are zero. The proposal Gram therefore has rank one. For the scalar actual dyadic ridge-anchor transformation, its retained span maps exactly to $e_0$; the implementation verifies this actual matrix identity.

Endpoint averaging gives the feature second moment
\[
 \overline C=\begin{bmatrix}
 17/32+\alpha^2/2&b^\top\\ b&(5/4)I_m
 \end{bmatrix},\quad b_j=3\alpha/(4\sqrt m).
\]
The nuisance block has $m$ eigenvalues $5/4$. Interlacing gives at least $m$ eigenvalues of $\overline C$ at least $5/4$. Ridge whitening multiplies by 2, so the endpoint-average min-max bound implies $\rho\geq5/2$ at ranks $r\leq m-1$. Since $\theta_0=0$, $B_0^2=2+\alpha^2/2$, and the spectral tail ledger is at least $(5/7)(2+\alpha^2/2)\geq10/7$.

The discounted upper itself has a stronger obstruction. With $\gamma\leq2/7$, evaluate it at retained coordinate zero and $u_i=3/4$:
\[
 U_\gamma\geq41/32-\gamma(2+\alpha^2/2)
 \geq159/224-\alpha^2/7.
\]
At $\alpha=1/32$ this is $5087/7168>0.709$, while $H^*\approx0.00125$. Independent original-objective caps can tighten final software output; this comparison isolates the spectral-discount component.

The midpoint second moment is
\[
 C_c=\begin{bmatrix}1/2+\alpha^2/2&b^\top\\b&(9/8)I_m\end{bmatrix}.
\]
For $0<\alpha<\sqrt{5/4}$ put $\Delta=5/8-\alpha^2/2$. Its leading eigenvector's active component satisfies
\[
 q_{+,0}^2=\tfrac12\left(1-\frac\Delta{\sqrt{\Delta^2+4\|b\|^2}}\right)
 \leq\frac{9\alpha^2/16}{\Delta^2}.
\]
All remaining eigenvectors above the smallest eigenvalue are pure nuisance vectors. For any leading Gram-PCA span of rank $1\leq r\leq m$, $\|Q_c^\top e_0\|^2=q_{+,0}^2$. Using $g_X=de_0$ and $M_X\succeq I/2$,
\[
 V_{\rm PCA}\leq\frac1{512}\frac{9\alpha^2/16}{(5/8-\alpha^2/2)^2}.
\]
This equals $9/3271682$ at $\alpha=1/32$. The construction uses uncentered weighted feature second moments, matching the midpoint-Gram proposal. It makes no statement for every possible covariance-based algorithm or for sample-mean-centered PCA.

\begin{table*}[t]\caption{Complete structural mechanism series. All $p$ columns have independent nondegenerate intervals, and every positive-coupling nuisance coordinate affects the original regret. Widths and energy tails use $10^{-6}$ units. Both solvers search one learned direction and receive 10-second search caps; times include common construction/anchor costs and their own setup/search.}\label{tab:structure}\centering\small\setlength{\tabcolsep}{4pt}\begin{tabular}{rrrrrrrr}\toprule $p$&Cells&$\alpha$&Moment width&Energy width&Energy tail&Moment (s)&Energy (s)\\\midrule
17&32&0&0.535&0.501&0.000&0.09&0.07\\
17&32&1/64&5.685&0.540&0.195&0.15&0.07\\
17&32&1/32&22.740&0.657&0.781&0.14&0.07\\
17&32&1/16&90.968&0.750&3.125&0.14&0.07\\
65&128&0&0.535&0.501&0.000&0.75&0.72\\
65&128&1/64&14.268&0.540&0.195&0.77&0.66\\
65&128&1/32&57.073&0.657&0.781&0.75&0.65\\
65&128&1/16&228.297&0.750&3.125&0.88&0.78\\
257&512&0&0.535&0.501&0.000&34.76&35.30\\
257&512&1/64&42.878&0.540&0.195&43.37&33.97\\
257&512&1/32&171.514&0.657&0.781&39.57&29.82\\
257&512&1/16&686.060&0.750&3.125&40.36&31.31\\\bottomrule\end{tabular}\end{table*}

\subsection{Curvature-aware controls at fixed ambient dimension}\label{app:scale-v80}
Fix $n=128$, $p=65$, $\alpha=1/32$, $\lambda=1/2$ and $\beta=Q=e_0$ in~\eqref{eq:family}. Scale only nuisance features, giving $t_j\in[s\sqrt m,2s\sqrt m]$. The structural constants are $a=\alpha/\sqrt2$ and $R=1/25$; the exact cross-block bound is $\sup_X\|Z_X^\top A_X\|=\alpha s$.

The ridge-only scalar estimate is $\eta_\lambda=2\alpha^2s^2/625$. This family also admits a stronger curvature estimate directly from its original intervals:
\[
 D_X=\lambda I+Z_X^\top Z_X\succeq (1+s^2)I/2.
\]
At the same structural radius, the resulting bound is
\[
 \eta_D=\frac{2\alpha^2s^2}{625(1+s^2)},\quad
 \eta_E=\frac{\alpha^2}{1250},\quad
 \frac{\eta_D}{\eta_E}=\frac{4s^2}{1+s^2}<4.
\]
Thus stronger scalar curvature also removes scale divergence in this diagonal family. The energy certificate gives the smaller constant without estimating a nuisance curvature lower bound.

\paragraph{Moment control with an actual preconditioner.}
For the stated power-of-two scales, $T_s=\operatorname{diag}(1,s^{-1}I)$ is exactly dyadic and retains the same physical line $e_0$. In its actual transformed full Gram, retained curvature is at least $A=25/32$, discarded curvature at least $d_s=(1+s^{-2})/2$, and cross-block norm at most $\alpha$. Therefore
\[
 \widehat M_X\succeq\mu_s I,\quad
 \mu_s=\frac{A+d_s-\overline{\sqrt{(A-d_s)^2+4\alpha^2}}}{2}>0,
\]
where the square root is rounded upward on a $2^{-64}$ grid. Exact rational checks verify $A-\mu_s\ge0$, $d_s-\mu_s\ge0$ and $(A-\mu_s)(d_s-\mu_s)\ge\alpha^2$. These inequalities prove the bound uniformly over all original tables. The transformed discarded gradient is zero and $C_r=\alpha$, so $U_0=1/(1024\mu_s)$, $R_s=1/(32\mu_s)$ and Theorem~\ref{thm:moment} yields
\[
 \eta_{M,T}=\frac{\alpha^2}{1024\mu_s^3}.
\]
This is a curvature-aware, actually preconditioned moment control. It also stays bounded as $s$ grows. For reference, the identity-metric moment constants give $\alpha^2s^2/128$; that simpler control is recorded in the source.

The exact full and restricted optima are
\[
 \begin{gathered}
 H_s^*=\frac1{800+512\alpha^2/(1+4s^2)},\\
 V_{e_0}=\frac1{800+512\alpha^2}.
 \end{gathered}
\]
Original normal equations attain both at the signal lower endpoints and nuisance upper endpoints. The comparison keeps dimension, direction and candidate fixed. Energy's scale-independent guarantee applies without the diagonal family's available nuisance curvature estimate; the diagonal case quantifies the remaining constant advantage after supplying it.

\begin{table}[t]\centering\small\setlength{\tabcolsep}{3pt}
\caption{Additive controls at fixed $p=65$, in $10^{-6}$ units. D uses the enhanced discarded curvature; MT uses the actual dyadic preconditioner and verified full-Gram lower bound.}\label{tab:scale-v80}
\begin{tabular}{rrrrr}\toprule Scale&Actual&Energy&D&MT\\\midrule
1 & 0.625 & 0.781 & 1.562 & 2.034\\
2 & 0.735 & 0.781 & 2.500 & 4.021\\
4 & 0.769 & 0.781 & 2.941 & 6.501\\
16 & 0.780 & 0.781 & 3.113 & 7.699\\
64 & 0.781 & 0.781 & 3.124 & 7.783\\
256 & 0.781 & 0.781 & 3.125 & 7.788\\
\bottomrule\end{tabular}\end{table}

\subsection{Stable residual energy under signal-support leakage}
\label{app:leakage-v79}
The exact support condition in Theorem~\ref{thm:energy} admits a quantitative extension. Its radius controls the projection of the full optimizer for every completion and converges to the residual-energy radius as leakage vanishes.

Let $[Q\ P]$ be orthogonal and write
$\beta=Q\beta_Q+P\beta_D$ with $\|\beta_D\|\le d$.
For $j\in\{\mathcal S,\mathcal N\}$ define
\begin{align*}
 A_j&=W_j^{1/2}X_jQ,\qquad Z_j=W_j^{1/2}X_jP,\\
 e_j&=W_j^{1/2}(X_j\beta-y_j).
\end{align*}
Suppose uniformly over the compact nonempty completion domain that
\begin{align*}
 \|A_{\mathcal N}\|_{\op}&\le a,&\|e_{\mathcal N}\|&\le\epsilon,\\
 \|A_{\mathcal S}\|_{\op}&\le\sigma,&\|e_{\mathcal S}\|&\le E,
 &\|Z_{\mathcal S}\|_{\op}&\le b,\\
 \lambda I+A_{\mathcal S}^\top A_{\mathcal S}&\succeq\kappa I,&
 \|f_{\mathcal S}\|&\le G,
\end{align*}
where $f_{\mathcal S}=\lambda\beta_Q+A_{\mathcal S}^\top e_{\mathcal S}$,
$\lambda,\kappa>0$, and weights are fixed and nonnegative.
The residuals use the actual candidate. In particular, when $d>0$ the nuisance-residual bound includes $Z_{\mathcal N}\beta_D$.

\begin{theorem}[Residual-energy stability under leakage]
\label{thm:energy-leakage-v79}
Define
\begin{align}
 \omega_b&=\frac{b}{\sqrt{\lambda+b^2}},\nonumber\\
 \kappa_b&=\frac{\lambda(\kappa+b^2)}{\lambda+b^2},
 \label{eq:leakage-contractions-v79}\\
 R_{b,d}&=\frac{1}{\kappa_b}\left[
 G+a\epsilon+\frac{\omega_b}{2}(aE+\sigma\epsilon)
 +\omega_b^2\sigma E\right.\nonumber\\[-1mm]
 &\hspace{30mm}\left.{}+(a\sqrt\lambda+\sigma b)d\right],
 \label{eq:leakage-radius-v79}\\
 \eta_{b,d}&=\left[\epsilon+aR_{b,d}
 +\omega_b(E+\sigma R_{b,d})+\sqrt\lambda d\right]^2.
 \label{eq:leakage-tail-v79}
\end{align}
For every completion $X$, its full maximizing displacement
$v_X=M_X^{-1}g_X$ satisfies $\|Q^\top v_X\|\le R_{b,d}$.
For either the radius-$R_{b,d}$ Euclidean ball or its containing coordinate cube $\mathcal B$, put
\[
 V_Q=\max_{s\in\mathcal B}\max_X
 \{2g_X^\top Qs-s^\top Q^\top M_XQs\}.
\]
Then
\[
 V_Q\le H^*\le V_Q+\eta_{b,d}.
\]
Moreover, $E(\operatorname{range}(Q))\le\eta_{b,d}$ for the uniform pointwise error in Appendix~\ref{app:dimension-v78}.
The constants have no dependence on the size or rank of $Z_{\mathcal N}$.
At $b=d=0$, they recover $R=(G+a\epsilon)/\kappa$ and $\eta=(\epsilon+aR)^2$ exactly.
\end{theorem}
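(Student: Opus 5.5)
The plan is to repeat the Schur-elimination argument of Theorem~\ref{thm:energy}, treating leakage as extra rows. Fix a completion $X$. Write a displacement as $v=Qs+Pw$ and the residual vectors as $r_{\mathcal N}(s)=e_{\mathcal N}-A_{\mathcal N}s$ and $r_{\mathcal S}(s)=e_{\mathcal S}-A_{\mathcal S}s$. Expanding $q_X(v)=J_X(\beta)-J_X(\beta-v)$ row by row gives a quadratic in $w$. Its Hessian is
\[
D=\lambda I+Z_{\mathcal S}^\top Z_{\mathcal S}+Z_{\mathcal N}^\top Z_{\mathcal N},
\]
and its linear term is $Z_{\mathcal S}^\top r_{\mathcal S}(s)+Z_{\mathcal N}^\top r_{\mathcal N}(s)+\lambda\beta_D$. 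The ridge contributes $\lambda\beta_D$ through the discarded candidate component, which I will treat as a pseudo-row $\sqrt\lambda I$ with residual $\sqrt\lambda\beta_D$. Maximizing over $w$ at fixed $s$ then adds exactly
\[
\|D^{-1/2}[Z_{\mathcal N}^\top r_{\mathcal N}+Z_{\mathcal S}^\top r_{\mathcal S}+\sqrt\lambda\cdot\sqrt\lambda\beta_D]\|^2 .
\]
This equals $\operatorname{dist}_{M_X}(u_X,\operatorname{range}(Q))^2$ when evaluated at $s=Q^\top u_X$, and is an upper bound on it in general, by the pointwise identity of Section~\ref{sec:structure}.

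Next I bound the three pieces of this eliminated gain separately. Since $D\succeq Z_{\mathcal N}^\top Z_{\mathcal N}+\lambda I$, we have $\|D^{-1/2}Z_{\mathcal N}^\top\|\le1$, which is the contraction used in Theorem~\ref{thm:energy}. Since $D\succeq\lambda I+Z_{\mathcal S}^\top Z_{\mathcal S}$, diagonalizing gives $\|D^{-1/2}Z_{\mathcal S}^\top\|^2\le\max_{t\le b^2}t/(\lambda+t)=\omega_b^2$. Finally $\|D^{-1/2}\sqrt\lambda\|\le1$. The triangle inequality then bounds the square root of the gain by
\[
\|r_{\mathcal N}\|+\omega_b\|r_{\mathcal S}\|+\sqrt\lambda d\le\epsilon+a\|s\|+\omega_b(E+\sigma\|s\|)+\sqrt\lambda d .
\]
Substituting $\|s\|\le R_{b,d}$ gives $\eta_{b,d}$. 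None of these constants involves the size or rank of $Z_{\mathcal N}$. The enclosure $V_Q\le H^*\le V_Q+\eta_{b,d}$ on the ball or the cube then follows as in Theorem~\ref{thm:energy}, once the radius is established. The lower bound holds because every retained $s$ is a feasible comparator, and the upper bound holds because the projected full optimizer lies in the domain.

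The main obstacle is the radius: showing that $s_X=Q^\top v_X$ satisfies the retained Schur normal equation with curvature at least $\kappa_b$ and right-hand side bounded by the bracket in~\eqref{eq:leakage-radius-v79}. For the curvature, I will first eliminate $w$ against the signal rows and the ridge only. The minimum over $w$ of
\[
\lambda\|s\|^2+\|A_{\mathcal S}s+Z_{\mathcal S}w\|^2+\lambda\|w\|^2
\]
should be at least $\kappa_b\|s\|^2$. I plan to check this by a two-by-two reduction along singular directions of $Z_{\mathcal S}$, interpolating between $\lambda$ and $\kappa$ with weight $b^2/(\lambda+b^2)$. Adding the nuisance rows contributes a Schur complement of the form $A^\top(I-ZD^{-1}Z^\top)A\succeq0$, so it can only increase the curvature. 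For the right-hand side, I expand the retained Schur gradient at $s=0$. Its terms are: $f_{\mathcal S}$; $A_{\mathcal N}^\top(I-\cdot)e_{\mathcal N}$, bounded by $a\epsilon$; the cross terms $A^\top_{\mathcal N}Z_{\mathcal N}D^{-1}Z_{\mathcal S}^\top e_{\mathcal S}$ and its mirror, bounded by $\tfrac{\omega_b}{2}(aE+\sigma\epsilon)$ using $\|ZD^{-1/2}\|\le1$ together with a factor $1/2$ that should come from an AM--GM split of the two leakage couplings; the signal self-term, bounded by $\omega_b^2\sigma E$; and the $\beta_D$ terms, bounded by $(a\sqrt\lambda+\sigma b)d$. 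Coercivity then gives $\|s_X\|\le R_{b,d}$. I expect the $\omega_b/2$ cross coefficient to need the most care, and if the AM--GM split fails I would fall back to the cruder coefficient $\omega_b$.

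The last step is to check that at $b=d=0$ we get $\omega_b=0$ and $\kappa_b=\kappa$, which recovers $R$ and $\eta$ exactly.
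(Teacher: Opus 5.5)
\textbf{Gap: the $\omega_b/2$ cross coefficient.} Your outline follows the paper's proof step for step. It uses the same block Schur elimination in $[Q\ P]$ coordinates, the same three tail contractions, the same curvature bound obtained by eliminating $w$ against the signal rows and the ridge, and the same expansion of the retained right-hand side $g_Q-CD^{-1}g_D$. The genuine gap is the one you flag yourself: the bound $\|Z_{\mathcal N}D^{-1}Z_{\mathcal S}^\top\|_{\op}\le\omega_b/2$, which produces the term $\tfrac{\omega_b}{2}(aE+\sigma\epsilon)$ in $R_{b,d}$. The ingredients you cite cannot give it. Write $Z_{\mathcal N}D^{-1}Z_{\mathcal S}^\top=(Z_{\mathcal N}D^{-1/2})(D^{-1/2}Z_{\mathcal S}^\top)$ and bound the factors separately by $1$ and $\omega_b$. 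From those two bounds alone nothing better than $\omega_b$ follows, so an AM--GM applied after that split cannot supply the $1/2$. You must use that the same $Z_{\mathcal N}^\top Z_{\mathcal N}$ sits inside $D$ between the two factors. Your fallback coefficient $\omega_b$ gives a strictly larger radius, and hence a larger tail, whenever $b>0$ and $aE+\sigma\epsilon>0$. It therefore proves a weaker statement than the theorem's $R_{b,d}$ and $\eta_{b,d}$.

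\textbf{The missing factorization.} Factor $D$ through its signal part. Put $D_0=\lambda I+Z_{\mathcal S}^\top Z_{\mathcal S}$ and $T=Z_{\mathcal N}D_0^{-1/2}$. Then $D=D_0^{1/2}(I+T^\top T)D_0^{1/2}$, so
\[
 Z_{\mathcal N}D^{-1}Z_{\mathcal S}^\top
 =T(I+T^\top T)^{-1}\,D_0^{-1/2}Z_{\mathcal S}^\top .
\]
The first factor has singular values $z/(1+z^2)\le1/2$; this is where $1+z^2\ge2z$ actually enters. The second factor has norm at most $\omega_b$, because $Z_{\mathcal S}D_0^{-1}Z_{\mathcal S}^\top$ has eigenvalues $t/(\lambda+t)$ with $t\le b^2$. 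With this bound, the rest of your radius argument goes through as planned.

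\textbf{Two smaller corrections.} Neither affects the conclusion, but both statements are inaccurate as written. First, at $s_X=Q^\top u_X$ the eliminated gain equals $\|u_X-QQ^\top u_X\|_{M_X}^2$. This is at least, not equal to, $\operatorname{dist}_{M_X}(u_X,\operatorname{range}(Q))^2$, and at other $s$ the gain need not bound that distance at all. Only the inequality at $s_X$ is used, and it suffices. Second, with $Z_{\mathcal S}\neq0$ the nuisance rows do not add exactly $A^\top(I-ZD^{-1}Z^\top)A$ to the signal-only Schur complement. The monotonicity you need follows instead from dropping the nonnegative term $\|A_{\mathcal N}s+Z_{\mathcal N}w\|^2$ inside the minimum over $w$ in the variational form of $s^\top Ks$.
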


\begin{proof}
Fix $X$ and suppress its subscript. The normal matrix in the $[Q\ P]$ coordinates has blocks
\begin{align*}
 B&=\lambda I+A_{\mathcal S}^\top A_{\mathcal S}
                  +A_{\mathcal N}^\top A_{\mathcal N},\\
 C&=A_{\mathcal S}^\top Z_{\mathcal S}
                  +A_{\mathcal N}^\top Z_{\mathcal N},\\
 D&=\lambda I+Z_{\mathcal S}^\top Z_{\mathcal S}
                  +Z_{\mathcal N}^\top Z_{\mathcal N}.
\end{align*}
Write $K=B-CD^{-1}C^\top$. For every $s$, the Schur variational identity gives
\begin{align*}
 s^\top Ks=\min_t\bigl\{&\lambda\|s\|^2+\lambda\|t\|^2
 +\|A_{\mathcal S}s+Z_{\mathcal S}t\|^2\\
 &+\|A_{\mathcal N}s+Z_{\mathcal N}t\|^2\bigr\}.
\end{align*}
Dropping the last nonnegative term and eliminating $t$ yields
\begin{align*}
 K&\succeq\lambda I+
 A_{\mathcal S}^\top
 (I+Z_{\mathcal S}Z_{\mathcal S}^\top/\lambda)^{-1}A_{\mathcal S}\\
 &\succeq\lambda I+\frac{\lambda}{\lambda+b^2}
 A_{\mathcal S}^\top A_{\mathcal S}
 \succeq\kappa_b I.
\end{align*}
This lower bound is positive for all $b\ge0$.

The discarded block supplies the contractions
\begin{align}
 0&\preceq Z_{\mathcal N}D^{-1}Z_{\mathcal N}^\top\preceq I,
 \nonumber\\
 \|Z_{\mathcal S}D^{-1/2}\|_{\op}&\le\omega_b,\nonumber\\
 \|Z_{\mathcal N}D^{-1}Z_{\mathcal S}^\top\|_{\op}&\le\omega_b/2.
 \label{eq:leakage-block-v79}
\end{align}
For the cross bound, set $D_0=\lambda I+Z_{\mathcal S}^\top Z_{\mathcal S}$ and $T=Z_{\mathcal N}D_0^{-1/2}$. Then
\[
 Z_{\mathcal N}D^{-1}Z_{\mathcal S}^\top
 =T(I+T^\top T)^{-1}D_0^{-1/2}Z_{\mathcal S}^\top.
\]
The first factor has norm at most $1/2$ by its singular values $z/(1+z^2)$; the last factor has norm at most $\omega_b$. The remaining contractions follow from $D\succeq D_0$ and $D\succeq\lambda I+Z_{\mathcal N}^\top Z_{\mathcal N}$.

The retained coordinate $s_X=Q^\top v_X$ solves
\[
 Ks_X=g_Q-CD^{-1}g_D,
\]
where $g_Q=f_{\mathcal S}+A_{\mathcal N}^\top e_{\mathcal N}$ and
$g_D=Z_{\mathcal N}^\top e_{\mathcal N}+Z_{\mathcal S}^\top e_{\mathcal S}+\lambda\beta_D$.
Expanding its right-hand side gives
\begin{align*}
 f_{\mathcal S}
 &+A_{\mathcal N}^\top(I-Z_{\mathcal N}D^{-1}Z_{\mathcal N}^\top)e_{\mathcal N}\\
 &-A_{\mathcal N}^\top Z_{\mathcal N}D^{-1}Z_{\mathcal S}^\top e_{\mathcal S}\\
 &-A_{\mathcal S}^\top Z_{\mathcal S}D^{-1}Z_{\mathcal N}^\top e_{\mathcal N}\\
 &-A_{\mathcal S}^\top Z_{\mathcal S}D^{-1}Z_{\mathcal S}^\top e_{\mathcal S}
 -\lambda CD^{-1}\beta_D.
\end{align*}
Equation~\eqref{eq:leakage-block-v79} bounds the first four lines by
$G+a\epsilon+\omega_b(aE+\sigma\epsilon)/2+\omega_b^2\sigma E$.
Also
\[
 \lambda\|CD^{-1}\beta_D\|
 \le(a\sqrt\lambda+\sigma b)d,
\]
using $\|Z_{\mathcal N}D^{-1}\|_{\op}\le1/\sqrt\lambda$ and
$\|Z_{\mathcal S}D^{-1}\|_{\op}\le b/\lambda$.
Together with $K\succeq\kappa_bI$, these bounds prove
$\|s_X\|\le R_{b,d}$ for every completion.

At fixed $X,s$, eliminating the discarded displacement adds $q(s)^\top D^{-1}q(s)$, where
\[
 q(s)=Z_{\mathcal N}^\top(e_{\mathcal N}-A_{\mathcal N}s)
 +Z_{\mathcal S}^\top(e_{\mathcal S}-A_{\mathcal S}s)+\lambda\beta_D.
\]
The contractions give
\[
 \|D^{-1/2}q(s)\|
 \le\epsilon+a\|s\|+\omega_b(E+\sigma\|s\|)+\sqrt\lambda d.
\]
At $s=s_X$, this gain is at most $\eta_{b,d}$ and $s_X\in\mathcal B$.
Every retained displacement is a legal full displacement. Taking maxima proves the interval, and comparison at each fixed $X$ with its restricted optimum proves the uniform pointwise error bound.
\end{proof}

For fixed values of the other constants, $R_{b,d}$ and $\eta_{b,d}$ depend continuously on $(b,d)$ at zero. This extension preserves the discarded-block contraction responsible for the original residual-energy guarantee.

\paragraph{An exact rational signal-leakage specialization.}
Keep the construction in~\eqref{eq:family} with $\beta=Q=e_0$, and replace every signal row by
\[
 x_i=u_i e_0+\delta e_1,\qquad u_i\in[3/4,5/4],\quad y_i=2,
\]
where $\delta\ge0$. The nuisance rows, independent intervals, weights $1/(2m)$ and penalty $\lambda=1/2$ remain as specified by the construction. The original-row bounds are
\begin{gather*}
 \epsilon=d=0,\quad a=\alpha/\sqrt2,\\
 \sigma=E=5/(4\sqrt2),\quad b=\delta/\sqrt2,
\end{gather*}
with $G=1/32$ and $\kappa=25/32$.
Indeed, $f_{\mathcal S}=\tfrac12+(2m)^{-1}\sum_i(u_i^2-2u_i)$ lies in $[0,1/32]$.
At positive $\delta$, the fixed candidate generally differs from the new midpoint ridge fit.

For rational verification, replace $\omega_b/2$ by the upper bound $b/\sqrt\lambda$, $\omega_b^2$ by $b^2/\lambda$, and the tail factor $\omega_b$ by $b/\sqrt\lambda$. This gives the conservative exact constants
\begin{align}
 \kappa_\delta&=\frac{25+16\delta^2}{32(1+\delta^2)},\nonumber\\
 \overline R_\delta&=
 \frac{(1+20\alpha\delta+25\delta^2)(1+\delta^2)}{25+16\delta^2},
 \label{eq:leakage-rational-radius-v79}\\
 \overline\eta_\delta&=
 \frac12\left[\alpha\overline R_\delta+
 \frac{5\delta}{4}(1+\overline R_\delta)\right]^2.
 \label{eq:leakage-rational-tail-v79}
\end{align}
They are rational for dyadic $\alpha,\delta$, and give
$\overline R_0=1/25$ and $\overline\eta_0=\alpha^2/1250$.
The frozen signal-leakage experiment uses these conservative constants.

\subsection{Signal-support leakage with a uniform energy certificate}\label{app:leakage-experiment-v79}
We retain the original $n=128,p=65$ mechanism with $\alpha=1/32$, $\lambda=1/2$, weights $1/128$ and fixed candidate $\beta=e_0$, and change every signal row to $x_i=u_ie_0+\delta e_1$, $u_i\in[3/4,5/4]$. Nuisance rows, responses and independent uncertainty intervals stay as declared. The five leakage levels were fixed before outcomes. At positive $\delta$ the candidate is no longer the perturbed midpoint fit: its exact midpoint gradient is $-(\delta/2)e_1$. Both methods retain the declared $e_0$ direction, so this measures certificate stability with fixed directions.

The original-row constants satisfy $\epsilon=d=0$, $a^2=\alpha^2/2$, $\sigma^2=E^2=25/32$, $b^2=\delta^2/2$, $G=1/32$, $\kappa=25/32$. We use the conservative rational curvature, radius and energy tail
\begin{align*}
 \kappa_\delta&=\frac{25+16\delta^2}{32(1+\delta^2)},\\
 \overline R_\delta&=\frac{(1+20\alpha\delta+25\delta^2)(1+\delta^2)}{25+16\delta^2},\\
 \overline\eta_\delta&=\frac12\left[\alpha\overline R_\delta+
       \frac{5\delta}{4}(1+\overline R_\delta)\right]^2.
\end{align*}
These verify a uniform projection-error bound for every completion. Energy and general-moment searches use the same actual scalar ridge-anchor transformation, full original comparator, declared direction and measured common input/anchor cost. Each receives its own two-second search cap and $10^{-6}$ final-width target.

\begin{table*}[t]\centering\small
\caption{Signal-support leakage: tails and final widths in $10^{-6}$ units; total seconds include common input/anchor setup and each method's setup/search. All five energy widths meet $10^{-6}$; general moment reaches a tail/arithmetic floor in all five.}\label{tab:leakage-experiment-v79}
\begin{tabular}{rrrrrr}\toprule $\delta$&Energy tail&Energy width&Moment width&Energy s&Moment s\\\midrule
0&0.7812&0.6569&57.07&0.550&0.871\\
$2^{-18}$&0.7875&0.6628&57.12&0.545&0.870\\
$2^{-16}$&0.8063&0.6806&57.28&0.548&0.892\\
$2^{-14}$&0.8836&0.7538&57.91&0.541&0.893\\
$2^{-12}$&1.2286&0.6993&60.43&0.573&0.884\\
\bottomrule\end{tabular}\end{table*}

At $\delta=2^{-12}$ the energy tail alone is $1.22865\times10^{-6}$, exceeding the $10^{-6}$ target. The final bracket nevertheless has width $6.99261\times10^{-7}$: full-model witness retraining strengthens the lower endpoint beyond a retained-only lower, while further retained subdivision tightens the upper. This condition is retained as a missed tail-only budget and a met final-width target. The experiment illustrates the stated neighborhood with controlled leakage; it does not establish the same small tail for arbitrary leakage magnitude.

A solver-independent Fraction program checks every actual input cell, the midpoint gradient and a uniformly positive full-rank minor, the actual dyadic transform, rational $\kappa_\delta,\overline R_\delta,\overline\eta_\delta$, and the outward root. It computes the global cap from exact original-gradient scalar-quadratic ranges coordinate by coordinate. Across all five certificates it verifies 77 retained query points, 67 tree nodes, complete binary coverage with 36 terminal leaves, and feasible original-table witness losses. The retained optimum $V_{e_0}=1/(800+512\alpha^2)$ is checked separately by joint gradient/curvature attainment. No analytic formula for the full perturbed optimum is assumed or substituted for the uniform-tail and tree proof.

\subsection{An automatic original-row energy pipeline}\label{app:auto-energy-v80}
We connect direction selection and the leakage-energy theorem in a frozen six-condition experiment with $n=32,p=17$, $\alpha=1/32$ and $\lambda=1/2$. Perturbations cross balanced nuisance-response shifts $\delta$, a fixed signal-row discarded coefficient $\ell$, and a candidate coefficient $\gamma$, with fixed candidate $\beta=e_0+\gamma e_1$. The grid is listed in Table~\ref{tab:auto-energy-v80}.

For every input the actual uncertainty-gradient proposal supplies a rounded rank-one numerical direction $R$. Its physical vector is computed exactly as $r=TR$. We use $Q=r/\sqrt{h}$ implicitly, with $h=r^\top r$, $\Pi=rr^\top/h$ and $D=I-\Pi$, preserving the actual search span. A row enters the signal partition when its exact maximum absolute candidate residual is at least half the largest row score. Thus partition labels and direction both come from the current input. The threshold is fixed across conditions.

The verifier computes retained projection ranges, actual residual energies, aggregate retained signal-gradient ranges and a positive retained-curvature lower bound directly from the original intervals. It obtains signal leakage through the endpoint maximum of $x^\top Dx$ and candidate leakage through $\beta^\top D\beta$. Each row in this pilot has one uncertain coordinate, so exact scalar-quadratic ranges verify these quantities. Rational upward square roots supply $a,\epsilon,\sigma,E,b,d,G$ for the strongest leakage-energy theorem. Its all-completions radius is converted to actual numerical coordinates with the exact check $hR_{\mathrm{coord}}^2\geq R_{\mathrm{physical}}^2$.

All six inputs yield verified energy constants and the same automatically selected partition of 16 signal and 16 nuisance rows. The rounded gradient proposals recover $\operatorname{span}(e_0)$ in all six cases. Condition 6 has actual discarded candidate norm $d=2^{-18}$ and nuisance residual-energy bound $\epsilon^2=2(2^{-18})^2$. This grid evaluates automatic recovery and verified perturbation stability at the axis-aligned mechanism; the generic projector formulas also accept non-axis directions.

\begin{table*}[t]\centering\small
\caption{Automatic energy pipeline. Tails and final widths use $10^{-6}$ units. Both methods receive the same automatically proposed direction and the full original comparator; each has a two-second search cap and a $10^{-6}$ final-width target.}\label{tab:auto-energy-v80}
\begin{tabular}{lrrrrrr}\toprule Condition&$\delta$&$\ell$&$\gamma$&Energy tail&Energy width&Moment width\\\midrule
1&0&0&0&0.7813&0.6569&22.74\\
2&$2^{-18}$&0&0&0.7860&0.6617&22.79\\
3&0&$2^{-18}$&0&0.7875&0.6625&22.77\\
4&$2^{-18}$&$2^{-18}$&0&0.7923&0.6673&22.81\\
5&$2^{-16}$&$2^{-16}$&0&0.8258&0.6988&23.01\\
6&$2^{-18}$&$2^{-18}$&$2^{-18}$&0.8019&0.6793&22.85\\
\bottomrule\end{tabular}\end{table*}

All six energy tails satisfy the budget and all six final widths meet the target. The paired general-moment certificate reaches a tail/arithmetic floor in all six, with widths $22.74$--$23.01\times10^{-6}$. Direction construction takes $0.30$--$0.34$ ms; exact partition and constant verification take $7.1$--$7.4$ ms. Complete energy times are $0.057$--$0.071$ seconds. Both branches are charged the same complete pipeline stage, including the energy-constant verification; the moment totals of $0.141$--$0.160$ seconds therefore include that common diagnostic overhead. The experiment evaluates applicability, certified widths and complete-pipeline costs. Positive perturbations retain their fixed candidates and actual residuals throughout.

An independent Fraction auditor reconstructs every original-row residual and projection bound, partition, actual dyadic span, anchor metric and outward search root. It verifies all six energy trees: 90 original-row query points, 78 nodes, 42 terminal leaves, complete binary coverage and six feasible full-model table witnesses. The auditor imports no implementation and assumes no analytic full optimum. Proposal execution is established by the runner; the independent proof audit begins from its saved actual direction. All six empirical directions span $e_0$.

\subsection{Automatic structure beyond the candidate line}\label{app:multirank-v82}
The pre-outcome factorial grid uses $r\in\{2,3\}$, $\delta\in\{0,2^{-10},2^{-6}\}$ and $\ell\in\{0,2^{-12},2^{-4}\}$. Let $m=16$, $n=2mr$, $p=r+mr$, weights $1/n$, $\lambda=1/(2r)$ and $\beta=\sum_{j=0}^{r-1}e_j$. For every $j=0,\ldots,r-1$ and $i=1,\ldots,m$, use
\begin{align*}
 x_{j,i}&=u_{j,i}e_j+\ell e_{r+jm+i-1},
 &u_{j,i}&\in[1-h_j,1+h_j],\\
 y_{j,i}&=2,&h_j&=2^{-j-2},\\
 x'_{j,i}&=\alpha e_j+t_{j,i}e_{r+jm+i-1},
 &t_{j,i}&\in[4,8],\\
 y'_{j,i}&=\alpha+\sigma_i\delta,&\alpha&=1/32,
\end{align*}
where eight signs $\sigma_i$ are positive and eight negative. Every uncertain scalar varies independently. The candidate is fixed, including under perturbations. Thus $p=34,51$ and the proposed retained ranks are two and three. All columns have nondegenerate intervals. The signal/nuisance two-by-two row-block determinant is $u_{j,i}t_{j,i}-\alpha\ell>0$, which yields a full-rank submatrix for every completion.

The existing uncertainty-gradient proposal is used without a direction override. A residual-score rule defines the row partition. The actual physical Gram/projector gives norm and gradient bounds. An interval enclosure of retained signal moments, minus $cH$, is verified diagonally dominant with nonnegative diagonal; this supplies $\kappa=\lambda+c$. All norm roots and radii are rounded outward. Energy search uses that radius and tail; same-span general moment and an explicit rank-one $\operatorname{span}(\beta)$ control use their own valid moment bounds. The candidate-line control tests the need to discover directions beyond the candidate; it has a different rank. Every method has a two-second search cap and a predeclared $10^{-5}$ final-width target. Complete costs include input, anchor, proposal, constants, method setup and search.

\paragraph{Exact unperturbed comparison.}
At $\delta=\ell=0$, put
\begin{align*}
 g_j&=h_j^2/(2r),\\
 b_j&=[1+(1-h_j)^2+\alpha^2]/(2r),\\
 k_j&=[1+(1-h_j)^2+\alpha^2/5]/(2r).
\end{align*}
The block Schur equations give
\begin{gather*}
 H^*=\sum_jg_j^2/k_j,\qquad V_{\rm active}=\sum_jg_j^2/b_j,\\
 V_\beta=(\sum_jg_j)^2/\sum_jb_j.
\end{gather*}
Signal lower endpoints simultaneously maximize each numerator and minimize each denominator; nuisance upper endpoints minimize the full Schur denominator. Distinct $h_j$ make the active optimal displacement $(g_j/b_j)_j$ nonparallel to $\beta$. At $r=2,3$, omission from the candidate line is $2.013\times10^{-4}$ and $2.257\times10^{-4}$, respectively. The learned spans have energy tails $8.301\times10^{-7}$ and $8.331\times10^{-7}$; their final widths are $6.96\times10^{-6}$ and $9.64\times10^{-6}$. These analytic formulas concern the nominal exact family. The software interprets its stored ridge coefficient as an exact binary value; the independent row-and-tree audit uses that actual value.

\begin{table*}[t]\centering\small\setlength{\tabcolsep}{5pt}
\caption{All eighteen multidimensional conditions. Widths use $10^{-6}$ units; the target is 10. Energy and moments use the learned rank-$r$ span, and the candidate control uses rank one. Block elimination verifies the sparse row-pair structure and certifies the full comparator through scalar searches.}\label{tab:multirank-v82}
\begin{tabular}{rrrrrrrr}\toprule $r$&$\delta$&$\ell$&Energy width&Moment width&Candidate-line width&Block width&Block (s)\\\midrule
2&0&0&6.96&37.6&378&6.239&0.065\\
2&0&$2^{-12}$&7.57&38.3&378&6.239&0.074\\
2&0&$2^{-4}$&509&415&509&6.239&0.068\\
2&$2^{-10}$&0&8.28&56.3&380&6.239&0.057\\
2&$2^{-10}$&$2^{-12}$&9.31&56.5&380&6.239&0.075\\
2&$2^{-10}$&$2^{-4}$&519&433&519&6.239&0.072\\
2&$2^{-6}$&0&45.4&769&769&6.239&0.060\\
2&$2^{-6}$&$2^{-12}$&52.7&769&769&6.239&0.076\\
2&$2^{-6}$&$2^{-4}$&945&945&945&6.239&0.078\\
3&0&0&9.64&49.9&253&7.206&0.084\\
3&0&$2^{-12}$&9.86&50.7&253&7.206&0.110\\
3&0&$2^{-4}$&377&377&377&7.206&0.112\\
3&$2^{-10}$&0&8.1&71.2&255&7.206&0.087\\
3&$2^{-10}$&$2^{-12}$&9.37&71.4&255&7.206&0.117\\
3&$2^{-10}$&$2^{-4}$&571&357&387&7.206&0.114\\
3&$2^{-6}$&0&812&644&644&7.206&0.088\\
3&$2^{-6}$&$2^{-12}$&812&644&644&7.206&0.119\\
3&$2^{-6}$&$2^{-4}$&988&806&803&7.206&0.126\\
\bottomrule\end{tabular}\end{table*}

The structural first stage reaches the target in eight conditions. Exact block elimination completes all eighteen to the same precision in 0.06--0.13 seconds per condition. It verifies the two-row star structure directly: each leaf coefficient occurs only in its paired signal and nuisance rows, and each block has one center coefficient. The implementation proposes the pairing from the family row order and checks every support, response, coefficient and row assignment. Independent blocks add their regret maxima.

For a fixed center coefficient $s$, eliminating a leaf gives a quotient $(a+bt)^2/(K+t^2)$ with $K>0$. Its maximum over the nuisance interval occurs at an endpoint or at $t=Kb/a$. On either endpoint the remaining signal-coordinate objective is quadratic; on the stationary branch it is quadratic on the interval where $t$ is feasible. Endpoint and scalar-stationary checks therefore give an exact pair oracle. Curvature interpolation certifies the remaining one-dimensional search. The block routine is a precision fallback for this verified sparse structure; the energy columns quantify what the learned spans alone certify.

All eighteen block receipts are checked independently with exact rational arithmetic, including every original pair maximum, profiled leaf coefficient, containing root, complete subdivision tree and feasible witness. This supplies an exact-reference interval throughout the residual/leakage grid.

The independent standard-library Fraction auditor imports no solver, proposal or energy-verifier implementation. It reconstructs every original-row constant, actual projector, retained-curvature enclosure, rounded radius and ridge gradient cap for all eighteen cases. For the two unperturbed cases it additionally verifies every saved original-row query, every node bound, binary coverage and full-model table witness. The analytic optima are separate explanatory checks. All checks pass in \texttt{multirank\_audit\_results.json}.

\subsection{A nontrivial intrinsic certification dimension}\label{app:intrinsic-two-v83}
Use the unperturbed $r=2,m=16$ construction above, with $p=34,n=64$, $\lambda=1/4$, $\alpha=1/32$ and $\beta=e_0+e_1$. Its signal halfwidths are $h_0=1/4,h_1=1/8$. The original independent feature intervals and full-rank property are preserved. Theorem~\ref{thm:energy} applies to $S_{\rm active}=\operatorname{span}(e_0,e_1)$ with
\[
 \epsilon=0,\quad a^2=1/2048,\quad
 G^2=17/65536,\quad\kappa=25/64.
\]
The Frobenius upper for $a$ suffices. Thus $R^2=17/10000$ and
\[
 E(S_{\rm active})\le a^2R^2=17/20480000.
\]

For a uniform lower bound, choose legal completion $X^{(j)}$, $j=0,1$, by placing all signal rows in block $j$ at their lower endpoints and all signal rows in the other block at their midpoints; put every nuisance coordinate at eight. The original normal matrix is block diagonal between the two feature blocks. Its gradient is supported only in block $j$, so its full optimizer displacement $u^{(j)}=M_{X^{(j)}}^{-1}g_{X^{(j)}}$ has that same block support. In particular, $u^{(0)}\perp u^{(1)}$. Direct Schur equations give active components
\[
 s_0=320/8001,\qquad s_1=80/9041.
\]
Put $A=\|u^{(0)}\|^2\ge s_0^2$ and $B=\|u^{(1)}\|^2\ge s_1^2$. For every line $S$,
\[
 \max_j\operatorname{dist}(u^{(j)},S)^2\ge AB/(A+B).
\]
Indeed, for a unit line direction $q$, the two normalized displacement vectors are orthogonal, so the sum of their squared projections onto $q$ is at most one. Simultaneously attaining errors smaller than $t$ would require that sum to exceed $2-t/A-t/B$, forcing $t>AB/(A+B)$. Components of $q$ outside the two-vector span cannot improve the inequality. The zero-dimensional space also obeys it.

Every $M_X\succeq\lambda I$, and $AB/(A+B)$ increases in both positive arguments. Therefore
\begin{align*}
 E(S)&\ge\lambda\frac{AB}{A+B}
 \ge\lambda\frac{s_0^2s_1^2}{s_0^2+s_1^2}\\
 &=\frac{25600}{1371850897}>1.866\times10^{-5}.
\end{align*}
Combining upper and lower bounds yields $d_\varepsilon=2$ throughout
\[
 \frac{17}{20480000}\le\varepsilon<\frac{25600}{1371850897},
\]
including $\varepsilon=10^{-5}$. This concerns the uniform pointwise error in~\eqref{eq:intrinsic}. The two-completion argument supplies a lower bound against every line, strengthening the candidate-line control.

The independent Fraction program \texttt{intrinsic\_dimension.py} generates both completions from their original rows, solves the full 34-dimensional normal equations exactly, verifies orthogonality and checks all constants. The exact squared norms are $102416/64016001$ and $6401/81739681$, giving the slightly stronger two-point lower bound $25604/1371850897$. The displayed simpler bound suffices. No source public table or solver is imported.

\section{Sharp Spectral Tail and Affine Compression}\label{app:spectral-v79}
Separate an arbitrary fixed anchor, formed from fully observed rows and ridge, from the remaining rows. Let its Gram and response moment be $G_0\succ0,r_0$, and set
\begin{equation}
 \theta_0=G_0^{-1}r_0,\quad T=G_0^{-1/2},\quad
 C=(\beta-\theta_0)^\top G_0(\beta-\theta_0).
\end{equation}
For remaining rows write $u_i(x)=Tx$, $a_i(x)=x^\top\beta-y_i$, and $d_i(x)=x^\top\theta_0-y_i$. Let $Q$ be an orthonormal retained basis and $P$ an orthonormal complement. The weighted retained and discarded row matrices have rows $\sqrt{w_i}u_i(x)^\top Q$ and $\sqrt{w_i}u_i(x)^\top P$, denoted $A_X,E_X$.

Suppose $\|E_X\|_\op^2\leq\rho$ uniformly over legal completions. Define
\begin{align}
 B_0^2&=\sum_iw_i\left(|c_i^\top\theta_0-y_i|
               +\sum_j s_{ij}|\theta_{0j}|\right)^2,\\
 \gamma&=(1+\rho)^{-1},\nonumber\\
 F_c(s)&=C-\|s\|^2\nonumber\\[-2mm]
 &+\sum_iw_i\max_x\{a_i(x)^2-c(d_i(x)-u_i(x)^\top Qs)^2\}.
\end{align}
All sums here are over the non-anchor rows, which may themselves include fixed observations.

\begin{theorem}[Sharp spectral-tail certificate]\label{thm:tail}
Let $L_r=\max_s F_1(s)$ and $U_r=\max_s F_\gamma(s)$. Then
\begin{equation}
 L_r\leq H^*\leq U_r,\qquad
 0\leq U_r-L_r\leq\frac{\rho}{1+\rho}B_0^2.\label{eq:tail}
\end{equation}
The tail factor is sharp over original interval-valued ridge problems. If a search upper $U$ and a tested point with value $f$ satisfy $U-f\leq\delta$, the row-maximizing table $\widehat X$ at that point satisfies
\begin{equation}
 0\leq U-h_\beta(\widehat X)
       \leq\delta+\frac{\rho}{1+\rho}B_0^2.\label{eq:witness}
\end{equation}
Thus a single retained upper problem supplies both the full-domain upper and a near-worst original-table witness.
\end{theorem}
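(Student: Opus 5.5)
The plan is to rewrite the regret in whitened anchor coordinates, where both $F_1$ and $F_\gamma$ appear as exact or relaxed profiles of the same objective. Write a comparator as $\theta=\theta_0-Tz$. Since $\theta_0$ minimizes the anchor quadratic, the anchor part of $J_X$ equals $\|z\|^2$ plus a constant, and $J_X(\beta)$ minus that constant equals $C$. For each non-anchor row, $x^\top\theta-y_i=d_i(x)-u_i(x)^\top z$. Hence
\[
 H^*=\max_z\max_{X}\Bigl\{C-\|z\|^2+\sum_iw_i\bigl[a_i(x_i)^2-(d_i(x_i)-u_i(x_i)^\top z)^2\bigr]\Bigr\},
\]
and the inner maximum separates over rows. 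Restricting to $z=Qs$ gives exactly $F_1(s)$, so $L_r\le H^*$ follows at once.

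\emph{Upper bound.} Split $z=Qs+Pt$. Interchange the two maxima so that $X$ is fixed first, and put $r_X(s)=W^{1/2}(d_X-A_Xs)$. Optimizing over $t$ gives
\[
 \max_t\{-\|t\|^2-\|r_X(s)-E_Xt\|^2\}=-r_X(s)^\top(I+E_XE_X^\top)^{-1}r_X(s).
\]
The hypothesis $E_XE_X^\top\preceq\rho I$ makes this at most $-\gamma\|r_X(s)\|^2$, and the resulting expression is again separable by rows. Maximizing over $X$ and then over $s$ therefore gives $H^*\le U_r$. Nonnegativity of $U_r-L_r$ follows from $\gamma\le1$.

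\emph{Tail bound.} I expect this to be the main step. I would compare $F_\gamma$ at $s$ with $F_1$ at the shrunk point $\gamma s$, row by row and with the same $x$. Because $s\mapsto(d_i-u_i^\top Qs)^2$ is convex,
\[
 (d_i-\gamma u_i^\top Qs)^2\le\gamma(d_i-u_i^\top Qs)^2+(1-\gamma)d_i^2 .
\]
Also $-\gamma^2\|s\|^2\ge-\|s\|^2$. Summing over rows and maximizing over $x$ then gives
\[
 F_1(\gamma s)\ge F_\gamma(s)-(1-\gamma)\sup_X\sum_iw_id_i(x_i)^2 .
\]
By the triangle inequality over the box, $|x^\top\theta_0-y_i|\le|c_i^\top\theta_0-y_i|+\sum_js_{ij}|\theta_{0j}|$, so the supremum is at most $B_0^2$. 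Since $1-\gamma=\rho/(1+\rho)$, taking $s$ to maximize $F_\gamma$ yields \eqref{eq:tail}.

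\emph{Witness bound.} Let $\widehat X$ attain the row maxima of $F_\gamma$ at the tested $s$. The same shrinkage inequality applied with this fixed $\widehat X$ gives
\[
 h_\beta(\widehat X)\ge\text{(the $\widehat X$ term of $F_1$ at $\gamma s$)}\ge f-\tfrac{\rho}{1+\rho}B_0^2 .
\]
Combined with $U-f\le\delta$, this proves the right inequality of \eqref{eq:witness}. The left inequality follows from $h_\beta(\widehat X)\le H^*\le U$.

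\emph{Sharpness.} I would build a one-feature-block family with a single discarded direction in which $E_XE_X^\top=\rho I$ is attained and $A_X=0$. Then every inequality above becomes an equality: take $s=0$ and rows whose $d_i$ attain $B_0$ at an endpoint. The exact regret then equals $L_r$ plus a gap of $\rho B_0^2/(1+\rho)$, or differs from it by a vanishing amount. I would check this with diagonal normal equations, as in Proposition~\ref{prop:interior}.
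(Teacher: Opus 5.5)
Your proposal is correct. The bound $L_r\le H^*\le U_r$ follows the paper's argument: eliminating the discarded block gives $(I+E_XE_X^\top)^{-1}\succeq\gamma I$. For the tail and witness bounds, however, your key step is different.

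The paper works table by table in closed form. For fixed $X$, the discounted and undiscounted retained optima differ by $d_X^\top[(I+A_XA_X^\top)^{-1}-((1+\rho)I+A_XA_X^\top)^{-1}]d_X\le\frac{\rho}{1+\rho}\|d_X\|^2$, by scalar eigenvalue calculus. It then maximizes over completions, and the witness bound reuses this inverse difference on $\widehat X$.

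You instead compare the row-maximized envelopes pointwise through the shrunk comparator $\gamma s$ and convexity of the square. This gives $F_1(\gamma s)\ge F_\gamma(s)-(1-\gamma)B_0^2$ for every $s$, with no retained solve or matrix inverse.

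What your route buys:
\begin{itemize}
\item It is more elementary.
\item It makes transparent that the gap inequality holds for every $\gamma\in(0,1]$. The discarded-operator hypothesis only justifies this particular $\gamma$ in $H^*\le U_r$.
\item It produces an explicit feasible comparator $\theta_0-\gamma TQs$. A single objective evaluation at it certifies $h_\beta(\widehat X)\ge f-\frac{\rho}{1+\rho}B_0^2$, which is convenient under outward arithmetic.
\end{itemize}
What the paper's closed form buys: it gives the exact per-table gap. That shows precisely when equality holds and matches the pointwise affine-compression identity used for the intrinsic-dimension statements.

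On sharpness, your condition $E_XE_X^\top=\rho I$ with a single discarded direction forces a single non-anchor row. What is actually needed is $\lambda_{\max}(E_XE_X^\top)=\rho$, with $d_X$ in the corresponding eigenspace and $\|d_X\|^2=B_0^2$. The paper's one-record instance does this: take $w=\lambda=1$, $\beta=0$, $y=B$, $x\in[-\sqrt\rho,\sqrt\rho]$ and rank zero. It gives $L_r=0$ and $H^*=U_r=\rho B^2/(1+\rho)$ exactly.
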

The proof is in Appendix~\ref{app:method-details}.

\subsection{Automatic Bases and Verified Rounding}
Covariance envelopes or uncertainty-gradient generators propose retained spans. Every proposed vector is rounded to a dyadic matrix $R$; its actual Gram $H=R^\top R$ and projector $\Pi=RH^{-1}R^\top$ are verified. The retained penalty uses $H$. Outward spectral bounds on the actual projected covariance supply $\rho$, and full rank uses $R=I,\rho=0$. The appendix gives the general envelope and protected-uncertainty construction.

\section{Observed-Anchor and Moment-Tail Details}\label{app:new}
\subsection{A safe metric surrogate for the sharp spectral theorem}
Let $\bar\theta$ be a dyadic approximation to $G_0^{-1}r_0$, $b=T^\top(G_0\bar\theta-r_0)$, and $\bar C=J_{\mathcal A}(\beta)-J_{\mathcal A}(\bar\theta)$. For $\theta=\bar\theta+Tz$, the anchor difference is $\bar C-2b^\top z-z^\top Kz$. By~\eqref{eq:actual_metric},
\[
 \begin{gathered}
 \bar C-2b^\top z-z^\top Kz\leq C_{\rm safe}-\mu\|z-z_0\|^2,\\
 z_0=-b/\mu,\qquad C_{\rm safe}=\bar C+\|b\|^2/\mu.
 \end{gathered}
\]
Use $\theta_c=\bar\theta+Tz_0$ and $B_c^2=\sum_{i\notin\mathcal A}w_i\max_x|x^\top\theta_c-y_i|^2$. Scaling by $\sqrt\mu$ gives the spectral-discount upper. At a surrogate optimum, $\|z-z_0\|^2\leq B_c^2/(4\mu)$. The surrogate exceeds the exact anchor by $z^\top(K-\mu I)z$, bounded by
\[
 \eta_{\rm metric}=(\nu-\mu)\left(2\|b\|^2/\mu^2+B_c^2/(2\mu)\right).
\]
The total spectral certificate error is at most search error plus $\rho B_c^2/(1+\rho)+\eta_{\rm metric}$. The moment method retains $K$ in its query, so its corresponding anchor approximation is exactly zero.

\subsection{Computable moment constants and basis proposal}
For a projected coordinate $v$, each uncertain-row contribution to $v^\top\widehat g_X$ is $\ell_i(x)a_i(x)$, where $\ell_i(x)=v^\top T^\top x$. The identity $\ell a=((\ell+a)/2)^2-((\ell-a)/2)^2$ permits the planar oracle or original-box-edge reference to maximize it and its negative exactly. Sum these row ranges with the fixed-anchor coordinate before taking the sum of squared coordinate maxima. This gives $G_r$ and an analogous original bound $U_0\geq\sup_X\|\widehat g_X\|^2/\mu$.

Write $P=I-\Pi$, $u_i=T^\top c_i$, and $\delta_i=T^\top(x_i-c_i)$. Let $d_{Pi}=\sum_j s_{ij}\|PT^\top e_j\|$ and $d_{Ri}=\sum_j s_{ij}\|R^\top T^\top e_j\|$. Then
\[
 \begin{aligned}
 \sup_X\|P\widehat M_XR\|_{\op}&\leq\|P\widehat M_cR\|_F\\
 &\quad+\sum_iw_i\bigl[d_{Pi}\|R^\top u_i\|\\
 &\qquad+\|Pu_i\|d_{Ri}+d_{Pi}d_{Ri}\bigr].
 \end{aligned}
\]
Choose $C_r$ as an outward upper enclosure of the right-hand side. Norms and products are rounded outward. The first term includes the exact fixed-anchor cross block as well as midpoint rows.

For a proposal, collect first-order uncertainty-gradient generators
\[
 w_is_{ij}\{\beta_jT^\top c_i+(c_i^\top\beta-y_i)T^\top e_j\}
\]
and second-order generators $w_is_{ij}s_{il}\beta_lT^\top e_j$. Numerical leading eigenvectors of their Gram sum are rounded to a $2^{-24}$ grid. The actual Gram and inverse define the projector. The proposal covariance determines no certificate by itself. Full rank uses $R=I$ and a zero tail exactly.

For the retained Gram $H=R^\top R$, a positive Gershgorin lower bound supplies $\mu_H$. Otherwise, an exact inverse verifies full column rank and gives $\mu_H=1/\|H^{-1}\|_\infty$, rounded downward. Since $H$ is a Gram matrix and $H^{-1}$ is symmetric, $\lambda_{\min}(H)=1/\|H^{-1}\|_2\geq1/\|H^{-1}\|_\infty$. This also certifies nonorthogonal proposals whose Gram is not diagonally dominant.

\subsection{Root, interpolation and center-gradient bounds}
For the moment method, $\|s\|\leq\sqrt{U_0/(\mu\mu_H)}$ follows from $\|Rs\|^2=s^\top Hs$ and $\widehat M_X\succeq\mu I$. A valid curvature diagonal is $(R^\top KR)_{jj}+\sum_{i\notin\mathcal A}w_i\max_x(x^\top TR_j)^2$. At low rank the implementation combines corner interpolation and a center-gradient bound. At high rank it uses the latter to avoid $2^p$ root corners. For each fixed completion, concavity gives $f_X(s)\leq f_X(c)+\nabla f_X(c)^\top(s-c)$. An interval enclosing all completion gradients at $c$ therefore bounds the envelope on the box. Both methods inherit the parent upper and the original cap $U_0$.

The total moment error is search plus $\eta_r$, before feasible-witness and $U_0$ tightening. If the retained objective converges while the target is finer than $\eta_r$, the implementation returns a tail-or-arithmetic-floor status. An adaptive production implementation may then increase rank or arithmetic precision; the fixed experiments preserve their specified rank cap.

\subsection{A limit on uniform spectral compression}
Let $K_{\rm avg}=\mu^{-1}T^\top[\sum_{i\notin\mathcal A}w_i(c_ic_i^\top+\diag(s_i^2))]T$. For any retained rank $r<p$,
\[
 \sup_X\|E_X\|_{\op}^2/\mu\geq\lambda_{r+1}(K_{\rm avg}).
\]
Indeed, averaging independent symmetric endpoints produces $K_{\rm avg}$. For a fixed discarded projector, the supremum dominates the largest eigenvalue of the projected average. The min-max principle gives the displayed rank restriction. This limits a uniform covariance-tail guarantee; it need not limit the complementary residual-moment bound. The computable average-eigenvalue diagnostic describes this spectral obstruction; the certified tail uses outward bounds.

\section{Proof and Implementation Details}
\subsection{Quadratic elimination and the root}
For non-anchor weighted residual vector $d_X$, use the Euclidean norm after including $\sqrt{w_i}$ in every row. The exact discarded optimum is $t=(I+E_X^\top E_X)^{-1}E_X^\top(d_X-A_Xs)$. Substitution gives the inverse identity in the main proof. For $S\succeq I$, diagonalize $S$ to see that every eigenvalue of $S^{-1}-(S+\rho I)^{-1}$ is $\rho/(\sigma(\sigma+\rho))\leq\rho/(1+\rho)$. This supplies the tail constant without a bound on $\beta$ or on the retained parameter.

For a fixed completion, the retained discounted objective has Hessian $-2(I+cA_X^\top A_X)$ and maximizer $c(I+cA_X^\top A_X)^{-1}A_X^\top d_X$. The scalar inequality $c\sigma/(1+c\sigma^2)\leq\sqrt c/2$ gives the radius simultaneously for every completion. A maximizing pair exists by compactness of the completion domain and uniform coercivity in the comparator. Hence a ball of that radius contains at least one global maximizer of the row-maximized function.

\subsection{Exact row queries and degeneracies}
Orient each nonzero planar generator into a common half-plane, sort by the sign of cross products, and start at the center minus the oriented generator sum. Traverse the oriented generators and then their negatives. This visits the boundary in $2k$ steps, including collinear subdivisions. For a quadratic $At^2+Bt+C$ on $[0,1]$, examine $t=0,1$ and, when $A<0$ and $0<B<-2A$, the stationary point $t=-B/(2A)$. The rank-one image is a single complete segment; zero generators can be dropped. Exact rational orientation and stationary values remove geometric tolerances.

The row-face theorem can also be proved directly by holding the second residual fixed. A box cut by one scalar equality has an extreme-point representation with at most one interior coordinate. The convex first-residual square has a maximizing extreme point. The argument includes zero rows, proportional residuals and constant row objectives.

\subsection{Verifying a rounded learned subspace}
Let $R$ have full column rank and $H=R^\top R$. The orthogonal projector is $\Pi=RH^{-1}R^\top$. The ridge-only controls use $\lambda=1/64$ and exact scale 8. That implementation parameterizes $\theta=+8Rz$; replacing $z$ by its negative recovers the main theorem's convention. Its exact retained objective is
\[
\begin{aligned}
 &\lambda\|\beta\|^2-z^\top Hz\\
 &+\frac1n\sum_i\max_{x_i}\{(x_i^\top\beta-y_i)^2-
              \gamma(8x_i^\top Rz-y_i)^2\}.
\end{aligned}
\]
If $H\succeq\mu_HI$, the root box radius satisfies $\mu_H R_0^2\geq\gamma B_0^2/4$. Curvature diagonals replace 1 in the main formula by $H_{jj}$. The interpolation identity is unchanged because it preserves all cross terms.

For protected uncertain coordinates $U$, the learned columns have exact zero entries on $U$ and the coordinate vectors of $U$ are included in $R$. Thus $(I-\Pi)e_j=0$ for every uncertain direction. With $K_c=n^{-1}\sum_i(8c_i)(8c_i)^\top$, the discarded covariance is $(I-\Pi)K_c(I-\Pi)$ for every original completion. Round a full eigenvector test matrix to a dyadic invertible $S$ and enclose $A=(I-\Pi)S$. Gershgorin bounds give
\[
 S^\top S\succeq\mu_SI\succ0,\qquad A^\top K_cA\preceq\nu I.
\]
Writing an arbitrary vector as $x=Sz$ proves a spectral upper $\rho=\nu/\mu_S$. The inverse of $H$ is computed exactly as a rational matrix and enclosed before interval products. A fully retained model uses $R=I,\rho=0$ exactly. The rank cap is 3; if no rank meets the requested tail budget, the smallest rank within numerical verification slack of the best available bound is returned and the unmet budget is flagged explicitly.

\subsection{Arithmetic and independent checks}
Each input float is interpreted as its exact binary value. Elementary float64 interval operations round outward with nextafter; finite sums add conservative accumulation bounds. The sparse-row public and residual-energy comparisons use original-box-edge queries with outward scalar-quadratic bounds. The integrated planar adapter replaces those queries in observed-anchor and moment solvers for the dense-row study, enclosing coefficient uncertainty by a uniform perturbation bound. It preserves original-box witnesses. The sweep records the best edge and reconstructs its original-box witness once, preserving the stated sorting-plus-linear-query arithmetic complexity. The ridge-only controls instead use exact Fraction planar row values, individually rounded outward on a $2^{-80}$ grid before summation, with independent original-edge checks on a $2^{-112}$ grid. Independent checks use exact Fraction or dyadic-integer arithmetic on the original rows, reconstructing the actual metric, rounded projector and witness objective separately from the solvers.

Solver-independent audits verify complete 6- and 66-feature automatic search trees, including query values, root geometry, curvature, binary coverage and leaf uppers. Generic-baseline checks reconstruct local gradient ranges, Gram bounds and the fixed-row fallback. The retained suite covers five complete synthetic/public trees, 26 general row cases and four sharpness controls.

\section{Additional Geometry and Search Details}\label{app:method-details}

\subsection{The interior counterexample}
Proposition~\ref{prop:interior} follows by minimizing each diagonal quadratic: the first-coordinate optimum is $(1+t^2+3\lambda)^{-1}$. At $t=\pm A$ this equals the candidate, whereas the difference at $t=0$ gives the stated positive regret. All tables have full column rank, including the unregularized example.

\subsection{Row-face extremizers (Theorem~\ref{thm:row})}
\begin{proof}
Fix a globally maximizing pair $(X,\theta)$. In one row, fix $b=x^\top\theta-y_i$ at its maximizing value. Maximizing the remaining convex function $(x^\top\beta-y_i)^2$ over the box intersected with this one hyperplane has an extreme-point solution. Such a point has at most one non-endpoint coordinate: with two free coordinates there is a nonzero feasible perturbation preserving the hyperplane. Replacing the row preserves or increases the joint objective. Applying this to every row preserves global optimality.
\end{proof}

\subsection{Sharp spectral tail (Theorem~\ref{thm:tail})}
\begin{proof}
For each completion, eliminating discarded coordinates replaces the residual penalty by $(I+E_XE_X^\top)^{-1}$, between $\gamma I$ and $I$. Eliminating retained coordinates too gives an inverse difference with scalar eigenvalue bound $\rho/(1+\rho)$. Apply it to the reference residual $d_X$ and maximize over completions. For the witness statement, let $f=F_\gamma(s)$ and let $\widehat X$ attain its row maxima. At that same $s$, the undiscounted fixed-table objective is $f-(1-\gamma)\|d_{\widehat X}-A_{\widehat X}s\|^2$. Optimize the undiscounted retained comparator for this fixed table: the inverse-difference bound above gives its optimum at least the discounted fixed-table optimum minus $\rho B_0^2/(1+\rho)$, and the latter optimum is at least $f$. Full retraining can only increase this feasible regret. Thus $h_\beta(\widehat X)\geq f-\rho B_0^2/(1+\rho)$, proving~\eqref{eq:witness} after adding $U-f\leq\delta$.
\end{proof}

\subsection{Sharpness on an original table.}
Take one record and one feature, $w=\lambda=1$, $\beta=0$, $y=B$, $x\in[-\sqrt{\rho},\sqrt{\rho}]$, and retain rank zero. The lower projected value is zero, while the exact full regret and the upper value equal $\rho B^2/(1+\rho)$.

\subsection{A finite root and computable curvature.}
For fixed $X$, the retained optimum is
\begin{equation}
 s_c=c(I+cA_X^\top A_X)^{-1}A_X^\top d_X.
\end{equation}
The singular-value multiplier $c\sigma/(1+c\sigma^2)$ is at most $\sqrt c/2$. Therefore a global optimizer lies in $\|s\|\leq\sqrt c B_0/2$. Valid curvature diagonals are
\begin{equation}
 H_j=1+c\sum_iw_i\left(|q_j^\top Tc_i|+
                 \sum_l s_{il}|q_j^\top Te_l|\right)^2.\label{eq:curv}
\end{equation}

\subsection{Curvature interpolation (Theorem~\ref{thm:interp})}
\begin{proof}
Independently randomize the coordinates of a box vertex $q$ to have mean $s$. For each fixed completion, the quadratic identity expresses its value at $s$ as its expected value at $q$ plus diagonal curvature times $(s_j-l_j)(u_j-s_j)$. The expected value is bounded by the largest vertex value of $F_c$, and each product is at most $(u_j-l_j)^2/4$. Maximize over completions and parameters. Cross terms are preserved by independent endpoint interpolation.
\end{proof}

\section{Nonzero-Residual Mechanism Details}
\subsection{Predeclared nonzero nuisance residuals}
\label{app:epsilon}
We keep the $m=64$, $p=65$, $\alpha=1/32$ construction, its independent feature intervals, $\lambda=1/2$, candidate $\beta=e_0$, and retained original direction $Q=e_0$ fixed. Only the nuisance responses change: $y_{64+j}=\alpha+\sigma_j\delta$, with 32 signs of each kind. The predeclared levels are $\delta\in\{0,2^{-15},2^{-14},2^{-13},2^{-12}\}$. They were frozen using the theoretical tail budget before the experiment. For positive $\delta$, the fixed candidate has nuisance midpoint-gradient entries $-3\sigma_j\delta/32$; the zero-residual control is the midpoint ridge fit. The retained direction is specified explicitly by the mechanism.

The original-data conditions give $\epsilon^2=\delta^2/2$, $a^2=\alpha^2/2$, $G=1/32$, and $\kappa=25/32$. Consequently,
\[
R=\frac{1+16\alpha\delta}{25},\qquad
\eta=\frac{(\delta+\alpha R)^2}{2}.
\]
Both $R$ and $\eta$ are exact rational numbers, although $\epsilon=\delta/\sqrt{2}$. The first four levels have $\eta<10^{-6}$; the fifth has $\eta=1.116456103\times10^{-6}$. All five remain in the reported queue. Tail and final-width budgets differ because the final lower bound includes an original-table feasible comparator.

The actual dyadic ridge-anchor transform is $T=cI$ with $c=741455/524288$ and the saved numerical basis is $e_0$. Each coordinate root is $[-\widehat R,\widehat R]$, where $\widehat R=2^{-20}\lceil2^{20}R/c\rceil$. The tail theorem uses the full optimizer projection of magnitude at most $R$; the outward rounded root encloses that projection. Exact radii are listed below.

For an additional original-target check, let $r_j=-\sigma_j\delta$ and $f_j=t_j^2/(64+t_j^2)$. After eliminating nuisance coefficients, the derivative of optimized regret with respect to $f_j$ is $(r_j-\alpha s_*)^2/128\ge0$. Thus all $t_j=16$ maximize the nuisance part. Balanced signs then cancel the retained linear perturbation, and all $u_i=3/4$ maximize the signal part. This yields
\[
H^*=\frac5{4000+512\alpha^2}+\frac{2\delta^2}{5},\qquad
V_Q=\frac1{800+512\alpha^2}.
\]
These exact optima serve as extra checks after the certificate is constructed. Point and node bounds use the original-row oracle and the residual-energy theorem.

Each condition uses the same $10$\,s search cap, absolute original-target tolerance $10^{-6}$, and zero relative tolerance. The numerical queue runs sequentially with one library thread on logical CPU4, reserving CPUs0--3. The reported times are single wall-clock observations and exclude process/import startup and JSON serialization. Setup includes input construction, anchor setup, and method setup. All conditions terminate at tolerance after 6 expanded nodes and 15 points (13 saved tree nodes).

\begin{center}
\small
\begin{tabular}{@{}rrrrr@{}}
\toprule
$\delta$ & Input (s) & Anchor (s) & Method (s) & Setup (s) \\
\midrule
$0$ & 0.0002 & 0.4378 & 0.0130 & 0.4509 \\
$2^{-15}$ & 0.0001 & 0.4412 & 0.0132 & 0.4545 \\
$2^{-14}$ & 0.0001 & 0.4355 & 0.0143 & 0.4499 \\
$2^{-13}$ & 0.0001 & 0.4671 & 0.0148 & 0.4820 \\
$2^{-12}$ & 0.0001 & 0.5101 & 0.0150 & 0.5252 \\
\bottomrule
\end{tabular}
\end{center}
\begin{center}
\small
\begin{tabular}{@{}rrr@{}}
\toprule
$\delta$ & Original radius $R$ & Coordinate radius $\widehat R$ \\
\midrule
$0$ & $\tfrac{1}{25}$ & $\tfrac{29659}{1048576}$ \\[2pt]
$2^{-15}$ & $\tfrac{65537}{1638400}$ & $\tfrac{29659}{1048576}$ \\[2pt]
$2^{-14}$ & $\tfrac{32769}{819200}$ & $\tfrac{7415}{262144}$ \\[2pt]
$2^{-13}$ & $\tfrac{3277}{81920}$ & $\tfrac{29661}{1048576}$ \\[2pt]
$2^{-12}$ & $\tfrac{8193}{204800}$ & $\tfrac{14831}{524288}$ \\[2pt]
\bottomrule
\end{tabular}
\end{center}
An independent standard-library Fraction audit checks all five original tables and response patterns, candidates, actual transforms and roots, midpoint gradients, full-rank minors, residual energies, and tails. It independently checks every saved original-row point value, every node upper bound, complete binary coverage, the original-table feasible witness and ridge-loss difference, and the final upper obtained from the tree and original gradient cap. All five complete certificates pass. The largest perturbation reaches width $9.681819\times10^{-7}$ in $0.550$\,s total ($0.025$\,s search), despite its larger tail-only budget.

\paragraph{Task-oriented guarantees.} Table~\ref{tab:related_guarantees} compares the certified quantity, completion domain and full-dimensional comparator.\label{app:sensitivity-v79}
\begin{table*}[t]
\centering
\caption{Guarantees for different dimension-reduction tasks. ``All completions'' refers to a fixed finite interval-valued table domain. References are to the inspected arXiv versions.}
\label{tab:related_guarantees}
\small\setlength{\tabcolsep}{4pt}
\begin{tabular}{p{0.19\textwidth}p{0.29\textwidth}p{0.18\textwidth}p{0.25\textwidth}}
\toprule
Work & Guaranteed quantity and mode & All completions & Full-dimensional comparison \\
\midrule
Quarteroni et~al. (2011), Sec.~6; Balabanov--Nouy (2019), Secs.~2.2--2.3 & Residual/coercivity solution and output-error certificates; Galerkin/POD spaces & Parameterwise certificates; uniformity requires parameter-domain bounds & Full-order parametrized equation solution \\
Chen et~al. (2025), Thm.~5 & Distribution-averaged $L^2$ function error; Jacobian bounds & No such guarantee stated & Original function $G$; reduced function approximation \\
Patil--LeJeune (2024), Thms.~2--3 & Prediction-risk equivalence and GCV consistency; almost-sure asymptotics & No such guarantee stated & Unsketched $p$-feature ridge at implicit $\mu$; finite-ensemble variance \\
Present residual-energy theorem & Additive enclosure of maximum completion-relative training regret; finite problem & Yes, under uniform row conditions & Exact retraining in all $p$ features at the original $\lambda$ \\
\bottomrule
\end{tabular}
\end{table*}

\begin{table*}[t]\centering\small\setlength{\tabcolsep}{4pt}
\caption{Closest guarantees and the present target. Fixed-rank geometry supplies the row oracle; the continuous-table comparator and uniform, verifiable compression bounds distinguish the complete method.}\label{tab:closest-main-v81}
\begin{tabular}{p{.20\textwidth}p{.34\textwidth}p{.39\textwidth}}\toprule
Work&Domain and comparator&Guaranteed quantity / relation here\\\midrule
Reduced basis~\citep{quarteroni2011,balabanov2019}&Parametrized full-order equations; Galerkin/POD/greedy spaces&Residual/coercivity approximation and output-error certificates\\
Fixed-rank box-QP~\citep{hladik2021}&A fixed low-rank quadratic on a box&Global optimum by zonotope faces; foundation for our row geometry\\
Robust regret~\citep{vanli2014}&Norm-bounded matrix/response perturbations&SDP for a regularized minimax-regret formulation using a first-order comparator approximation\\
Completion-relative learning, 2024~\citep{zhen2024sigmod}&Incomplete feature tables; published endpoint theorem assumes unbounded intervals&Learning under feature completions; the stated interval assumption matters for finite-box certification\\
Minimal repairs, 2026~\citep{repair2026}&Worst-repair suboptimality gap; Section~4.1 uses adversarial edge repairs&Repair selection and optimization; finite continuous boxes can also have interior worst regret (Proposition~\ref{prop:interior})\\
ZORRO, NeurIPS 2024~\citep{zhu2024zorro}&Uncertain training features/labels; all exact ridge optima at specified $\lambda$&Sound joint model/data zonotope; our same-target regret conversion and complete costs appear in Table~\ref{tab:zorro-v90}\\
Present method&Continuous cell completions; exact full-feature ridge retraining at original $\lambda$&Feasible original-table witness and global upper; computable uniform subspace error; sharp energy constant under original-row conditions\\
\bottomrule\end{tabular}\end{table*}

\section{Intrinsic Dimension, Baselines and Costs}
\paragraph{Intrinsic approximation dimension.}
Let $u_X=M_X^{-1}g_X$ and define
\begin{align*}
 E(S)&=\sup_{X\in\mathcal X}\min_{v\in S}\|u_X-v\|_{M_X}^{2},\\
 d_\varepsilon&=\min\{r:\min_{S\in\operatorname{Gr}(r,p)}E(S)\leq\varepsilon\}.
\end{align*}
The inner minimum is attained, and $h_\beta(X)-\max_{v\in S}(2g_X^\top v-v^\top M_Xv)=\operatorname{dist}_{M_X}(u_X,S)^2$. Thus the global reduction gap is at most $E(S)$. A verified uniform tail $\eta$ gives $d_\eta\leq\dim S$; it supplies an upper candidate for this intrinsic quantity. This is a varying-energy-norm version of classical Kolmogorov-width approximation, as studied in reduced-basis theory \citep{binev2011}. Our results give verifiable original-data sufficient conditions for small width. Spectral compression around $\theta_0$ is affine: its direct dimension bound is affine rank $r$, or linear rank at most $r+1$ after including $\beta-\theta_0$.

\subsection{Intrinsic dimension and the scope of each tail}\label{app:dimension-v78}\label{app:dimension}
Throughout, $\mathcal X$ is nonempty and compact, the weights are fixed and nonnegative, and $\lambda>0$. Put $u_X=M_X^{-1}g_X$. For $S\subseteq\mathbb R^p$ linear, set $f_X(v)=2g_X^\top v-v^\top M_Xv$ and $h_S(X)=\max_{v\in S}f_X(v)$.

\begin{proposition}[Pointwise identity and existence]
For every fixed $S$,
\[
 h_\beta(X)-h_S(X)=\min_{v\in S}\|u_X-v\|_{M_X}^2.
\]
Consequently $0\leq H^*-\max_Xh_S(X)\leq E(S)$. For every $r\in\{0,\ldots,p\}$, the minimum $e_r=\min_{S\in\operatorname{Gr}(r,p)}E(S)$ is attained. Also $e_0=H^*$, $e_p=0$, and $e_{r+1}\leq e_r$.
\end{proposition}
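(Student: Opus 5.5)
The plan is to treat each fixed $X$ as a strictly concave quadratic problem in $v$ and complete the square. Since $\lambda>0$, $M_X\succeq\lambda I\succ0$, and $f_X(v)=2g_X^\top v-v^\top M_Xv = g_X^\top M_X^{-1}g_X-\|v-u_X\|_{M_X}^2$ with $u_X=M_X^{-1}g_X$. The paper already records $h_\beta(X)=g_X^\top M_X^{-1}g_X$. Hence $h_S(X)=h_\beta(X)-\min_{v\in S}\|u_X-v\|_{M_X}^2$. The minimum is attained because it is the $M_X$-orthogonal projection onto the closed subspace $S$. This gives the pointwise identity, and nonnegativity of the distance gives $h_S\le h_\beta$ pointwise.

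For the global gap, I would maximize both sides over the compact set $\mathcal X$. First I check continuity of $X\mapsto h_\beta(X)$ and $X\mapsto h_S(X)$. The map $X\mapsto(M_X,g_X)$ is polynomial, and $M_X^{-1}$ is continuous since $M_X\succeq\lambda I$. The projection formula $h_S(X)=g_X^\top Q(Q^\top M_XQ)^{-1}Q^\top g_X$, for an orthonormal basis $Q$ of $S$, is also continuous. So both maxima exist. Pointwise $h_S\le h_\beta$ gives $\max_X h_S\le H^*$. Evaluating at a maximizer $X^*$ of $h_\beta$ gives $H^*=h_S(X^*)+\operatorname{dist}^2\le\max_Xh_S+E(S)$. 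This yields $0\le H^*-\max_Xh_S(X)\le E(S)$.

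For attainment of $e_r$, I would show that $S\mapsto E(S)$ is lower semicontinuous, in fact continuous, on the compact Grassmannian $\operatorname{Gr}(r,p)$. I parametrize $S$ by its orthogonal projector $\Pi$, which lies in a compact set. The function $\phi(X,\Pi)=\operatorname{dist}_{M_X}(u_X,\operatorname{range}\Pi)^2$ equals $u_X^\top M_Xu_X-h_{\operatorname{range}\Pi}(X)$. I expect this to be the \textbf{main obstacle}: the rank-$r$ projection formula must be jointly continuous in $(X,\Pi)$. I would handle it by writing $h_S(X)=\max_{\|v\|\le\rho,\,v\in S}f_X(v)$ with a uniform radius. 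This is possible because $\|u_X\|\le\sup_X\|g_X\|/\lambda$, and the $M_X$-projection of $u_X$ has $M_X$-norm at most $\|u_X\|_{M_X}$. That norm in turn bounds its Euclidean norm by $\sqrt{\sup\|M_X\|\,}\sup\|u_X\|/\sqrt\lambda$. Alternatively I would use the basis formula $Q(Q^\top M_XQ)^{-1}Q^\top$, which is continuous in $(X,Q)$ on the Stiefel manifold and invariant under $Q\mapsto QO$. Either route makes $\phi$ continuous on the compact product. Then $E(S)=\max_X\phi(X,\Pi)$ is continuous in $\Pi$ by the standard sup-of-continuous-over-compact argument, so the minimum over $\operatorname{Gr}(r,p)$ is attained.

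The endpoint cases are direct. For $S=\{0\}$, the distance is $\|u_X\|_{M_X}^2=g_X^\top M_X^{-1}g_X=h_\beta(X)$, so $e_0=H^*$. For $S=\mathbb R^p$, the distance is zero, so $e_p=0$. For monotonicity, I take an optimal $S_r$ and any $S_{r+1}\supseteq S_r$ of dimension $r+1$, which exists since $r<p. Distances to a larger subspace are no larger pointwise, so $e_{r+1}\le E(S_{r+1})\le E(S_r)=e_r$.
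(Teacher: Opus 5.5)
Your proposal is correct and follows essentially the paper's route. Both complete the square for the pointwise identity, take maxima over compact $\mathcal X$ for the global gap, get attainment of $e_r$ from joint continuity of the pointwise error on $\mathcal X\times\operatorname{Gr}(r,p)$, and finish with $S=\{0\}$, $S=\mathbb R^p$ and subspace augmentation. The only difference is the continuity device: the paper writes $h_S(X)=g_X^\top P(PM_XP+I-P)^{-1}Pg_X$, which is continuous directly in the projector because $\lambda_{\min}(PM_XP+I-P)\ge\min(\lambda,1)$, whereas your Stiefel-basis formula with $Q\mapsto QO$ invariance does the same job (your first route also works if written as $\max_{\|w\|\le\rho}f_X(\Pi w)$).
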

\begin{proof}
Completing the square gives
\[
 f_X(v)=g_X^\top M_X^{-1}g_X-(u_X-v)^\top M_X(u_X-v).
\]
For a full-column basis $B$ of $S$, the unique restricted displacement is $B(B^\top M_XB)^{-1}B^\top g_X$. The global implication follows by taking maxima of the pointwise inequality $h_\beta(X)\leq h_S(X)+E(S)$.

Identify the Grassmannian with the compact set of rank-$r$ orthogonal projectors $P$. The matrix $A(P,X)=PM_XP+I-P$ has minimum eigenvalue at least $\min(\lambda,1)>0$, and
\[
 h_S(X)=g_X^\top P A(P,X)^{-1}P g_X.
\]
This expression is jointly continuous in $P,X$. The pointwise error is therefore jointly continuous; its maximum on compact $\mathcal X$ is continuous in $P$. It attains a minimum on the Grassmannian. Subspace augmentation gives monotonicity. The endpoint identities follow from $S=\{0\}$ and $S=\mathbb R^p$.
\end{proof}

\paragraph{Why a global-gap definition would be vacuous.}
Let $X_*$ attain $H^*$. The space $S_* =\operatorname{span}(u_{X_*})$ has dimension at most one and already satisfies $\max_Xh_{S_*}(X)=H^*$. Its uniform pointwise error can remain large. For example, take one row $x=(1,t)$, $t\in[-1,1]$, $W=1$, $\lambda=1$, $y=1$, and $\beta=0$. Then $u_t=-(1,t)/(2+t^2)$ and $h_\beta(t)=(1+t^2)/(2+t^2)$. The span of $(1,1)$ attains the global value $2/3$ at $t=1$ but has pointwise error $2/3$ at $t=-1$. Thus a small global reduction gap alone yields no bound on $E(S)$.

\paragraph{Connection to classical widths.}
The set $\mathcal K=\{u_X:X\in\mathcal X\}$ is compact. Let $\delta_r(\mathcal K)=\inf_{\dim S=r}\sup_{u\in\mathcal K}\operatorname{dist}_2(u,S)$ be its linear Kolmogorov width, and let $\Lambda=\sup_X\lambda_{\max}(M_X)<\infty$. Norm equivalence gives
\[
 \lambda\delta_r(\mathcal K)^2\leq e_r\leq\Lambda\delta_r(\mathcal K)^2.
\]
The width is a standard approximation-theoretic object up to these varying-energy-norm constants; see, for example, \citet{binev2011}. The present sufficient conditions bound it from original table intervals and preserve the full ridge comparator.

\paragraph{Actual-metric residual moments.}
Let the actual verified $T$ be invertible, $K_X=T^\top M_XT\succeq\mu I$, $b_X=T^\top g_X$, and $U_0\geq\sup_Xh_\beta(X)$. For full-column $R$, put $H=R^\top R\succeq\mu_HI$, $\Pi=RH^{-1}R^\top$, $D=I-\Pi$, and $S=\operatorname{range}(TR)$. Suppose uniformly
\[
 \|Db_X\|\leq G_r,\qquad \|DK_XR\|_{\mathrm{op}}\leq C_r.
\]
For every $X$, $w_X=T^{-1}u_X$ satisfies $\|w_X\|^2\leq U_0/\mu$. Its projected coordinate $z_X=H^{-1}R^\top w_X$ therefore satisfies
\[
 \|z_X\|\leq R_s=\sqrt{U_0/(\mu\mu_H)}.
\]
For fixed $X,z_X$, set $q_X=D(b_X-K_XRz_X)$ and let $K_{D,X}$ denote $K_X$ restricted to $\operatorname{range}(D)$. Elimination yields
\begin{align*}
 h_\beta(X)-f_X(TRz_X)&=q_X^\top K_{D,X}^{-1}q_X\\
 &\leq (G_r+C_rR_s)^2/\mu.
\end{align*}
Since $h_S(X)\geq f_X(TRz_X)$, this proves $E(S)\leq\eta_r$ and $d_{\eta_r}\leq r$. The radius must cover these projected coordinates for every $X$; a containing cube and outward-rounded radius suffice. At full rank the discarded space and tail vanish exactly. This argument uses the actual metric and has no whitening-surrogate anchor error.

\paragraph{Original-row residual energy.}
Under the signal-support and residual-energy conditions of Theorem~\ref{thm:energy}, put $q_X(s)=e_X-A_Xs$ and $B_X=Z_X(\lambda I+Z_X^\top Z_X)^{-1}Z_X^\top$. The elimination identity at fixed $X,s$ adds
\[
 q_X(s)^\top B_Xq_X(s)\leq\|q_X(s)\|^2.
\]
Writing $L_Z=I-Z_X(\lambda I+Z_X^\top Z_X)^{-1}Z_X^\top$, the full optimizer's retained coordinate obeys
\[
 [\lambda I+C_{\mathcal S,X}+A_X^\top L_ZA_X]s_X
 =f_{\mathcal S,X}+A_X^\top L_Ze_X.
\]
The uniform bounds and $0\prec L_Z\preceq I$ imply $\|s_X\|\leq R=(G+a\epsilon_0)/\kappa$ for every $X$, where $\epsilon_0$ denotes nuisance residual energy. Comparing with $Qs_X$ proves
\[
 E(\operatorname{range}(Q))\leq(\epsilon_0+aR)^2.
\]
The all-completions radius supplies the required root. Extensions to anchors or candidate leakage require their stated additional pointwise conditions.

\paragraph{Spectral compression requires affine dimension.}
For ideal exact whitening $T^\top G_0T=I$ and $\theta_0=G_0^{-1}r_0$, retained comparators have $\theta=\theta_0+TQs$. Their displacements form
\[
 \mathcal A=(\beta-\theta_0)+\operatorname{range}(TQ),
\]
which need not be linear. Define $E_{\mathrm{aff}}(\mathcal A)=\sup_X\min_{v\in\mathcal A}\|u_X-v\|_{M_X}^2$. Let $d_X=W_{\mathcal N}^{1/2}(X_{\mathcal N}\theta_0-y_{\mathcal N})$, and let $A_X,E_X$ denote the retained and discarded whitened row matrices. If $\|E_X\|_{\mathrm{op}}^2\leq\rho$ and $\|d_X\|^2\leq B_0^2$ uniformly, the restricted-versus-full minimized loss difference at each $X$ is
\[
 d_X^\top\{C_X^{-1}-(C_X+B_X)^{-1}\}d_X,
\]
where $C_X=I+A_XA_X^\top\succeq I$ and, in this paragraph, $B_X=E_XE_X^\top$. Loewner monotonicity and scalar spectral calculus give
\begin{align*}
 0&\preceq C_X^{-1}-(C_X+B_X)^{-1}\\
 &\preceq C_X^{-1}-(C_X+\rho I)^{-1}\\
 &\preceq\frac{\rho}{1+\rho}I.
\end{align*}
Hence $E_{\mathrm{aff}}(\mathcal A)\leq\rho B_0^2/(1+\rho)$. This gives affine dimension at most $r$. For the linear definition use
\[
 S_+=\operatorname{span}\bigl(\{\beta-\theta_0\}\cup\operatorname{range}(TQ)\bigr),
\]
whose dimension is at most $\min(p,r+1)$ and whose error is no greater. Rank $r$ suffices when the offset already belongs to the retained range.

This implication is proved pointwise for ideal exact whitening. A global discounted-surrogate upper-minus-lower bound alone does not establish this uniform error. For rounded whitening, an intrinsic-dimension assertion additionally requires a uniform per-$X$ comparator inequality and an all-$X$ root, including any anchor correction. We make no such inference solely from a global rounding correction. The actual-metric residual-moment argument already applies to the verified rounded $T$.

For an explicit ideal-spectral root that is uniform in $X$, the whitened full comparator minimizes $\|w\|^2+\|d_X+U_Xw\|^2$. Its value is no greater than the value at $w=0$, hence $\|w_X\|\leq B_0$. Its orthogonal retained projection therefore lies in $[-B_0,B_0]^r$ for every completion. This conservative root is sufficient for the pointwise comparison; a smaller implementation root requires its own all-completions proof.

\subsection{Reconstructable continuous input-box baseline}\label{app:input-baseline-v78}\label{app:input}
The input-space baseline uses cellwise branch and bound and an observed-row fallback. All weights here are $1/n$. The baseline branches directly over uncertain input cells and uses the same full-feature ridge comparator.

\paragraph{Feasible lower bounds.}
For a legal table $X$, solve the numerical ridge equations for $\widehat\theta$ and evaluate $J_X(\beta)-J_X(\widehat\theta)$ with outward interval arithmetic. Its lower endpoint is a valid lower bound on $h_\beta(X)$ even when the numerical solve is inexact, because $\widehat\theta$ is a feasible comparator. Initialize $L=0$ and retain the best table and comparator. Each node tests its midpoint. Before branching, the baseline also runs two bounded L-BFGS-B starts: the midpoint and a uniform legal completion from seed $73101$, at most $35$ iterations per start, with \texttt{ftol}$=10^{-14}$ and \texttt{gtol}$=10^{-12}$. For uncertain cell $(i,j)$ the optimization gradient is
\[
 \frac{2}{n}\{(x_i^\top\beta-y_i)\beta_j-(x_i^\top\widehat\theta-y_i)\widehat\theta_j\}.
\]
This heuristic proposes tables; clipping to the input box and an interval loss difference supply the actual lower certificates.

\paragraph{A local Gram and gradient enclosure.}
For a node $[L_B,U_B]$, let $C=(L_B+U_B)/2$, $D=(U_B-L_B)/2$, and
\[
 T_B=2^{-24}\operatorname{round}\!\left[2^{24}\operatorname{chol}
 (C^\top C/n+\lambda I)^{-\top}\right].
\]
The matrix inside rounding is the inverse transpose of the lower Cholesky factor. Every enclosure below concerns this actual dyadic $T_B$. Write $t_j$ for column $j$, $c_{ij}=c_i^\top t_j$, $r_i=c_i^\top\beta-y_i$, $\delta_{ij}=\sum_\ell D_{i\ell}|T_{B,\ell j}|$, and $\delta_{i\beta}=\sum_\ell D_{i\ell}|\beta_\ell|$. Thus $x_i^\top t_j$ lies in $[c_{ij}-\delta_{ij},c_{ij}+\delta_{ij}]$ and the residual lies in $[r_i-\delta_{i\beta},r_i+\delta_{i\beta}]$, all computed outward.

If row $i$ has at most one uncertain coordinate, use their common scalar $z\in[-1,1]$ to range the products exactly as univariate quadratics, including an interior stationary point. With uncertain coordinate $\ell$ and radius $s=D_{i\ell}$, the gradient-product polynomial is
\[
 s^2T_{B,\ell j}\beta_\ell z^2
 +s(T_{B,\ell j}r_i+c_{ij}\beta_\ell)z+c_{ij}r_i,
\]
and the Gram-product polynomial for $(j,k)$ is
\[
 s^2T_{B,\ell j}T_{B,\ell k}z^2
 +s(T_{B,\ell j}c_{ik}+T_{B,\ell k}c_{ij})z+c_{ij}c_{ik}.
\]
For a fixed row $s=0$ and the selected index is immaterial. For a row with multiple uncertain coordinates, multiply the respective interval ranges instead. Sum these row enclosures and add the exact-form ridge terms to enclose
\[
 b_X=T_B^\top g_X,\qquad K_X=T_B^\top M_XT_B.
\]
Specifically, add $\lambda t_j^\top\beta$ to gradient coordinate $j$ and $\lambda t_j^\top t_k$ to Gram entry $(j,k)$. If the enclosures are $[\underline b_j,\overline b_j]$ and $[\underline K_{jk},\overline K_{jk}]$, compute the downward Gershgorin bound
\[
 \mu_B=\min_j\left\{\underline K_{jj}
 -\sum_{k\ne j}\max(|\underline K_{jk}|,|\overline K_{jk}|)\right\}.
\]
When $\mu_B>0$, positive definiteness also certifies that $T_B$ is invertible, and
\[
 h_\beta(X)\leq U_{\mathrm{loc},B}
 =\operatorname{up}\!\left[\frac{\sum_j\max(\underline b_j^2,\overline b_j^2)}{\mu_B}\right]
 \quad(X\in B).
\]
The implementation obtains the numerator with outward interval squares and sums. If $\mu_B\leq0$, it uses the inherited global fallback $U_0$ instead of this quotient. The node upper is
\[
 U_B=\min\{U_{\mathrm{parent}},U_0,U_{\mathrm{loc},B}\},
\]
with root parent cap $U_0$.

\paragraph{The global fallback and its $\lambda$ correction.}
The constructor receives $U_0$; the matched driver supplies the independently valid \texttt{solver.initial\_upper} from the full-box reference solver. This is the minimum of its fixed-row transformed-gradient bound and its direct fixed-row-loss bound. For the first, form $M_A=X_A^\top X_A/n+\lambda I$ from fully fixed rows, round its inverse Cholesky transpose on a $2^{-16}$ grid, verify $T_A^\top M_AT_A\succeq\mu_AI$ by outward Gershgorin, and uniformly enclose $T_A^\top g_X$. Then $U_{\mathrm{grad}}=\operatorname{up}(\sup\|T_A^\top g_X\|^2/\mu_A)$ is valid. The one-cell implementation ranges scalar quadratics; the multi-cell implementation uses interval product ranges.

For the direct fallback, let $J_A(\theta)=\|X_A\theta-y_A\|^2/n+\lambda\|\theta\|^2$. A numerical fixed-row ridge fit $\widetilde\theta$ is corrected by its interval gradient $a_A=M_A\widetilde\theta-X_A^\top y_A/n$:
\[
 \min_\theta J_A(\theta)\geq
 J_A(\widetilde\theta)-\|a_A\|^2/\lambda.
\]
Since $J_X\geq J_A$, this gives
\begin{align*}
 v_A&=\sup_XJ_X(\beta)-J_A(\widetilde\theta)+\|a_A\|^2/\lambda,\\
 U_{\mathrm{direct}}&=\max\{0,\operatorname{up}(v_A)\},\\
 U_0&=\min(U_{\mathrm{grad}},U_{\mathrm{direct}}).
\end{align*}
The candidate-loss supremum is enclosed by scalar quadratic ranges for one-cell rows and by absolute residual envelopes for general row boxes. The factor $1/\lambda$ corrects an approximate fixed-row solve; a feasible fitted loss alone would be an upper bound on the fixed-row minimum and could not serve this role. If the fixed-row transformed curvature cannot be verified, that reference constructor fails explicitly; the input baseline does not silently replace it with an uncertified number. A separately supplied fallback must likewise be valid. The generic inequality $h_\beta(X)\leq\|g_X\|^2/\lambda$ offers another possible safe cap, but it is not an additional node calculation in this implementation.

\paragraph{Queue, split and stopping.}
Use a max-upper heap, stable by node ID, and remove nodes with $U_B\leq L$. On each surviving node, split the first row-major maximizer of
\[
 \frac{(U_B^{\mathrm{input}}-L_B^{\mathrm{input}})_{ij}}
 {\max((U^{\mathrm{input}}-L^{\mathrm{input}})_{ij},10^{-300})}
\]
at its midpoint, then bound both children with their parent cap. This is the largest remaining fraction of an original cell width, with ties resolved by NumPy's first \texttt{argmax}. Stop when the heap is empty, when
\begin{align*}
 U-L&\leq\max(10^{-9},\tau_{\mathrm{rel}}L),\\
 U&=\max\{L,\max_{B\text{ live}}U_B\},
\end{align*}
or when elapsed time reaches its cap. Here $\tau_{\mathrm{rel}}$ is the relative tolerance, whose default is $0.01$. The clock starts before the two-start heuristic and root bound. Time is checked between node expansions, so a heuristic, root, or pair of child bounds can overrun it. Retained records contain node boxes, parent/child IDs, uppers, local $\mu_B,T_B,b$ receipts, the global fallback, and a feasible table/comparator witness. A time-limit status preserves the live-leaf upper.

\paragraph{Driver-specific fallback and clock boundary.}
The preceding fixed-row fallback is the one supplied by \texttt{run\_matched.py}, whose baseline-only clock begins after that inherited fallback was prepared by the proposed solver. Its recorded own baseline elapsed time therefore excludes fresh fallback construction. The physical-unit decision experiment uses a different driver: \texttt{v75/decision\_experiment.py} constructs each method's own \texttt{ObservedAnchor} and passes its \texttt{a.gradient\_upper} to \texttt{TimedInput}. The total clock starts before CSV/input construction, candidate fitting and that anchor, so those costs are charged to the input method. This fallback uses the actual-anchor uniform gradient cap. The timed driver uses the same local node enclosures and normalized split rule, stops once all declared thresholds are decided or the total cap is reached, and records only events completed by each deadline. Its root/heuristic can overrun the cap; completed late events receive no earlier credit. Thus the v73 inherited-fallback timing boundary is not the v75 decision experiment's accounting boundary.

\subsection{Cost beyond the retained search}\label{app:cost}
Let $n,p,r$ denote rows, original features and retained coordinates, let $k_i$ count uncertain cells in row $i$, and put $L=\sum_i k_i$ and $q=\sum_i k_i\log(k_i+1)$. These are arithmetic-operation counts; dense preprocessing retains its dependence on $p$.

\paragraph{Setup and verification.}
Forming candidate/anchor moments costs $O(np^2)$, and candidate fitting or a dense Cholesky transformation costs $O(p^3)$. Verifying the actual dyadic $T^\top G_0T$, projecting rows through dense $T$, and building dense metric envelopes cost $O(p^3+np^2)$. For a rounded retained matrix $R$, form its exact Gram in $O(pr^2)$, invert it in $O(r^3)$ rational operations, and form $\Pi=R(R^\top R)^{-1}R^\top$ in $O(p^2r+pr^2)$, with $O(p^2)$ projector storage in this implementation. Generic tail construction additionally uses dense projector/metric products and projected-gradient queries, up to $O(p^3+np^2+p(L+q))$ after forming the needed projections. The existing gradient proposal materializes $\sum_i(k_i+k_i^2)$ vectors in $\mathbb R^p$, forms their dense Gram and diagonalizes it; its cost can grow quadratically in row uncertainty before the planar oracle is called. A structural specialization that inherits this setup still pays for it.

\paragraph{Search and full witnesses.}
After retained projections are cached, a planar query costs $O(pr+nr+r^2+L+q)$. The sweep stores the best edge/local parameter and reconstructs a witness once. A node upper adds coordinate-envelope work; low-rank corner interpolation may query $2^r$ corners (enabled only for $r\leq3$ in the current code). Node count still depends on tolerance, conditioning and the root. Each improved original-table witness may require $O(np^2+p^3)$ for full-feature ridge retraining, plus $O(np)$ outward loss verification. This is charged whenever performed. The direct input-box baseline instead forms dense local transformed Grams and full midpoint fits at each node, costing $O(np^2+p^3)$ per bound/witness pair up to constants; it stores full original node boxes.

\paragraph{Arithmetic, bit cost and clocks.}
Exact cross-product comparisons, Fraction operations and matrix inversion are not constant-time bit operations: their cost depends on intermediate numerator/denominator length, gcd normalization and serialization. The planar $O(k_i\log(k_i+1))$ comparison count and $O(k_i)$ remaining arithmetic count therefore require an additional integer-bit-cost factor; exact inverse entries need not remain dyadic. Numerical interval execution and independent rational auditing have separate costs. Report input/candidate plus anchor, basis/projector/tail, search plus full witness, and independent audit/export separately. For the input-space component control, a supplied fixed-row fallback is prepared outside its baseline-only clock. Complete-budget comparisons instead charge each method for constructing its anchor and gradient cap. All deadline claims use completed events. A small intrinsic dimension bounds a sufficient outer-search dimension while leaving these original-feature costs present.

\subsection{Metric-matched proposals under complete budgets}\label{app:metric-v79}
The approximation geometry affects a snapshot baseline. We compare eighteen conditions using solver-coordinate POD: for all 32 fixed legal sample tables, compute the full displacement $u_X=\beta-\theta_X$, transform $z_X=T^{-1}u_X$, and only then take the leading three left singular vectors. This differs from selecting physical-coordinate singular vectors before mapping.

Metric greedy likewise uses solver coordinates. On each sample it solves the current reduced problem for $K_X=T^\top M_XT$ and $b_X=T^\top g_X$, selects the largest numerical residual/coercivity indicator $\|b_X-K_Xv\|^2/\mu$, and appends that selected table's transformed full-fit displacement with two-pass orthogonalization. It performs three full fits, versus 32 for POD; both charge pool construction, moments, reduced indicator solves, transformations and verification. The numerical proposal indicator is separate from the verified uniform moment tail. Both orthonormal numerical bases are rounded to $2^{-24}$ before the identical verifier.

Each method receives a complete ten-second budget, including identical within-condition input/candidate/anchor charges and its own proposal, verification, search and full-model witness retraining costs. At one, five and ten seconds only completed endpoint events are credited; a late event is retained without retrospective credit. Before the common anchor is ready no finite endpoint is credited. Method order rotates, numerical libraries use one thread, and early tail/arithmetic-floor stops retain their current enclosures. The methods share each condition's masks and record pool.

\begin{table}[t]\centering\small\setlength{\tabcolsep}{4pt}
\caption{Metric-matched comparison: median interval widths over three masks at the ten-second complete-budget checkpoint, in $10^{-6}$ units. Each proposed basis has rank three.}\label{tab:metric-v79}
\begin{tabular}{llrrr}\toprule Data&Pattern&Gradient&Metric POD&Metric greedy\\\midrule
Diabetes&Conc.&76.62&94.69&101.39\\
Diabetes&Disp.&68.55&68.41&69.52\\
Concrete&Conc.&6.38&7.26&7.39\\
Concrete&Disp.&8.59&8.63&9.14\\
Supercond.&Conc.&1.19&1.32&1.55\\
Supercond.&Disp.&7.99&8.01&8.10\\
\bottomrule\end{tabular}\end{table}
At ten seconds, gradient intervals are narrower than metric POD in 16 of 18 conditions; metric POD is narrower in 2, with 0 exact ties. The median gradient/control width ratio is 0.918. At ten seconds, gradient intervals are narrower than metric greedy in 17 of 18 conditions; metric greedy is narrower in 1, with 0 exact ties. The median gradient/control width ratio is 0.920. 

Metric POD wins by 0.80\%, 0.21\% in its winning pairs, measured relative to gradient width. Metric greedy wins by 0.33\% in its winning pairs, measured relative to gradient width. At one second all six Superconductivity conditions lack a completed common anchor, so their endpoints are uncredited for every method; all 18 conditions have finite endpoints by five seconds. The certificate applies to the actual proposed span and does not require one basis to dominate. All pairs, certified tails, stage costs and checkpoint endpoints are retained. An independent program reconstructs transformed displacements and POD spectra before truncation, replays all 54 metric-greedy selections, checks orthogonality and dyadic/physical-span mapping, and reclassifies all 162 checkpoints directly from completed events. These are proposal and aggregate/deadline checks; full outward search-tree verification remains separate.

\subsection{Primary independent-reference acquisition}\label{app:calibration-v81}
The twelve-condition grid uses feature means and ranges estimated solely from 512 independent reference records. Seed 81041 selects these records outside all specified excluded pools and all three target pools. The constants are frozen before target masks, midpoint candidates or certificates. Target records, mask/reveal seeds, rank-three methods, full rank 82, $\lambda=1/32$, $0.25\,\mathrm{K}^2$ tolerance and ten-second complete online budget are fixed across comparisons. A candidate is fitted to each normalized midpoint and then held fixed across its reveal trajectory. The same reference constants apply to all target conditions and methods.

\begin{table}[t]\centering\small\setlength{\tabcolsep}{3pt}
\caption{Independent calibration: shared coarse STOPs, search additions and target-cell savings over twelve conditions. V denotes a violating completion and U unresolved.}\label{tab:calibration-v81}
\begin{tabular}{lrrrrr}\toprule Method&Coarse S&Extra S&V&U&Saved\\\midrule
anchor&9&0&0&3&896\\
gradient&9&1&1&1&960\\
metric pod&9&1&1&1&960\\
metric greedy&9&1&1&1&960\\
full&9&0&1&2&896\\
\bottomrule\end{tabular}\end{table}

Nine conditions stop at the shared coarse bound. Each reduced proposal adds one STOP and one violation; full search adds the violation. All reduced methods save 960 target cells versus 896 for full search. The additional STOP occurs in pool 79011's concentrated 64-row case: upper bounds are 0.23499,0.23765,0.24194\,$\mathrm{K}^2$ for gradient, metric POD and metric greedy, versus 0.30834 for full search. First STOP times are 7.47,7.01,8.34 seconds. Pool 79023's concentrated case is unresolved for every method, and pool 79037's concentrated case is violating. 

The reference contains 512\,$\times$\,81=41,472 observed feature cells. We report that reference size separately from gross target-cell savings. The reference statistics take 0.00574 seconds to compute, outside the online certificate clock. The operational interpretation assumes an existing independent reference; acquiring that reference solely for this small target task would add its measurement cost. Public CSV loading and imports also precede the online clock. Input/candidate/anchor construction is remeasured and identically charged to all methods; each pays its own proposal/verification/search/witness work. The full queue takes 123.13 seconds. Exact record exclusions, statistics reconstructed solely from reference values, eligible deadline endpoints, original-completion coverage and the fixed half-reveal policy pass the semantic audit.

\subsection{Sensitivity of the primary independently calibrated study}\label{app:calibrated-sensitivity-v83}
This post hoc analysis uses the saved endpoints from the primary independently calibrated twelve-condition experiment, with its same fixed candidates, reference statistics, row pools and completed-evidence clocks. It reclassifies all twelve conditions over $[0,1]\,\mathrm{K}^2$ at five/ten seconds; every original endpoint and its adjacent floating values enter the grid. The primary tolerance remains $0.25\,\mathrm{K}^2$. At ten seconds the counts are 10 STOP/1 violation/1 unresolved for each reduced proposal, 9/1/2 for full search and 9/0/3 for the common coarse bound.

For the additional pool-79011 concentrated STOP, all three reduced methods satisfy STOP while full search remains unresolved on
\[
 \tau\in[0.24194363619769974,\,0.3083408101900695)\,\mathrm{K}^2.
\]
The closed decimal interval $[0.242,0.308]\,\mathrm{K}^2$ is conservative. The corresponding common-advantage interval is empty at five seconds.

\begin{figure*}[t]\centering
\includegraphics[width=\textwidth]{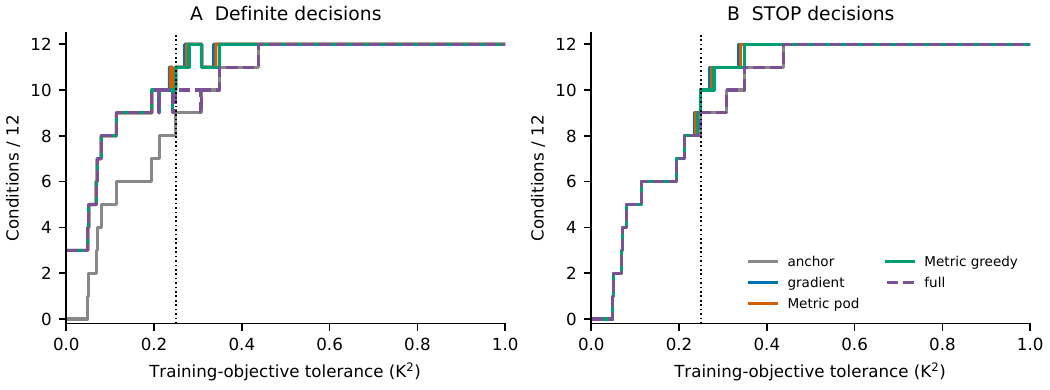}
\caption{Threshold sensitivity for the primary independently calibrated twelve-condition study, using each method's own evidence completed by ten seconds. The vertical line is the unchanged $0.25\,\mathrm{K}^2$ primary tolerance. Curves reclassify the saved primary-threshold runs; early coarse STOPs can have skipped search. They do not represent a rerun of the acquisition policy at every new tolerance.}\label{fig:calibrated-sensitivity-v83}
\end{figure*}

Scalar endpoint manifests regenerate these curves without solver execution. 

\subsection{Study settings and computing budgets}\label{app:protocol-v80}
\begin{table*}[t]\centering\small\setlength{\tabcolsep}{4pt}
\caption{Experiment settings, targets and accounting boundaries. Each study retains its own candidate, normalization and precision target.}\label{tab:protocol-v80}
\begin{tabular}{p{.19\textwidth}p{.22\textwidth}p{.25\textwidth}p{.23\textwidth}}\toprule
Study&Inputs and comparison&Targets and clock&Reported evidence\\\midrule
Structural family, 12&Synthetic full-rank tables; rank-one energy/moment&$10^{-6}$ width; 10\,s search; setup separate&12 energy successes; complete costs\\
Nuisance scale, 6&Fixed $p=65$; scalar/moment/energy&Exact rational calculation&Scale-independent contraction\\
Multidimensional structure, 18&Active ranks 2/3; residual and leakage changes&$10^{-5}$ width; 2\,s structural search; exact block refinement&18 block successes; first-stage and candidate-line controls\\
Real-world tasks, 20&Four datasets; finite-precision candidates; rank 3&$10^{-5}$ width, $2.5\times10^{-6}$ tail; 10\,s complete&20 final-width, 10 strict-tail successes\\
Precision/row controls, 24&Four tasks, three precisions, two row rules&Same 33 cells; 10\,s first stage, 600\,s refinement cap&24 refined successes; 15 centered-moment successes at the same cap\\
Independent masks, 40&Two sets of five response-independent masks per task&$10^{-5}$ width; 600\,s complete cap&40 refined successes; separate new-mask timings\\
Fixed-predictor decisions, 15&Three disjoint pools; fixed two-percent criterion&10\,s energy; shared 120\,s refinement/moment checkpoint&15 refined STOPs versus 5 moments; 330 additional cells\\
General solver, 12 runs&First mask of four tasks; all features&10/30\,s complete; scales $1,10^6$&Numerical duals and objective identity audit\\
General coarsening, 60&Three datasets; two widths and column patterns&$3\times10^{-6}$ width; 1\,s structural stage, 9000\,s complete cap&60 final-width successes; all exact replay audits\\
Dense row certificates, 4&Diabetes and one density sweep; all features&$10^{-8}$ width; 1800\,s complete cap&4 final-width successes; all exact replay audits\\
Metric controls, 18&Gradient/metric POD/greedy; fixed rank 3&10\,s complete, input included&All pairs and completed-event checkpoints\\
Acquisition, 12&Three target pools, 512 disjoint reference rows&$0.25\,\mathrm{K}^2$; 10\,s online&One additional STOP, 64 target cells\\
\bottomrule\end{tabular}\end{table*}

General-coarsening studies use source-pool means/ranges as supplied preprocessing metadata. The energy study uses the explicit scaling pools in Appendix~\ref{app:public-energy-v84}. The acquisition study uses 512 disjoint reference records and reports 41,472 reference feature cells separately. Responses remain observed and undergo each task's specified normalization. Imports precede algorithm clocks; only evidence completed by a deadline receives credit. Queue errors are recorded and subsequent conditions continue.

\section{Dense Row Geometry and Additional Controls}\label{app:extensions}
\paragraph{Generic real-data tail regime.}
We evaluate the verified $G_r,C_r,R_s,\mu$ and $\eta_r=(G_r+C_rR_s)^2/\mu$ at ranks 1,2,3,4,6 for both proposals. The source CSV reports all sixty points and the three contributions $G_r^2/\mu$, $2G_rC_rR_s/\mu$, $(C_rR_s)^2/\mu$. The gradient term is largest at 50 points and the mixed term at 10. Exact retained-Gram checks certify all sixty points; two use the inverse-norm bound stated in Appendix~\ref{app:new}.

This diagnostic evaluates the generic residual-moment certificate, with positive discarded-gradient and coupling terms. Its verified envelopes depend on the basis metric and rowwise coupling bounds, so their rank curves can be nonmonotone. The constructive observed-anchor energy study is reported in Appendix~\ref{app:public-energy-v84}.

\begin{table}[t]\centering\small
\caption{Residual-gradient rank curve for the six original conditions. Entries are verified $\eta_r$ in $10^{-6}$ units; the accompanying CSV reports every constant and both proposals.}\label{tab:rank-v77}
\resizebox{\columnwidth}{!}{\begin{tabular}{llrrrrr}\toprule Data&Pattern&$r=1$&2&3&4&6\\\midrule
Diabetes&Conc.&260.28&123.58&85.69&55.64&25.18\\
Diabetes&Disp.&123.09&109.31&108.92&95.33&59.16\\
Concrete&Conc.&14.86&8.34&5.09&3.83&1.77\\
Concrete&Disp.&11.33&10.18&8.79&7.15&4.11\\
Supercond.&Conc.&11.50&1.94&1.71&1.57&1.25\\
Supercond.&Disp.&10.87&9.23&10.10&9.09&5.58\\
\bottomrule\end{tabular}}\end{table}

A fixed 24-condition diagnostic sweep crosses Concrete and Superconductivity, both patterns, 64/128 uncertain rows and ridge multipliers $0.5,1,2$. Every row retains two uncertain cells and each candidate is refit at its own midpoint and penalty. Both rank-three proposals completed in all 24 conditions. The gradient proposal has a smaller verified tail in 24 of 24 conditions; gradient/POD tail ratios range from 0.413 to 0.970. Doubling uncertain rows multiplies the tail by 3.95--13.48 across paired methods and penalties. This sweep evaluates bound diagnostics, with no additional search or row-density comparison.

\begin{figure*}[t]
\centering
\includegraphics[width=\textwidth]{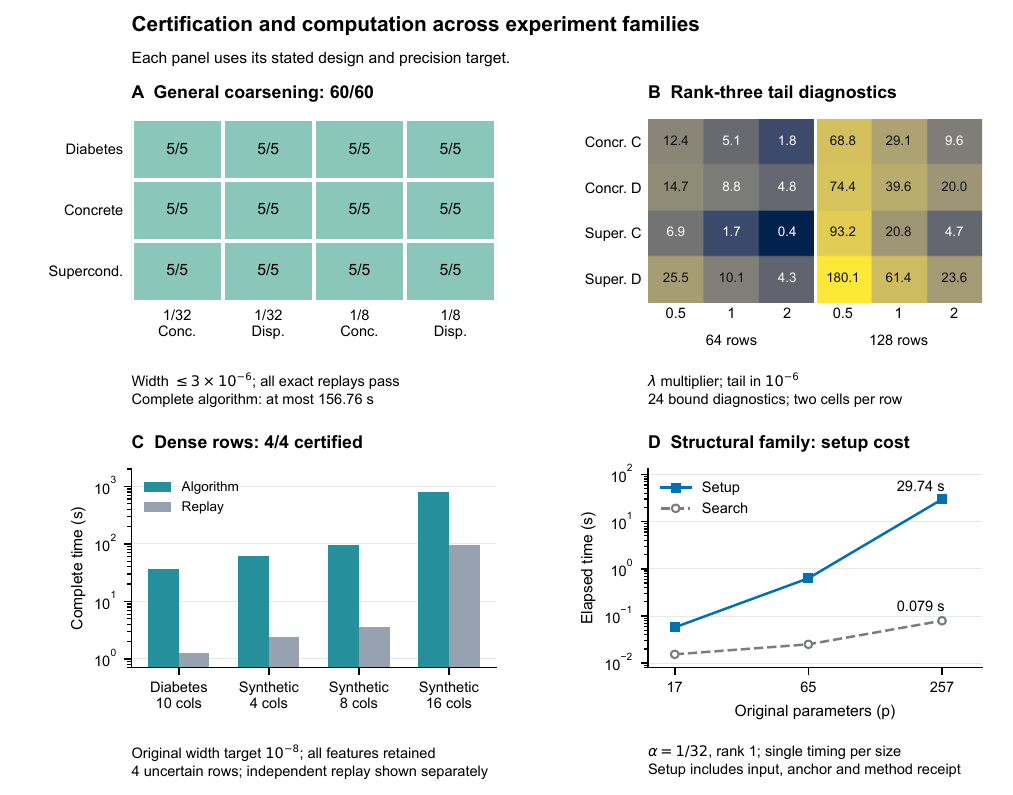}
\caption{Certification and computation across the stated experiment families. (A) All sixty general-coarsening conditions attain $3\times10^{-6}$ width, with five masks per dataset, width and pattern. (B) Verified rank-three gradient tails with ridge multipliers $0.5,1,2$ relative to $1/32$, measured in the separate twenty-four-condition diagnostic design. (C) All four dense-row conditions attain the original $10^{-8}$ width; complete algorithm and independent replay costs are shown separately. (D) Structural-family setup and search times at $\alpha=1/32$, retaining one direction; timings are single observations.}
\label{fig:scope-v79}
\end{figure*}

\subsection{Certified planar row oracle for dense uncertainty}
For a fixed retained vector, each original row contributes a difference of squared affine residuals. Its box image is a planar zonotope. We sort rationally oriented generators by cross products and inspect both endpoints and any concave interior maximizer on every boundary segment. A maintained original-box preimage supplies an exact rational feasible witness. Zero generators and collinear faces are included. Interval coefficients are enclosed using the exact midpoint optimum plus a uniform coefficient-radius perturbation bound; the rounded witness is re-evaluated with interval arithmetic. The planar adapter supplies row queries to both observed-anchor and residual-moment searches.

All twelve comparisons against independent box-edge enumeration at $k\in\{1,2,4,8\}$ agree exactly, including degenerate and interior cases. On three-repeat single-row microbenchmarks, the certified planar adapter takes median $0.693$, $0.920$, and $1.329$ ms for $k=4,8,16$; the vectorized interval box enumeration takes $15.059$ and $580.989$ ms for $k=4,8$. These timings concern row geometry.

A full-feature Diabetes instance has four rows uncertain in every feature. The density sweep uses one synthetic design with $n=128$, all sixteen features and its intercept, and four rows with $k\in\{4,8,16\}$ uncertain coordinates. The same candidate, ridge penalty and continuous intervals define the original-regret objective in every condition. The complete row-edge method of Appendix~\ref{app:edge-refinement-v22} certifies all four instances to the common $10^{-8}$ precision target.

\begin{table*}[t]\centering\small\setlength{\tabcolsep}{4pt}
\caption{Dense-row certification with all model features retained. Bounds and widths use $10^{-6}$ original-objective units; the width target is $0.01$ in these units. Endpoints are rounded outward and widths use the original certificates. The common 1800-second algorithm cap includes input construction, reference and witness searches, screening, conic proposals and exact bound evaluation. Independent replay is timed separately.}\label{tab:dense-edge-v22}
\begin{tabular}{lrrrrrrrr}\toprule Setting&$n$&$p$&Cells&Lower&Upper&Width&Algorithm (s)&Audit (s)\\\midrule
Diabetes & 442 & 11 & 40 & 2.016236 & 2.026137 & 0.00990 & 37.11 & 1.27\\
Synthetic, $k=4$ & 128 & 17 & 16 & 7.857718 & 7.867619 & 0.00990 & 61.05 & 2.40\\
Synthetic, $k=8$ & 128 & 17 & 32 & 17.765885 & 17.775786 & 0.00990 & 96.12 & 3.58\\
Synthetic, $k=16$ & 128 & 17 & 64 & 39.909574 & 39.919475 & 0.00990 & 812.99 & 96.50\\
\bottomrule\end{tabular}\end{table*}

All four independent audits reconstruct the original polynomial, the metric, the feasible completed-table witness, every coupling certificate, every screened exclusion and the complete residual partition. Certificate-guided screening reduces the row-edge search while preserving every possible target violation. The final upper remains valid for full-feature retraining.

A best-edge sweep in the planar query reconstructs the witness once, using $O(k\log k)$ rational comparisons and $O(k)$ further arithmetic operations. The reported row-query timings include witness copying during the sweep; rational bit complexity is separate.

\subsection{Leakage outside the declared residual-energy family}
This perturbation control uses the general moment verifier. Structural support and candidate leakage are verified through Theorem~\ref{thm:energy-leakage-v79} and the automatic original-row pipeline in Appendix~\ref{app:auto-energy-v80}. The grid below isolates the generic sufficient certificate on a broader perturbation grid.

For clarity, a general sufficient bound also explains this fallback. Work in the transformed coordinates of the verifier. For every admissible completion let $M\succeq\mu I$, $g$ be its candidate gradient, and $h=g^TM^{-1}g\le U$. Let $P$ project orthogonally onto the complement of a retained span. Suppose verified uniform bounds satisfy $\|Pg\|\le G$ and $\|PM(I-P)\|\le C$. Write the full optimum displacement $d=M^{-1}g=u+w$ with $u=(I-P)d$. Then $\|u\|\le\sqrt{U/\mu}=:R$, since $d^TMd=h$. For $q(d)=2g^Td-d^TMd$, completing the square in $w$ gives
\[
q(u+w)-q(u)\le {\|Pg-PMu\|^2\over\mu}
 \le {(G+CR)^2\over\mu}=:\eta.
\]
Thus the full maximum is at most the retained maximum over the radius-$R$ ball plus $\eta$. For a rounded, nonorthonormal retained basis, the implementation verifies its Gram minimum eigenvalue and enlarges the coordinate radius accordingly. The interval moment receipt computes $G,C,\mu,U$ on the actual perturbed inputs; it covers candidate leakage and signal-row leakage without requiring either to vanish. This is the generic sufficient bound used by this earlier control; the final leakage-energy guarantee has its separately verified original-row conditions.

The frozen perturbation pilot uses $n=128,p=65$, separately adding $\delta$ to feature 1 of all 64 signal rows or to candidate coefficient 1, for $\delta\in\{0,2^{-12},2^{-10},2^{-8}\}$. Ranks 1 and 2 regenerate the automatic gradient proposal at every condition; all use the same generic MomentSearch and five-second setup-plus-search cap. The complete directions, tail receipts, radii, endpoints and costs are in \texttt{structure\_robust\_results.json}; scalar results are in \texttt{structure\_robust\_summary.csv}. These controlled numerical perturbations probe the neighborhood of the constructed family.

\paragraph{Prediction deviations implied by an objective certificate.}
For a fixed completed design, let $M=X^TX/n+\lambda I$ and let $\theta^*$ be its ridge minimizer. The objective regret equals
$h=(\beta-\theta^*)^TM(\beta-\theta^*)$.
If $h\le\tau$, Cauchy--Schwarz in the $M$ inner product gives, for every feature vector $z$,
\[
|z^T(\beta-\theta^*)|\le\sqrt{\tau z^TM^{-1}z}
\le\sqrt{\tau z^TG_0^{-1}z},
\]
where the second inequality requires the verified positive-definite anchor bound $M\succeq G_0\succ0$. This controls prediction deviation from the completed-design ridge fit at a given $z$.

\subsection{Data construction and executable reproduction}
Every public input column is kept, including a penalized intercept. For the normalized public study, each feature is centered by its selected-pool mean, divided by its selected-pool range (one for constant columns), and rounded to multiples of $2^{-10}$. Responses are min--max scaled and rounded on the same grid. Diabetes and Concrete use their complete supplied records. The sixty-condition general-coarsening study samples 1024 Superconductivity records with seed 74101 after excluding the 1024-record pool defined by seed 74001. The component study uses the seed-74001 pool. Target pools apply the specified record-index exclusions before sampling; indices and seeds accompany the aggregate receipts.

For bin width $b$, a coarsened entry $x$ is replaced by $[-1+b\lfloor(x+1)/b\rfloor,-1+b\lfloor(x+1)/b\rfloor+b]$. A seeded row permutation chooses affected rows. The concentrated columns are the first two original feature columns for Concrete and Superconductivity, and the bmi/bp columns for Diabetes; dispersed patterns cycle through the first eight eligible columns. Each sparse-row record has two coarsened entries. This general-coarsening rule uses fixed feature columns and seeded row permutations; the energy experiment's residual row rule is specified in Appendix~\ref{app:public-energy-v84}. The candidate is the full-feature midpoint ridge fit with weights $1/n$ and $\lambda=1/32$. The physical decision study instead fixes $y=\mathrm{round}(2^{20}T_c/128)/2^{20}$, where $T_c$ is critical temperature in kelvin, and converts objective bounds by $128^2$.

The source archive on this preprint\textquotesingle s arXiv record contains the submission \LaTeX\ and the companion directory \texttt{anc/}. Its \texttt{README.md} gives offline checks and public-data rerun commands; \texttt{code/} contains the implementations and independent audit programs, and \texttt{results/} contains aggregate CSV/JSON records. The same versioned material is hosted at \url{https://github.com/Peida-Lee/certifying-retraining-regret}. Raw records and table-containing receipts remain local and can be regenerated from public sources. Saved scalar endpoints regenerate figures without solver execution. The new response-independent masks and uniform general-coarsening refinement charge input construction, including their loader calls. Reruns of stored primary, control and decision tables charge fresh anchor, direction, bound and search computations from those binary inputs. Other studies retain the timing boundaries stated in their protocols. Interpreter imports precede algorithm clocks. Timer checks occur between atomic operations, and only completed evidence is credited at deadlines.

General-coarsening studies treat complete-pool scaling as supplied metadata. The acquisition experiment freezes means/ranges from 512 reference rows disjoint from all target and specified excluded pools before target coarsening. Its 41,472 reference feature cells are reported separately from target-cell savings. Both settings retain observed responses and explicit information boundaries.

\subsection{Certificates and independent checks}\label{app:verification-ledger}
Table~\ref{tab:verification-ledger} maps each reported study to its independent checks, including original-row witnesses and complete covering trees.
\begin{table*}[t]\centering\small\setlength{\tabcolsep}{4pt}
\caption{Correspondence between reported evidence and independent verification. Each complete-tree check reconstructs its root, original-row query enclosures, node bounds, binary coverage, terminal upper and feasible original-table witness.}\label{tab:verification-ledger}
\begin{tabular}{p{.23\textwidth}p{.46\textwidth}p{.22\textwidth}}\toprule
Evidence&Independently checked quantities&Complete trees\\\midrule
Core row/interval suite&26 original-row cases, four sharpness controls; actual rounded metrics/projectors and original normal equations&Five synthetic/public trees\\
Automatic energy&Actual partition, direction, radius, leakage constants and original witnesses&Six trees across the fixed perturbation grid\\
Multidimensional structure&Original independent boxes, rational signal coefficients and exact star-block elimination&All eighteen block trees; two energy trees\\
Real-world energy&All twenty anchors, projectors, mixed moments, contraction constants, containing radii and original feasible witnesses&Four, first mask per dataset\\
Joint residual refinement&Original polynomial coefficients, actual metric, nonnegative dyadic duals, exact PSD factors and feasible witnesses&Complete covers for forty independent masks, forty primary/controls and fifteen decisions\\
Dense row-edge refinement&Original joint polynomial, coupling PSD, exact edge exclusions and feasible witnesses&All four complete residual-partition trees\\
General coarsening&Geometry-based representation choice, original objective, exact row/polynomial queries and complete timing&All sixty original-table certificates and complete covers\\
Metric proposal controls&Transformed displacements, POD spectra, 54 greedy selections and 162 completed-event checkpoints&Proposal/deadline audit\\
Acquisition&Reference/target disjointness, calibration statistics, interval coverage and fixed reveal-policy counts&Semantic/policy audit\\
SCIP&Eight exact original/compressed-objective identities&Objective identity audit\\
\bottomrule\end{tabular}\end{table*}

\section{Real-World Experiment Settings}\label{app:public-energy-v84}
Superconductivity records measured critical temperatures; Yacht records measured residuary resistance; Wine Quality records sensory quality scores and physicochemical tests. Parkinsons Telemonitoring contains home voice recordings, demographic variables and dataset-provided total UPDRS scores, which the source defines by linear interpolation of clinical assessments. The identifier and the alternative motor-UPDRS response are excluded from its predictors; all 19 documented predictor columns are retained. Binary sex remains observed. These tasks keep their supplied regression responses, followed by affine normalization and $2^{-10}$ quantization.

The candidate is the midpoint ridge fit rounded to four fractional binary bits. One signal row has one unknown and sixteen nuisance rows have two unknowns each, giving 33 uncertain cells. The signal and nuisance column lists are fixed in the supplied settings. The signal row has the largest pilot residual. After three fixed passes, nuisance rows are a seeded sample from the 32 smallest residual-interval scores under the observed-row reference. The pilot uses coarsened candidate-column values. Every compared method uses the same final row rule, candidate and continuous feature boxes.

\begin{table*}[t]\centering\small
\caption{Real-world parameters. Widths are fractions of the feature range. All candidates use four fractional bits and the retained rank is three.}
\begin{tabular}{lrrrrr}\toprule
Data & $n$ & $p$ & $\lambda$ & Signal width & Nuisance width\\\midrule
Superconductivity&1024&82&$2^{-10}$&$1/2$&$1/2$\\
Parkinsons&2048&20&$2^{-10}$&$1/2$&$1/4$\\
Wine Quality (white)&2048&12&$2^{-10}$&$1/2$&$1/4$\\
Yacht&308&7&$2^{-5}$&$1/2$&$1/4$\\\bottomrule
\end{tabular}\end{table*}

Each task uses mask seeds 88211--88215. Superconductivity retains its target pool 84101 and scaling pool 84001, with the specified earlier record-pool exclusions. Parkinsons and Wine use scaling pool 88101 and disjoint target pool 88201, each with 2048 records; means and ranges come from the scaling pool. Yacht uses all 308 source records. Parkinsons record pools describe recordings from the fixed cohort; the certification experiment makes a deterministic statement about these tables. The seed lists, column indices, record-pool construction and numeric settings are supplied with the code.

Each method has a ten-second complete budget, with a $10^{-10}$ search tolerance to resolve its attainable width. We report success against a common final-width target of $10^{-5}$ and a separate energy-tail budget of $2.5\times10^{-6}$. Adaptive energy uses the smaller verified contraction/common-domain moment tail; centered moment verifies its own reference gradient and radius in the same actual rank-three span. Full search retains all model coordinates. All three methods receive the same feasible midpoint witness and the smaller gradient/observed-loss cap. The complete clock charges input/candidate construction, anchor, common caps/witness, the method's required proposal and verification, setup, search and full-feature witness fits. Imports and certificate serialization are outside the clock; source-data reads in input construction are charged. Moment pays reference construction, and full search has no reduced-space proposal or tail verification. All sixty method-condition runs and the original pure-energy component results are supplied. The CSV reports any small indivisible-node overrun.

An independent Fraction program imports no proposal, verifier or search implementation. It reconstructs the observed anchors, actual metrics, retained projectors and references; checks the original-row constants, mixed residual moments, projected center-Gram operator bound, discarded perturbation trace and containing radius; and verifies every feasible original-table witness. The first mask of each dataset additionally receives a complete search-tree audit of all row-query enclosures, curvature, binary coverage and terminal uppers. The independent adaptive audit additionally verifies its selected moment branch by exact projected gradient and coupling reconstruction. Forty-eight exact normal-equation examples separately verify the refined theorem under nonzero metric cross blocks, nonzero residuals and retained translations.

\begin{table*}[t]\centering\small
\caption{Complete computation cost. Per-method medians over five masks, in seconds. Setup includes input, anchor, caps, proposal and verification; search includes original-table witness fits.}
\label{tab:contraction-costs-v88}
\begin{tabular}{lrrrrrrr}\toprule
&\multicolumn{3}{c}{Adaptive energy}&\multicolumn{3}{c}{Centered moment}&Full\\
Data&Setup&Search&Total&Setup&Search&Total&Total\\\midrule
Supercond. & 2.653 & 4.211 & 6.823 & 1.884 & 4.428 & 6.287 & 10.021\\
Parkinsons & 0.470 & 3.553 & 4.060 & 0.369 & 3.797 & 4.156 & 10.003\\
Wine Quality & 0.329 & 4.216 & 4.601 & 0.220 & 4.089 & 4.314 & 10.002\\
Yacht & 0.156 & 5.119 & 5.289 & 0.086 & 5.039 & 5.125 & 10.004\\
\bottomrule\end{tabular}\end{table*}

\subsection{Tail components and original-regret levels}
Table~\ref{tab:tail-components-v89} separates the contraction, retained-radius and residual-envelope contributions; Table~\ref{tab:regret-levels-v89} reports the corresponding original-regret levels.
\begin{table*}[t]\centering\small
\caption{Tail components on identical primary instances; medians over five masks, in $10^{-6}$ units. Contraction changes $\gamma$ at the original $R_A$. Radius then uses the mixed-moment radius and the independently verified full-gradient restriction. The last column adds the mixed residual-energy envelope in~\eqref{eq:contraction-main-v89}. All quantities use saved verified constants; these are bound evaluations with no new solver searches.}
\label{tab:tail-components-v89}
\begin{tabular}{lrrrr}\toprule
Data&Original&Contraction&+Radius&+Mixed envelope\\\midrule
Supercond. & 7.476 & 5.547 & 5.547 & 4.453\\
Parkinsons & 3.066 & 0.953 & 0.953 & 0.904\\
Wine Quality & 1.867 & 0.143 & 0.142 & 0.134\\
Yacht & 30.287 & 4.132 & 3.879 & 2.788\\
\bottomrule\end{tabular}\end{table*}
\begin{table}[t]\centering\small
\caption{Original-regret magnitude. Marginal medians of energy lower and upper endpoints over five masks, in $10^{-3}$ normalized squared-response units. Numerical width and model-retention tolerance are distinct quantities.}
\label{tab:regret-levels-v89}
\begin{tabular}{lrr}\toprule Data&Median $L$&Median $U$\\\midrule
Supercond. & 1.689359 & 1.693478\\
Parkinsons & 0.828699 & 0.829603\\
Wine Quality & 0.426868 & 0.426990\\
Yacht & 0.761605 & 0.764284\\
\bottomrule\end{tabular}\end{table}

\subsection{Matched candidate precision and nuisance rows}\label{app:precision-controls}
The first primary mask (88211) fixes the signal interval and nuisance-column list. Sixteen nuisance rows follow the primary residual rule or uniform sampling excluding the signal row. Unrounded, eight-bit and four-bit candidates share each mask, its 33 uncertain cells and rank three. The four-bit residual conditions reuse eight primary runs; forty new runs complete each matched two-method grid. The pure-energy grid and adaptive grid share the same twenty-four boxes and candidates. C, A and M in Table~\ref{tab:matched-v89} use completed evidence within the original ten-second budgets. R reruns all algorithm stages and refines unresolved cases under a 600-second cap. All twenty-four matched conditions meet width $10^{-5}$, with a maximum R cost of 459.69 seconds. The main text's common-600-second comparison uses the extended moment runs in \texttt{controls\_baseline\_extended.json}, restricted to these same twenty-four conditions; they meet the target in 15/24. The companion \texttt{extend\_validation\_baselines.py} reproduces the extended control, independent-mask and decision baselines, including their completed-event checkpoints.

\begin{table*}[t]\centering\small\setlength{\tabcolsep}{4pt}
\caption{Matched controls in $10^{-6}$ objective units. C/A/M are the original ten-second contraction/adaptive/moment stages. R adds refinement with a 600-second cap; its measured seconds start at the stored binary instance. R/U identify residual/uniform nuisance rows, $\infty$ an unrounded candidate, and cap a successful coarse bound.}
\label{tab:matched-v89}
\begin{tabular}{llrrrrrrr}\toprule
Data&Rows&Bits&Cap&C width&A width&M width&R width&R seconds\\\midrule
Supercond. & R & $\infty$ & yes & 1.791 & 1.791 & 1.791 & 1.791 & 10.04\\
Supercond. & R & 8 & yes & 3.726 & 3.726 & 4.228 & 3.726 & 9.83\\
Supercond. & R & 4 & -- & 3.911 & 3.911 & 35.711 & 3.911 & 10.05\\
Supercond. & U & $\infty$ & -- & 75.400 & 75.400 & 75.400 & 8.04 & 459.68\\
Supercond. & U & 8 & -- & 90.952 & 90.952 & 90.938 & 8.349 & 220.10\\
Supercond. & U & 4 & -- & 203.117 & 203.117 & 321.280 & 1.122 & 87.00\\
Parkinsons & R & $\infty$ & yes & 0.491 & 0.491 & 0.491 & 0.4909 & 6.70\\
Parkinsons & R & 8 & yes & 0.924 & 0.924 & 1.258 & 0.9241 & 5.40\\
Parkinsons & R & 4 & -- & 0.904 & 0.904 & 1.537 & 0.904 & 8.89\\
Parkinsons & U & $\infty$ & -- & 14.726 & 14.726 & 14.726 & 2.208 & 60.93\\
Parkinsons & U & 8 & -- & 18.386 & 18.386 & 19.159 & 2.104 & 70.09\\
Parkinsons & U & 4 & -- & 52.133 & 41.323 & 49.379 & 0.2721 & 59.20\\
Wine Quality & R & $\infty$ & yes & 0.119 & 0.119 & 0.119 & 0.1189 & 6.68\\
Wine Quality & R & 8 & yes & 0.121 & 0.121 & 0.146 & 0.1208 & 6.07\\
Wine Quality & R & 4 & -- & 0.122 & 0.122 & 0.153 & 0.1222 & 9.70\\
Wine Quality & U & $\infty$ & yes & 1.371 & 1.371 & 1.371 & 1.371 & 8.87\\
Wine Quality & U & 8 & yes & 2.502 & 2.502 & 2.502 & 2.502 & 7.29\\
Wine Quality & U & 4 & -- & 19.028 & 8.657 & 8.742 & 8.657 & 6.87\\
Yacht & R & $\infty$ & yes & 0.669 & 0.669 & 0.669 & 0.6688 & 9.81\\
Yacht & R & 8 & yes & 2.410 & 2.410 & 2.410 & 2.41 & 10.01\\
Yacht & R & 4 & -- & 3.296 & 3.296 & 4.304 & 3.296 & 10.01\\
Yacht & U & $\infty$ & -- & 8.950 & 8.950 & 8.950 & 8.95 & 10.01\\
Yacht & U & 8 & -- & 14.085 & 14.085 & 14.085 & 0.2487 & 82.36\\
Yacht & U & 4 & -- & 81.405 & 79.666 & 79.671 & 0.001598 & 15.52\\
\bottomrule\end{tabular}\end{table*}
Unrounded residual-guided models meet the target at the common cap in all four tasks. The four-bit residual condition isolates the cost of retaining a finite-precision candidate. Uniform-row controls test the same algorithms with larger nuisance residuals: common-domain adaptation selects the moment tail when it is stronger. The additive-tail inequality in~\eqref{eq:adaptive-v90} applies to the common search domain. Final interval widths also depend on completed search and feasible witnesses. Forty fresh adaptive constants/witness checks include the twenty primary conditions and twenty fresh controls; four primary trees are checked completely. Exact input-box equality preserves the earlier fixed-mask checks across candidate precisions.

\subsection{Additional general nonlinear-solver controls}\label{app:scip-followup}
These additional controls use Concrete, Superconductivity, Airfoil and Power Plant, with mask 84111 for each task. We compare SCIP using all features, the same candidate, ridge penalty and 33 independent uncertain cells. These are the first masks in the fixed mask list. The original regret is represented as a joint maximum over cell deviations and the retraining displacement; observed rows are aggregated into their exact normal equations. Centering the displacement at the observed-anchor solution avoids cancellation between large fixed-row losses. A shared feasible midpoint witness initializes the incumbent and a shared gradient/observed-loss cap bounds the result.

Each run uses one CPU thread, includes common input/anchor/witness and model construction within its total budget, and gives SCIP the remaining time. The main ten/thirty-second comparison scales the objective by $10^6$ and scales the $10^{-5}$ absolute-gap target accordingly. Ten-second unscaled runs check numerical scaling. Feasibility and dual-feasibility tolerances are $10^{-9}$. SCIP endpoints are numerical dual estimates under these tolerances. Our outward-arithmetic endpoints and independently checked original-table witnesses use the separate certification procedure.

\begin{table*}[t]\centering\small\setlength{\tabcolsep}{5pt}
\caption{All twelve SCIP runs. Widths are in $10^{-6}$ normalized squared-response units. The first two columns use objective scale $10^6$; the third uses scale 1. All initial solutions were accepted.}\label{tab:scip-followup}
\begin{tabular}{lrrr}\toprule
Data&10\,s scaled&30\,s scaled&10\,s unscaled\\\midrule
Concrete & 9.583 & 9.583 & 7.980\\
Supercond. & 36.305 & 36.305 & 151.492\\
Airfoil & 7.012 & 7.012 & 7.012\\
Power Plant & 6.194 & 6.194 & 6.195\\
\bottomrule\end{tabular}\end{table*}

Concrete, Airfoil and Power Plant reach the numerical width target; Superconductivity reaches a time limit at both budgets, with scaled width $36.305\times10^{-6}$. An independent eight-point Fraction audit reconstructs the original loss difference and the centered compressed objective separately, obtaining exact equality. It imports no SCIP model builder; float coefficient evaluations differ from the exact identity by about $10^{-19}$. These checks validate the compared objective independently of the solver's numerical dual guarantee.

\section{Matched External Solvers and Regret Decomposition}\label{app:external-v90}
All methods use the primary first mask 88211, unchanged $\beta$, $\lambda$, boxes, midpoint feasible witness and common cap. Input/candidate construction reuses its measured primary cost; each new external run measures its own anchor, cap, model construction and solver costs. SCIP 10.0.2 through PySCIPOpt 6.3.0 uses objective scale $10^6$, absolute objective tolerance $10^{-10}$, primal and dual feasibility tolerances $10^{-9}$ and one thread. The original polynomial regret formulation and exact identity audit are in Appendix~\ref{app:scip-followup}. Its tolerance-based numerical dual is listed separately from outward arithmetic. A second target-$10^{-5}$ queue is included in the source. Small indivisible-call overruns are included in actual total cost.

\begin{table*}[t]\centering\small
\caption{Same-instance endpoints. $L,U$ use $10^{-3}$ units; widths use $10^{-6}$. All costs are complete seconds.}
\label{tab:external-endpoints-v90}
\begin{tabular}{llrrrr}\toprule Data&Method&$L$&$U$&Width&Seconds\\\midrule
Supercond. & Adaptive energy & 1.690289243 & 1.694199943 & 3.910700 & 7.096\\
Supercond. & Centered moment & 1.690288699 & 1.725999809 & 35.711110 & 6.287\\
Supercond. & SCIP numerical & 1.690299779 & 1.850295633 & 159.995854 & 9.992\\
Parkinsons & Adaptive energy & 0.828699094 & 0.829603046 & 0.903953 & 3.913\\
Parkinsons & Centered moment & 0.828699094 & 0.830236015 & 1.536921 & 4.102\\
Parkinsons & SCIP numerical & 0.828699094 & 0.829491585 & 0.792491 & 9.977\\
Wine Quality & Adaptive energy & 0.426867749 & 0.426989908 & 0.122158 & 4.808\\
Wine Quality & Centered moment & 0.426867749 & 0.427020937 & 0.153188 & 4.744\\
Wine Quality & SCIP numerical & 0.426867749 & 0.427110887 & 0.243138 & 9.990\\
Yacht & Adaptive energy & 0.760845932 & 0.764142010 & 3.296078 & 5.251\\
Yacht & Centered moment & 0.760845932 & 0.765150401 & 4.304469 & 5.590\\
Yacht & SCIP numerical & 0.760845946 & 0.760847427 & 0.001482 & 9.969\\
\bottomrule\end{tabular}\end{table*}
\subsection{ZORRO model envelopes converted to the same regret}
ZORRO~\citep{zhu2024zorro}, Theorem 4.2, encloses all possible optimal ridge models at the specified regularization. We implement its Appendix I equations (22), (23), (39) for observed responses and independently uncertain feature cells, with its Appendix J box splitting. The eigenvector transformation is rounded to a dyadic invertible matrix $B$; an outward inverse enclosure covers its actual inverse. Writing $M_0=X_c^\top X_c/n+\lambda I$, the real component is $w_R=M_0^{-1}X_c^\top y/n$ and the shared-symbol component uses the derivative of the exact ridge equations. Rounding residuals remain in the fixed-point forcing. High-order data monomials are collected before absolute coefficients are summed, following ZORRO's linearization.

The resulting joint enclosure has $\theta=w_R+D\xi+B\operatorname{diag}(k)\zeta$, with the original data symbols $\xi$ shared with $X(\xi)$ and independent nondata symbols $\zeta$. Its nonnegative fixed-point radii satisfy an outward verified M-matrix inequality. For each possible optimal model,
\begin{align*}
 d(\xi,\zeta)&=\beta-w_R-D\xi-B\operatorname{diag}(k)\zeta,\\
 h_\beta(X)&=g_X^\top(\beta-\theta_X^*)
 \le\sup_{\xi,\zeta\in[-1,1]}g_{X(\xi)}^\top d(\xi,\zeta).
\end{align*}
We collect the adapter's degree-at-most-three polynomial coefficients across rows, preserve shared data symbols, bound every even monomial on $[0,1]$ and remaining monomials on $[-1,1]$. The smaller of this valid upper and the common cap is returned with the same initial feasible lower. This measures one explicit polynomial adapter of the model envelope. We use the paper's ridge equations; the inspected repository notebook is an implementation reference. The original $\lambda$ is unchanged throughout splitting.

\begin{table*}[t]\centering\small
\caption{ZORRO fixed-point/adapter controls on the same primary instances. Widths use $10^{-6}$ units. A complete cover is required to credit its model-derived upper. Incomplete cover returns the shared cap. Cost includes all attempted splits and the adapter.}
\label{tab:zorro-v90}
\begin{tabular}{lrrrrrr}\toprule Data&Complete cover&Attempts&Completed boxes&Unresolved boxes&Width&Seconds\\\midrule
Supercond. & -- & 19 & 0 & 20 & 185.998 & 10.329\\
Parkinsons & yes & 1 & 1 & 0 & 17.134 & 0.798\\
Wine Quality & yes & 1 & 1 & 0 & 4.503 & 0.547\\
Yacht & yes & 1 & 1 & 0 & 19.788 & 0.278\\
\bottomrule\end{tabular}\end{table*}
The complete unsplit envelopes for Parkinsons, Wine and Yacht take less than one complete second each. The Superconductivity queue attempts nineteen boxes and leaves twenty unresolved at the cap. The independent rational program reconstructs the three complete fixed-point systems and their shared-symbol regret adapters from the original row boxes. All three pass exact coefficient, nonnegative-radius and adapter-upper checks. Thirty-two sampled original normal-equation solutions per complete envelope additionally check the implementation.

\subsection{Midpoint regret and uncertainty increment}
The midpoint quantity is $h_\beta(X_c)$. We enclose it by a feasible comparator objective and a verified normal-equation residual correction. If $\widetilde\theta$ is its numerical ridge solution and $r_c=M_c\widetilde\theta-X_c^\top y/n$, then
\[
 J_c(\beta)-J_c(\widetilde\theta)\le h_\beta(X_c)
 \le J_c(\beta)-J_c(\widetilde\theta)+\|T^\top r_c\|^2/\mu.
\]
Subtracting the midpoint interval from $[L,U]$ encloses the nonnegative uncertainty increment. Midpoint regret records the fixed candidate's representation cost; the increment records additional worst-completion cost for that same candidate. Componentwise medians below summarize five fixed masks; the source retains every paired interval.
\begin{table}[t]\centering\small
\caption{Median midpoint regret and median increment bounds, all in $10^{-6}$ normalized-objective units.}
\label{tab:increments-v90}
\begin{tabular}{lrrr}\toprule Data&Midpoint&Increment $L$&Increment $U$\\\midrule
Supercond. & 1664.298 & 23.232 & 27.351\\
Parkinsons & 817.937 & 10.773 & 11.677\\
Wine Quality & 424.515 & 2.323 & 2.461\\
Yacht & 744.339 & 17.220 & 19.789\\
\bottomrule\end{tabular}\end{table}

\subsection{Full-budget model-envelope refinement and every primary SCIP mask}\label{app:refined-external-v91}
All twenty primary masks receive the same-objective SCIP control. The four existing first-mask runs are reused and sixteen additional runs use unchanged candidates, boxes, regularization and initial midpoint witnesses. A ten-second nominal complete budget and $10^{-10}$ search tolerance match the primary search. All seed-level endpoints and actual costs are supplied.

ZORRO's valid unsplit fixed point is also retained as a parent bound. Within the same complete ten-second cap, a best-first queue refines the box with the largest upper. Its regret-polynomial coefficient sensitivities select a split coordinate once a model envelope exists; the fixed-point existence stage uses its geometric coordinate score. Both children inherit their legal parent's upper immediately. A completed child envelope can tighten that bound; incomplete work leaves the inherited upper available. All data/model shared symbols and the original $\lambda$ remain in the conversion. Original-table midpoint fits update feasible lowers. This gives a valid complete cover at every time, including partial refinement. The original single-envelope results remain in Table~\ref{tab:zorro-v90}.

\begin{table*}[t]\centering\small
\caption{Full-budget ZORRO refinement on the same four first-mask instances. Widths use $10^{-6}$ units. Every attempted call enters actual complete cost; the plot credits completed evidence by its timestamp.}
\label{tab:zorro-refined-v91}
\begin{tabular}{lrrrrr}\toprule Data&Single-envelope width&Refined width&Completed envelopes&Complete seconds&Model-envelope cover\\\midrule
Supercond. & 185.998 & 185.998 & 0 & 10.058 & --\\
Parkinsons & 17.134 & 10.063 & 15 & 10.047 & yes\\
Wine Quality & 4.503 & 3.200 & 24 & 10.339 & yes\\
Yacht & 19.788 & 2.014 & 41 & 10.183 & yes\\
\bottomrule\end{tabular}\end{table*}
The independent rational audit checks all eighty model-envelope bounds used by refinement on the three completed-model tasks, each shared-symbol regret conversion, every parent-bound inheritance, and complete binary partition coverage. Superconductivity retains the previously checked common cap. Figure~\ref{fig:zorro-time-v91} compares completed-width trajectories, including each method's full setup and candidate/input cost.
\begin{figure*}[t]\centering\includegraphics[width=.97\textwidth]{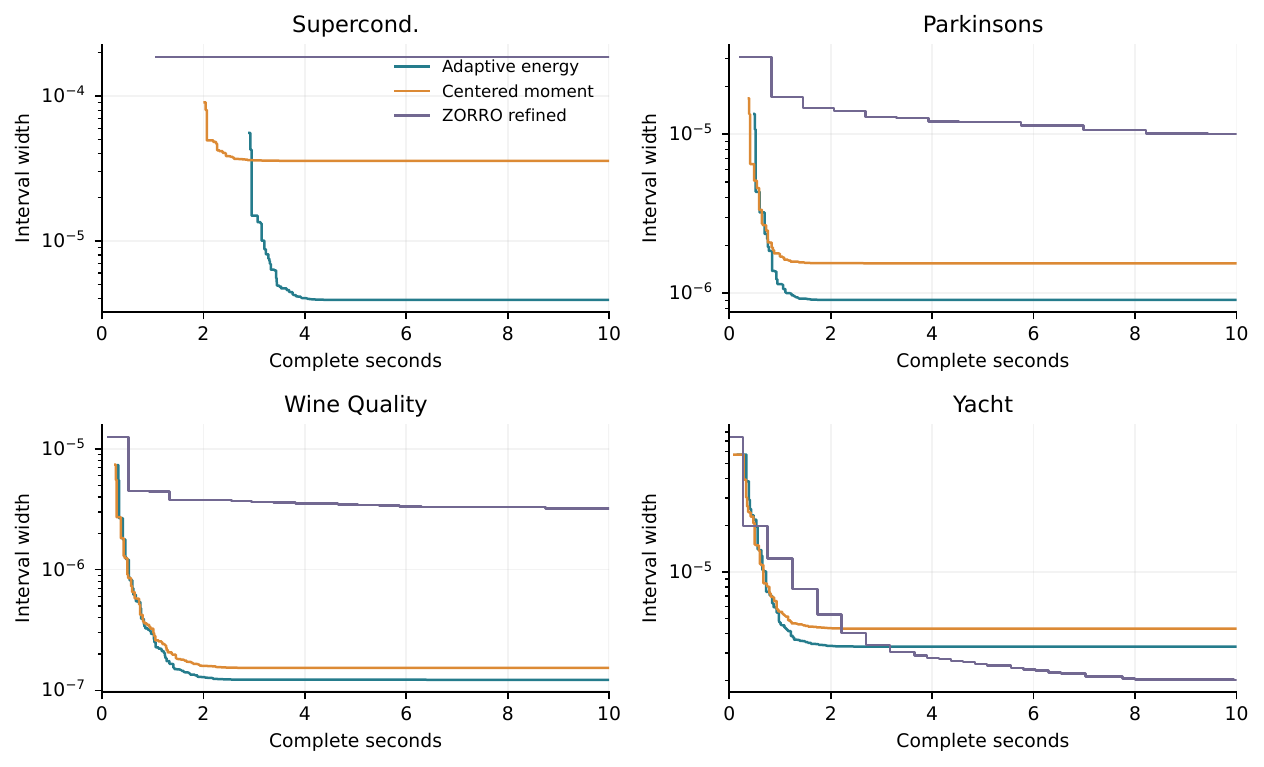}
\caption{Same-instance width against complete computation time. A and M keep their numerical/tail-floor endpoints once finished. Refined ZORRO preserves the parent's valid upper during incomplete child work and uses the full cap. All widths are in normalized objective units.}
\label{fig:zorro-time-v91}\end{figure*}

\section{Independent Row Masks and Core-Energy Decisions}\label{app:core-decisions-v91}
The frozen design file records masks, methods, time caps, response-independent row sampling and the decision rule before the independent-row and new-pool decision runs. The random-row study retains the four primary column lists and widths. It uniformly samples seventeen distinct rows, using the first as signal and the rest as nuisance rows, without responses or residual scores. The candidate is the four-fractional-bit midpoint ridge model. All twenty instances retain $|J_{\mathcal S}|=1$ and the constructive rank-three span. The first-stage sufficient tails and ranks are stored with each receipt. The joint refinement uses the same original boxes and full comparator. It attains width $10^{-5}$ on all twenty original masks, within 255.07 seconds, and all twenty new masks, within 76.21 seconds. Centered moments on the new masks receive the same 600-second cap and terminate at their tail/arithmetic floors. All new cases finish by 120 seconds with refinement; moments meet the target in five.

\begin{table*}[t]\centering\small
\caption{All original and new response-independent masks. Widths use $10^{-6}$ units. Every row uses a rank-three structural first stage A. R adds joint residual refinement; M is centered moments. Original-set M columns are ten-second first-stage records; new-set M uses the common 600-second cap. Original instances restart from their stored binary input; new-mask costs include input construction.}\label{tab:independent-details-v91}
\begin{tabular}{llrrrr}\toprule Data&Seed&A width&R width&M width&R seconds\\\midrule
Supercond. & 91311 & 275.5 & 7.706 & 312.7 & 62.32\\
Supercond. & 91312 & 157.1 & 2.227 & 202.5 & 87.43\\
Supercond. & 91313 & 291.2 & 7.927 & 412.4 & 255.06\\
Supercond. & 91314 & 159.7 & 7.173 & 185.6 & 91.09\\
Supercond. & 91315 & 214.8 & 1.256 & 252.1 & 90.26\\
Parkinsons & 91311 & 27.96 & 0.009376 & 34.56 & 99.38\\
Parkinsons & 91312 & 44.03 & 0.001401 & 56.99 & 57.29\\
Parkinsons & 91313 & 66.33 & 0.00286 & 86.19 & 35.84\\
Parkinsons & 91314 & 52.71 & 0.00234 & 65.16 & 21.22\\
Parkinsons & 91315 & 50.62 & 0.0004626 & 68.75 & 51.57\\
Wine Quality & 91311 & 3.074 & 3.074 & 3.167 & 10.01\\
Wine Quality & 91312 & 3.051 & 3.051 & 3.051 & 8.72\\
Wine Quality & 91313 & 8.444 & 8.444 & 8.444 & 10.03\\
Wine Quality & 91314 & 9.452 & 9.452 & 9.83 & 9.66\\
Wine Quality & 91315 & 4.678 & 4.678 & 5.081 & 6.82\\
Yacht & 91311 & 87.79 & 7.614e-05 & 87.79 & 18.27\\
Yacht & 91312 & 29.7 & 0.009634 & 29.7 & 20.06\\
Yacht & 91313 & 42.41 & 0.001967 & 42.41 & 69.49\\
Yacht & 91314 & 34.11 & 0.001701 & 35.49 & 97.67\\
Yacht & 91315 & 17.43 & 0.002385 & 17.43 & 17.46\\
\midrule
Supercond. & 92411 & 294.7 & 3.174 & 328.3 & 76.20\\
Supercond. & 92412 & 171.4 & 2.715 & 226.2 & 57.06\\
Supercond. & 92413 & 211.3 & 8.08 & 254.4 & 65.55\\
Supercond. & 92414 & 177.8 & 0.2004 & 201.5 & 59.27\\
Supercond. & 92415 & 292.2 & 6.931 & 365.8 & 72.79\\
Parkinsons & 92411 & 40.76 & 0.008721 & 53.53 & 24.64\\
Parkinsons & 92412 & 37.54 & 0.2545 & 47.61 & 56.21\\
Parkinsons & 92413 & 45.09 & 0.007161 & 57.95 & 52.29\\
Parkinsons & 92414 & 60.84 & 0.1055 & 60.84 & 48.53\\
Parkinsons & 92415 & 66.94 & 0.5991 & 72.34 & 55.60\\
Wine Quality & 92411 & 8.039 & 8.039 & 8.039 & 6.03\\
Wine Quality & 92412 & 9.008 & 9.008 & 9.008 & 6.84\\
Wine Quality & 92413 & 4.938 & 4.938 & 5.242 & 4.59\\
Wine Quality & 92414 & 4.273 & 4.273 & 4.273 & 8.27\\
Wine Quality & 92415 & 6.321 & 6.321 & 7.278 & 4.57\\
Yacht & 92411 & 25.39 & 0.0008197 & 25.39 & 9.92\\
Yacht & 92412 & 75.8 & 0.001748 & 75.8 & 13.85\\
Yacht & 92413 & 39.26 & 0.0008925 & 43.87 & 9.78\\
Yacht & 92414 & 24.76 & 0.001942 & 25.23 & 67.88\\
Yacht & 92415 & 13.84 & 0.00289 & 14.39 & 9.88\\
\midrule
\bottomrule\end{tabular}\end{table*}
\subsection{Disjoint pools, a stored predictor and the relative objective budget}
Using the original normalization, we exclude the 1024-row pool 74001, the 1024-row scaling pool 84001 and the primary development pool 84101. A seeded permutation of remaining row indices defines three mutually disjoint 1024-row evaluation pools. The stored coefficient vector is the development first-mask predictor and is held fixed across all new pools and masks. Pool construction and its disjointness checks are recorded locally; its exact sampler and seed are in the portable source. The pool preparation cost is charged conservatively in each condition's input cost.

Rule~\eqref{eq:core-decision-v91} is a model-retention contract for additional uncertainty cost, relative to the current encoded model's nominal representation cost. It is fixed at $\alpha=1/50$ before the new runs. With midpoint interval $[l_c,u_c]$, STOP uses $U\le(1+\alpha)l_c$ and violation uses $L>(1+\alpha)u_c$. Endpoint products and labels are independently checked as exact rationals. The nominal comparator fit includes a verified normal-equation residual correction. Only decisions completed by ten seconds receive credit; any last-call overrun is retained in the timing record. The fixed policy reveals all thirty-three uncertain cells for every non-STOP case and none for STOP. Costs are charged independently to each audit instance and summed over the fixed grid.

\begin{table*}[t]\centering\small
\caption{Core-energy decisions across three disjoint pools. Stops are by the ten-second complete cap. Saved cells follow the fixed thirty-three-cell policy. Median time is conditional on STOP.}
\label{tab:core-decisions-v91}
\begin{tabular}{lrrrrrr}\toprule Method&Pool 1&Pool 2&Pool 3&Total STOP&Saved cells&Median STOP seconds\\\midrule
Coarse & 0 & 0 & 0 & 0 & 0 & --\\
Energy & 5 & 5 & 3 & 13 & 429 & 3.098\\
Adaptive & 5 & 5 & 3 & 13 & 429 & 3.067\\
Moment & 0 & 5 & 0 & 5 & 165 & 2.199\\
SCIP numeric & 0 & 0 & 0 & 0 & 0 & --\\
ZORRO refined & 0 & 0 & 0 & 0 & 0 & --\\
\bottomrule\end{tabular}\end{table*}
\begin{table*}[t]\centering\small
\caption{All new condition labels at the primary 2\% budget. E/A/M are contraction energy/adaptive energy/centered moment. S/U denote STOP/unresolved; all coarse, SCIP and refined-ZORRO labels are U. Midpoint regret and budget widths use $10^{-6}$ units.}
\begin{tabular}{llrrrrrr}\toprule Pool&Seed&Midpoint&E width&M width&E&A&M\\\midrule
1 & 91411 & 2397.494 & 4.341 & 54.203 & S & S & U\\
1 & 91412 & 2398.536 & 3.652 & 42.800 & S & S & U\\
1 & 91413 & 2396.162 & 4.290 & 56.023 & S & S & U\\
1 & 91414 & 2394.529 & 3.799 & 39.834 & S & S & U\\
1 & 91415 & 2394.600 & 3.945 & 45.031 & S & S & U\\
2 & 91411 & 2349.725 & 3.624 & 25.013 & S & S & S\\
2 & 91412 & 2348.823 & 2.958 & 18.873 & S & S & S\\
2 & 91413 & 2349.773 & 3.927 & 26.123 & S & S & S\\
2 & 91414 & 2349.700 & 3.507 & 24.502 & S & S & S\\
2 & 91415 & 2350.506 & 3.287 & 23.409 & S & S & S\\
3 & 91411 & 2751.258 & 8.948 & 121.385 & S & S & U\\
3 & 91412 & 2761.790 & 11.413 & 139.351 & U & U & U\\
3 & 91413 & 2753.522 & 8.993 & 117.779 & S & S & U\\
3 & 91414 & 2765.084 & 10.214 & 118.237 & U & U & U\\
3 & 91415 & 2760.569 & 10.885 & 133.867 & S & S & U\\
\bottomrule\end{tabular}\end{table*}
\begin{table*}[tp]\centering\small
\caption{All predeclared relative-budget sensitivities, reported as STOP counts out of fifteen.}
\begin{tabular}{lrrr}\toprule Method&1\%&2\%&5\%\\\midrule
Coarse & 0 & 0 & 0\\
Energy & 5 & 13 & 15\\
Adaptive & 5 & 13 & 15\\
Moment & 0 & 5 & 10\\
SCIP numeric & 0 & 0 & 2\\
ZORRO refined & 0 & 0 & 0\\
\bottomrule\end{tabular}\end{table*}

\begin{table*}[t]\centering\small
\caption{Precision refinement of the same fifteen decision instances. The original 2\% rule and predictor are unchanged. Widths use $10^{-6}$ units. R seconds measure all recomputed algorithm stages from the stored binary instance.}\label{tab:decisions-refined-v92}
\begin{tabular}{llrrrr}\toprule Pool&Seed&R width&R seconds&2\% label&Saved cells\\\midrule
1 & 91411 & 4.342 & 10.07 & STOP & 33\\
1 & 91412 & 3.652 & 10.04 & STOP & 33\\
1 & 91413 & 4.29 & 10.05 & STOP & 33\\
1 & 91414 & 3.799 & 10.05 & STOP & 33\\
1 & 91415 & 3.945 & 10.04 & STOP & 33\\
2 & 91411 & 3.624 & 9.84 & STOP & 33\\
2 & 91412 & 2.958 & 9.61 & STOP & 33\\
2 & 91413 & 3.927 & 9.89 & STOP & 33\\
2 & 91414 & 3.507 & 9.65 & STOP & 33\\
2 & 91415 & 3.287 & 9.86 & STOP & 33\\
3 & 91411 & 8.948 & 10.04 & STOP & 33\\
3 & 91412 & 0.3638 & 91.67 & STOP & 33\\
3 & 91413 & 9.027 & 10.07 & STOP & 33\\
3 & 91414 & 0.449 & 87.90 & STOP & 33\\
3 & 91415 & 0.3911 & 80.03 & STOP & 33\\
\bottomrule\end{tabular}\end{table*}
Joint refinement certifies all fifteen STOP decisions within 91.68 seconds, saving 495 cells under the same acquisition policy. The ten-second energy and moment results above retain their original timing boundary. The additional two decisions occur after that boundary.

\section{General-Coarsening Experiments and Components}\label{app:general-public}
The sixty-condition study uses midpoint ridge candidates, $\lambda=1/32$, 128 uncertain cells and all original features. Its first-stage tail and final-width targets are $2.5\times10^{-6}$ and $3\times10^{-6}$, respectively. A one-second structural search is followed, when needed, by the geometry-based choice of moment or compressed-row refinement in Appendix~\ref{app:compact-profile}, within a 9000-second complete algorithm cap. Complete runs use one pinned logical CPU of an Intel Xeon Gold 5320 with single-thread numerical libraries.
\begin{table*}[t]
\caption{Complete original-regret certification on all sixty general-coarsening conditions, five masks per row. Every condition retains all model features and 128 uncertain cells in 64 rows. Widths use $10^{-6}$ normalized objective units; the common target is 3. Complete seconds include representation selection, input and setup, reference and witness fitting, numerical proposals and exact bound evaluation. Replay seconds report the median independent rational audit time.}\label{tab:confirmation}
\centering\scriptsize\setlength{\tabcolsep}{4pt}\begin{tabular}{lrrrrrrrrr}\toprule
Data&$p$&Bin&Cols&Certified&Median width&Max width&Median(s)&Max(s)&Replay(s)\\\midrule
Diabetes&11&1/32&2&5/5&0.066&0.112&30.36&32.13&16.08\\
Diabetes&11&1/32&8&5/5&0.117&0.147&14.09&22.96&15.94\\
Diabetes&11&1/8&2&5/5&2.770&2.999&152.95&156.76&43.25\\
Diabetes&11&1/8&8&5/5&2.131&2.549&43.17&45.79&14.77\\
Concrete&9&1/32&2&5/5&0.656&0.706&0.29&0.44&1.23\\
Concrete&9&1/32&8&5/5&0.509&0.790&0.29&0.31&1.20\\
Concrete&9&1/8&2&5/5&0.001&0.008&37.60&38.18&14.93\\
Concrete&9&1/8&8&5/5&0.331&0.339&43.71&44.49&14.84\\
Superconductivity&82&1/32&2&5/5&0.336&0.541&3.44&3.54&43.46\\
Superconductivity&82&1/32&8&5/5&0.552&0.859&3.38&3.52&43.14\\
Superconductivity&82&1/8&2&5/5&2.702&2.949&3.92&4.06&45.69\\
Superconductivity&82&1/8&8&5/5&1.193&1.547&110.77&111.87&114.72\\
\bottomrule\end{tabular}\end{table*}

\subsection{Additional fixed-rank component controls}
Six fixed tables additionally use rank 3, one-second caps and a $10^{-6}$ target under three specified alternatives: center-gradient bounds only, leading directions of the transformed midpoint ridge metric, and an anchor-Cholesky transformation. Every alternative is run on every table. The midpoint transformation approximately whitens $M_{\mathrm{mid}}$, so this metric control is distinct from the feature-Gram PCA comparison in the main text. The table separates these implementation choices from the rank change.
\begin{table*}[t]\caption{Own absolute widths in $10^{-6}$ units at fixed rank 3. Base: common midpoint transformation, gradient proposal, combined corner/center bound. Center: remove corner interpolation. Metric: use leading eigenvectors of the actual transformed midpoint ridge metric $T^\top M_{\mathrm{mid}}T$. Anchor: use a verified anchor-Cholesky transformation. Original candidate and penalty are held fixed.}\centering\small\begin{tabular}{lrrrrr}\toprule Data&Cols&Base&Center&Metric&Anchor\\\midrule
Diabetes&2&94.24&125.07&131.28&93.29\\
Diabetes&8&78.85&78.85&78.96&63.74\\
Concrete&2&5.21&10.27&11.11&5.32\\
Concrete&8&8.30&8.33&8.95&7.73\\
Superconductivity&2&2.65&6.35&7.57&1.96\\
Superconductivity&8&7.64&7.64&8.43&6.04\\\bottomrule\end{tabular}\end{table*}

\section{End-to-End Precision and the Fixed-Point Contract}\label{app:complete-v91}
At a fixed retained rank, valid row-query enclosures, a verified containing root, valid node uppers and the verified additive tail yield $L\le H^*\le U$ at every search cap and tail floor. A requested width is attained when the recorded total error meets that request. For rank-adaptive completeness, permit the full physical span with $R=I$ and an invertible verified $T$; $T=I$ is always available. Positive ridge supplies a uniform coercivity constant, and bounded row boxes supply a finite original upper $U_0$ and containing displacement radius. The discarded error is exactly zero.

For target $\delta>0$, choose a finite grid for which the curvature-interpolation error is at most $\delta/2$. Enclose every required row sum and regularization term to overall query error at most $\delta/4$ on either side, increasing arithmetic precision as required. The maximum grid lower comes from a feasible original table and comparator. The largest interpolated upper exceeds it by at most $\delta/2+2(\delta/4)=\delta$. The number of grid points is finite by the complexity bound in Section~6. This establishes attainability with sufficient search and arithmetic precision. The fixed-rank implementation records its tail and arithmetic floor separately; rank enrichment is an explicit option in this proposition.

\subsection{Executable fixed-point predictor contract}\label{app:fixed-point-v91}
The experiment specifies signed eight-bit coefficient integers $b_j$ decoded as $b_j/16$, and signed sixteen-bit feature integers $q_{ij}$ decoded as $q_{ij}/1024$. The coefficient range is $[-8,127/16]$; the input range is $[-32,32767/1024]$. Prediction uses the signed thirty-two-bit sum $s_i=\sum_jq_{ij}b_j$ and decoding $s_i/16384$. All twenty primary candidates and midpoint tables satisfy this representation. The executable check verifies exact agreement with decoded real arithmetic and bounds every partial accumulation by $\sum_j\max\{|1024L_{ij}|,|1024U_{ij}|\}|b_j|<2^{31}$. The contract supplies a concrete fixed-grid auditing task; candidates with eight fractional bits and unrounded candidates are separate candidate-precision conditions. Continuous-box regret certificates also cover all legal encoded completions.

\section{Joint Residual Refinement}\label{app:joint-refinement-v92}
The residual-energy first stage identifies small certifying spans. When its sufficient tail exceeds the requested precision, we refine the original completion domain with a reference-residual certificate. The two stages share the original candidate, ridge penalty, completion boxes and full-feature retrained comparator.

\begin{proposition}[Joint reference-residual bound]\label{prop:joint-refinement-v92}
Let $G_0\succ0$ be a uniform lower bound on $M_X$, and verify $T^\top G_0T\succeq\mu I$ for an invertible actual matrix $T$ and $\mu>0$. For any fixed reference $\vartheta$, put $g_\vartheta(X)=M_X\vartheta-r_X$. Then
\begin{align}
h_\beta(X)&=J_X(\beta)-J_X(\vartheta)\nonumber\\
 &\quad+g_\vartheta(X)^\top M_X^{-1}g_\vartheta(X),\label{eq:reference-identity-v92}\\
h_\beta(X)&\le P_\vartheta(X):=J_X(\beta)-J_X(\vartheta)\nonumber\\
 &\quad+\mu^{-1}\|T^\top g_\vartheta(X)\|^2.\label{eq:joint-refinement-v92}
\end{align}
On an original completion subbox, $P_\vartheta$ is a polynomial of degree at most four in normalized cell coordinates $z\in[0,1]^d$.
\end{proposition}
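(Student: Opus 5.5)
The identity \eqref{eq:reference-identity-v92} is an exact completion-of-squares statement, so I will prove it first, for a fixed completion $X$. Write $\theta_X^*=M_X^{-1}r_X$ for the ridge minimizer. Since $J_X$ is a quadratic with Hessian $2M_X$, for every $\theta$ we have
\[
J_X(\theta)-\min J_X=(\theta-\theta_X^*)^\top M_X(\theta-\theta_X^*)=g_\theta(X)^\top M_X^{-1}g_\theta(X),
\]
where $g_\theta(X)=M_X\theta-r_X$. With $\theta=\beta$ this is exactly the representation $h_\beta(X)=g_X^\top M_X^{-1}g_X$ stated in Section~2. Now write
\[
h_\beta(X)=[J_X(\beta)-J_X(\vartheta)]+[J_X(\vartheta)-\min J_X].
\]
Applying the same identity with $\theta=\vartheta$ to the second bracket gives \eqref{eq:reference-identity-v92}. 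No assumption on $\vartheta$ is needed.

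For the inequality \eqref{eq:joint-refinement-v92}, I only need $g^\top M_X^{-1}g\le\mu^{-1}\|T^\top g\|^2$ for every vector $g$. From $M_X\succeq G_0$ we get $M_X^{-1}\preceq G_0^{-1}$. Invertibility of $T$ gives $G_0^{-1}=T(T^\top G_0T)^{-1}T^\top$. The verified bound $T^\top G_0T\succeq\mu I$ then gives $(T^\top G_0T)^{-1}\preceq\mu^{-1}I$. Combining these,
\[
g^\top M_X^{-1}g\le g^\top T(T^\top G_0T)^{-1}T^\top g\le\mu^{-1}\|T^\top g\|^2.
\]
Setting $g=g_\vartheta(X)$ and substituting into \eqref{eq:reference-identity-v92} proves the bound. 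The one point to state explicitly is that $G_0$ must lower-bound $M_X$ uniformly over the domain (for example the observed-anchor Gram, as in Section~\ref{sec:anchor}), so that the same $T,\mu$ serve every completion.

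For the degree claim, parametrize a completion subbox affinely. Each uncertain cell is $x_{ij}=l_{ij}+(u_{ij}-l_{ij})z_k$ with $z\in[0,1]^d$, so every row $x_i$ is affine in $z$. Then $J_X(\beta)-J_X(\vartheta)=\sum_iw_i[(x_i^\top\beta-y_i)^2-(x_i^\top\vartheta-y_i)^2]$ has degree at most two, since the ridge terms are constants. Each row contributes $w_ix_i(x_i^\top\vartheta-y_i)$ to $g_\vartheta(X)$, which has degree at most two, while $\lambda\vartheta$ is constant. Hence $T^\top g_\vartheta(X)$ is a vector of polynomials of degree at most two, and its squared norm has degree at most four. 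I do not expect a real obstacle here. The only care needed is the Loewner-order step with a nonorthogonal $T$, which the identity $G_0^{-1}=T(T^\top G_0T)^{-1}T^\top$ handles.
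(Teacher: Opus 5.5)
Your proposal is correct and follows the same route as the paper. The identity comes from completing the square around the exact ridge optimum, and the inequality from the Loewner chain $M_X^{-1}\preceq G_0^{-1}=T(T^\top G_0T)^{-1}T^\top\preceq\mu^{-1}TT^\top$, where your explicit factorization through $T$ spells out the step the paper states in one line; the degree-four claim uses the same affine dependence of each row on its normalized cell coordinates.
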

\begin{proof}
Completing the square around the exact full-feature ridge solution gives~\eqref{eq:reference-identity-v92}. The verified metric implies $M_X^{-1}\preceq\mu^{-1}TT^\top$, which gives~\eqref{eq:joint-refinement-v92}. Each row is affine in its uncertain cells. The objective difference is quadratic, $g_\vartheta$ is quadratic, and its squared transformed norm has degree at most four.
\end{proof}

The polynomial retains the dependence between the objective difference and the normal-equation residual. Write $x_{ij}=x^0_{ij}+d_a z_a$ for cell $a=(i,j)$, with $x^0$ an endpoint table and signed length $d_a$. Before applying $T^\top$, the constant gradient coefficient is $g_\vartheta(X^0)$; the linear coefficient is $w_i d_a[(x_i^{0\top}\vartheta-y_i)e_j+\vartheta_j x_i^0]$. The square coefficient is $w_i d_a^2\vartheta_j e_j$. For two distinct cells $a=(i,j)$ and $b=(i,k)$ in one row, their product coefficient is $w_i d_a d_b(\vartheta_k e_j+\vartheta_j e_k)$. Squaring these explicit coefficients constructs the joint polynomial with outward arithmetic.

\paragraph{Lifted row residuals and box localizers.}
For a row with several uncertain entries, introduce a normalized residual $u_i\in[0,1]$ through
\[
 x_i^\top\vartheta-y_i=\ell_i+s_i u_i,
\]
where $s_i>0$ and $[\ell_i,\ell_i+s_i]$ contains the exact residual range. We use dyadic endpoints and spans. Two outward-rounded affine inequalities encode this relation, together with their products by each normalized variable and its complement. The transformed normal residual is bilinear in cell coordinates and the corresponding $u_i$. Its moment basis therefore has at most $1+2d+m$ entries for $d$ uncertain cells in $m$ rows.

Let $v_i$ contain the constant, the normalized cells in row $i$, and $u_i$. For each coordinate $q$ in that row, we add the localizing constraint
\begin{equation}
 \mathcal L_{i,q}(y)
 :=\mathbb E_y\!\left[q(1-q)v_i v_i^\top\right]\succeq0.
 \label{eq:row-localizer-v92}
\end{equation}
Here $\mathbb E_y$ denotes substitution of monomials by lifted moments. Every actual completion satisfies~\eqref{eq:row-localizer-v92}, since $q(1-q)\ge0$ and $v_i v_i^\top\succeq0$. For a row containing $k_i$ uncertain cells, this adds $k_i+1$ matrices of order $k_i+2$. These constraints retain the row's box geometry in the fourth-degree relaxation.

\paragraph{A validated moment dual.}
We use polynomial moment relaxations~\citep{lasserre2001}, modeled in CVXPY~\citep{diamond2016cvxpy}, with SCS numerical proposals~\citep{odonoghue2016scs}.
Let $y_\alpha=z^\alpha$ for every required monomial and its factors, with $0\le y_\alpha\le1$. Product relations give $y_{\alpha+\gamma}\le y_\alpha,y_\gamma$ and $y_{\alpha+\gamma}\ge y_\alpha+y_\gamma-1$. Square tangents use $y_{2\alpha}\ge2t y_\alpha-t^2$ for $t\in\{0,1/4,1/2,3/4,1\}$. Write these inequalities as $Ay\le b$. The moment matrix $\mathcal M(y)$ indexed by the constant, linear and within-row quadratic gradient monomials satisfies $\mathcal M(y)\succeq0$ on every genuine completion.

Enclose the polynomial coefficients above as $c_0,c$. For any nonnegative multiplier $\lambda$ and any matrix $R$, write
\[
 \langle RR^\top,\mathcal M(y)\rangle=s_0+s^\top y.
\]
Every allowed completion obeys the independently checkable upper bound
\begin{equation}
 P_\vartheta(X)\le c_0+\lambda^\top b+s_0
       +\sum_j\max\{0,c_j-(A^\top\lambda)_j+s_j\}.
 \label{eq:validated-moment-dual-v92}
\end{equation}
Indeed, add the nonnegative moment quadratic and the nonnegative inequality slack to the polynomial, then bound the remaining linear form on $[0,1]$. Numerical conic optimization proposes the multipliers. The implementation rounds $\lambda$ to nonnegative dyadics and stores a dyadic factor $R$, which establishes positive semidefiniteness directly. Outward evaluation of~\eqref{eq:validated-moment-dual-v92} includes the complete dual residual. The independent auditor reconstructs the original polynomial, actual metric, multiplier constraints and factor products as exact rationals.

For the localized relaxation, each matrix in~\eqref{eq:row-localizer-v92} receives its own dyadic factor. Their inner products contribute additional constant and monomial coefficients to $s_0$ and $s$ in~\eqref{eq:validated-moment-dual-v92}. The same residual correction proves the resulting upper bound. The independent checker also reconstructs the residual range, affine links, and every localizing polynomial $q-q^2$.

\paragraph{Reference and subdivision rules.}
A fixed 64-start feasible search supplies a completed-table lower witness. An eight-step finite-pool minimax exchange proposes a reference, using 32 starts per exchange step and fixed random seeds. Each proposed reference receives the uniform polynomial certificate above. For an unresolved node, we bisect the cell maximizing relaxed moment variance times its polynomial coefficient sensitivity. The parent remains active until both covering children finish. The largest active upper, intersected with the first-stage upper, and the best feasible original-table lower give the global interval. Every reference fit, witness search, polynomial construction, conic solve and dual evaluation enters the refinement clock.

The prespecified new-mask computation cap is 600 seconds, with completed-evidence checkpoints at 10, 60, 120, 300 and 600 seconds. All twenty new masks finish within 76.21 seconds. The original rank-three guarantee belongs to the structural first stage; this completion-domain refinement supplies additional accuracy when needed. Its computational cost depends on the unresolved cells and the moment relaxation.

\subsection{Row-edge duals and certificate screening}\label{app:edge-refinement-v22}
For a small number of dense uncertain rows, we use Theorem~\ref{thm:row} to place the coupling certificate on a product of row edges. Let $m$ be the number of uncertain rows, fix a reference $\vartheta$, and write
\begin{align*}
 g_0&=G_0\vartheta-r_0,\\
 \gamma_i(x_i)&=T^\top\!\left[w_i x_i(x_i^\top\vartheta-y_i)+g_0/m\right].
\end{align*}
Then $\sum_i\gamma_i=T^\top g_\vartheta(X)$. With
\[
 f_i(x_i)=w_i[(x_i^\top\beta-y_i)^2-(x_i^\top\vartheta-y_i)^2],
\]
the joint majorant has the form
\[
 P_\vartheta(X)=c+\sum_i f_i(x_i)+\mu^{-1}\Big\|\sum_i\gamma_i(x_i)\Big\|^2,
\]
where $c=J_0(\beta)-J_0(\vartheta)$ contains the complete rows and ridge term.

\begin{proposition}[Row-edge coupling certificate]\label{prop:edge-dual-v22}
Let $Q\succeq0$ have scalar block $Q_{00}=\alpha$, blocks $Q_{i0}=d_i$, diagonal blocks $Q_{ii}=H_i-\mu^{-1}I$, and off-diagonal row blocks $Q_{ij}=-\mu^{-1}I$. Define
\[
 R_i=f_i+\gamma_i^\top H_i\gamma_i+2d_i^\top\gamma_i,
 \qquad b_i\ge\max_{x_i\in E_i}R_i(x_i),
\]
where $E_i$ is the union of the row box's edges. Then
\[
 H^*\le U:=c+\alpha+\sum_i b_i.
\]
\end{proposition}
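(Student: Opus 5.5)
The plan is to chain three inequalities: the joint reference-residual majorant, a quadratic-form inequality supplied by $Q\succeq0$, and a row-face reduction in the spirit of Theorem~\ref{thm:row} that moves each row maximum onto the edges $E_i$.

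\emph{Step 1 (majorant).} Fix any legal completion $X$. By Proposition~\ref{prop:joint-refinement-v92}, $h_\beta(X)\le P_\vartheta(X)$. The decomposition displayed before the statement rewrites this as $c+\sum_i f_i(x_i)+\mu^{-1}\|\sum_i\gamma_i(x_i)\|^2$. It uses $\sum_i\gamma_i=T^\top g_\vartheta(X)$ and puts the complete rows and ridge into $c$.

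\emph{Step 2 (decoupling via $Q$).} Evaluate the quadratic form of $Q$ at the stacked vector $\xi=(1,\gamma_1,\dots,\gamma_m)$. The diagonal blocks contribute $\sum_i\gamma_i^\top(H_i-\mu^{-1}I)\gamma_i$. The off-diagonal blocks $-\mu^{-1}I$ contribute $-\mu^{-1}\sum_{i\ne j}\gamma_i^\top\gamma_j$. Together these two groups give $\sum_i\gamma_i^\top H_i\gamma_i-\mu^{-1}\|\sum_i\gamma_i\|^2$. The scalar and cross blocks add $\alpha+2\sum_i d_i^\top\gamma_i$. Hence $0\le\xi^\top Q\xi$ is exactly
\[
\mu^{-1}\Big\|\sum_i\gamma_i\Big\|^2\le\alpha+\sum_i\bigl(\gamma_i^\top H_i\gamma_i+2d_i^\top\gamma_i\bigr).
\]
Substituting into Step 1 gives $h_\beta(X)\le c+\alpha+\sum_iR_i(x_i)$. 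This bound is separable across rows, so maximizing over the product domain $\cX$ gives $H^*\le c+\alpha+\sum_i\max_{x_i\in[L_i,U_i]}R_i(x_i)$.

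\emph{Step 3 (row boxes to edges).} This is the main obstacle. $R_i$ is not a function of two residuals only, because $\gamma_i$ contains the product $x_i(x_i^\top\vartheta-y_i)$. The fix is to copy the proof of Theorem~\ref{thm:row} and fix the reference residual $\rho=x_i^\top\vartheta-y_i$ at the value of a maximizer.
\begin{itemize}
\item On the slice $\{x\in[L_i,U_i]:x^\top\vartheta-y_i=\rho\}$, $\gamma_i$ is affine in $x$.
\item On the same slice, $f_i$ equals $w_i(x^\top\beta-y_i)^2-w_i\rho^2$, which is convex.
\item Since $Q\succeq0$ forces $Q_{ii}=H_i-\mu^{-1}I\succeq0$, we have $H_i\succ0$. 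Thus $\gamma_i^\top H_i\gamma_i+2d_i^\top\gamma_i$ is a convex quadratic of an affine map, hence convex.
\end{itemize}
So $R_i$ is convex on this compact polytope and attains its maximum at an extreme point. As in Theorem~\ref{thm:row}, an extreme point of a box cut by one hyperplane has at most one coordinate strictly between its bounds, since two free coordinates allow a nonzero perturbation that preserves the hyperplane. Such a point lies on an edge or vertex of the row box, i.e.\ in $E_i$. Degenerate cases (a fixed row, $\vartheta$ with zero uncertain components, constant residual) are covered because the slice is then the whole box or a face, and the same extreme-point argument applies.

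Therefore $\max_{[L_i,U_i]}R_i=\max_{E_i}R_i\le b_i$, and hence $H^*\le c+\alpha+\sum_i b_i=U$.
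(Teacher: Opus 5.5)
Your Steps 1 and 2 are exactly the paper's direct expansion $U-P_\vartheta(X)=v^\top Qv+\sum_i[b_i-R_i(x_i)]$. Your Step 3, however, takes a genuinely different route to the edges, and it is correct.

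\textbf{The paper's route.} It applies Theorem~\ref{thm:row} to the original regret $h_\beta$ itself, slicing at the comparator residual $x^\top\theta-y_i$ of a maximizing pair. So some global maximizer of $h_\beta$ already lies in $\prod_iE_i$, and the expanded inequality is only ever invoked on that edge product.

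\textbf{Your route.} You prove a row-face property for each separated term $R_i$ directly. You slice at the reference residual $x^\top\vartheta-y_i$, where $\gamma_i$ is affine. The bound $H_i\succeq\mu^{-1}I$, read off the diagonal block of $Q$, then makes $R_i$ convex on each slice, so $\max_{[L_i,U_i]}R_i=\max_{E_i}R_i$.

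\textbf{What each buys.} Your version gives a slightly stronger conclusion: $U$ bounds the joint majorant $P_\vartheta$ on the entire completion box, not just on $\prod_iE_i$. It also survives restriction to reference-residual slabs, since your slices are level sets of that residual. The paper's route is shorter and reuses Theorem~\ref{thm:row} as a black box. It needs no information about $R_i$ off the edges, in particular no convexity.

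The obstacle you identified, that $R_i$ is not a function of two residuals, never arises when the row-face reduction is applied to $h_\beta$ rather than to the separable upper bound.
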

\begin{proof}
For $v=(1,\gamma_1,\ldots,\gamma_m)$, direct expansion gives
\begin{align*}
 U-P_\vartheta(X)&=v^\top Qv+\sum_i[b_i-R_i(x_i)]\\
 &\ge0\quad\text{on }\prod_iE_i.
\end{align*}
The row-face theorem places an original-regret maximizer in this product, and the joint majorant bounds its value.
\end{proof}

Each row feature is quadratic in its normalized cells. On one edge, $R_i$ is a quartic in one scalar. The numerical exchange method solves the coupling SDP on a finite set of row points and adds distinct violating points from the row oracles. Its reported upper uses the complete row-oracle bounds. The verifier reconstructs the exact polynomial coefficients and cross-row matrix blocks, checks a rational congruence of $Q$ with positive diagonal-dominance margins, and encloses every quartic maximum by integer Bernstein subdivision.

The certificate also restricts the remaining search. For a target value $\tau<U$, every edge-domain completion with original regret exceeding $\tau$ must satisfy
\begin{equation}
 R_i(x_i)>b_i-(U-\tau)\quad\text{for every uncertain row }i.
 \label{eq:edge-screen-v22}
\end{equation}
Indeed, a violation of one inequality makes the corresponding nonnegative slack at least $U-\tau$. An exact Bernstein upper below this cutoff excludes the entire edge segment. The retained segments cover every possible target violation.

We partition this cover using the fixed-reference residuals $x_i^\top\vartheta-y_i$. A residual slab clips each edge to an interval, so the same quartic verifier applies. The numerical master proposes the row with largest residual variance and a dyadic split at its weighted mean, clipped to the inner 70\% of the available range. Both complementary children are verified before replacing their parent. The final original-regret upper is
\[
 \max\{\tau,\max_{B\in\mathcal L}U_B\},
\]
intersected with the verified root bound; $\mathcal L$ is the complete active cover. Every node retains the original candidate, feature intervals, ridge penalty and full-feature comparator.

The dense-row protocol fixes the target $\varepsilon=10^{-8}$, uses $\tau=L+0.99\varepsilon$, and sets the complete cap at 1800 seconds. It uses a 4096-start feasible search, an eight-step reference exchange with 32 starts per step, up to 180 seconds for the root proposal and 90 seconds for each subsequent proposal, and at most 65 evaluated nodes. Exact bound evaluation, screening, reference construction and all completed child work enter the algorithm clock. A separate rational replay checks the original witness, matrix, exclusions and complete tree. The implementation and all four dense-row conditions use this common protocol.

\subsection{Algebraic Row Compression and Observed-Column Elimination}\label{app:compact-profile}
The general-coarsening solver combines two exact reductions with the validated row-edge bound. Algebraic row coordinates reduce the coupling matrices, and analytic elimination of fully observed columns reduces the outer comparator search when uncertainty is concentrated in a few columns.

\paragraph{Algebraic row coordinates.}
For row $i$, let $x_i=x_i^0+D_i z_i$, where $D_i$ has one scaled coordinate vector for each of its $k_i$ uncertain cells and $z_i\in[0,1]^{k_i}$. Fix a reference $\vartheta$, put $e_i^0=x_i^{0\top}\vartheta-y_i$ and $q_i=\vartheta^\top D_i z_i$, and define
\[
 u_i(z_i)=\begin{bmatrix}q_i\\z_i(e_i^0+q_i)\end{bmatrix},\qquad
 V_i=w_i\begin{bmatrix}T^\top x_i^0&T^\top D_i\end{bmatrix}.
\]
The transformed row-gradient increment is exactly $V_i u_i$. Thus, for a constant $a_0$ containing the endpoint gradients, observed rows and ridge,
\[
 T^\top g_\vartheta(X)=a_0+\sum_i V_i u_i(z_i).
\]
This uses at most $k_i+1$ scalar coordinates per row, independently of the number of model features. Exact zero columns can be omitted, and diagonal dyadic rescaling preserves the identity.

Write the joint majorant as
\[
 C+\sum_i\bar f_i(z_i)+c\Big\|\sum_iV_i u_i(z_i)\Big\|^2,
 \qquad c=\mu^{-1},
\]
where $C$ absorbs $c\|a_0\|^2$ and $\bar f_i$ absorbs $2c\,a_0^\top V_i u_i$. Choose $H_i\succ0$, $d_i$ and $\alpha$ satisfying
\begin{equation}
 \begin{bmatrix}\alpha&0\\0&c^{-1}I\end{bmatrix}
 -\sum_i\begin{bmatrix}d_i^\top\\V_i\end{bmatrix}
 H_i^{-1}\begin{bmatrix}d_i&V_i^\top\end{bmatrix}\succeq0.
 \label{eq:compact-schur}
\end{equation}
Two Schur complements give
\[
 c\Big\|\sum_i V_i u_i\Big\|^2
 \le\alpha+\sum_i\{u_i^\top H_i u_i+2d_i^\top u_i\}.
\]
Consequently, the sum of the corresponding row-edge quartic maxima plus $C+\alpha$ bounds the original regret. The numerical master uses one coupling cone and one small block cone per row. The independent verifier checks each local inverse exactly, encloses its Schur contribution by a dyadic positive-semidefinite upper matrix, verifies the central matrix by rational LDL elimination, and checks all row quartics by Bernstein subdivision. These checks retain all cross-row terms.

\paragraph{Curvature from uncertain rows.}
Let $c_i,h_i$ be row midpoints and halfwidths, and let
\[
 D=\operatorname{diag}\!\left(\sum_{i\notin\mathcal A}w_i k_i h_{ij}^2\right)_j.
\]
For $0<t<1$, matrix Young's inequality gives the uniform lower matrix
\begin{equation}
 M_X\succeq G_t:=G_0+(1-t)\sum_{i\notin\mathcal A}w_i c_i c_i^\top
 -(t^{-1}-1)D.\label{eq:young-curvature}
\end{equation}
Indeed, $(c+\delta)(c+\delta)^\top\succeq(1-t)cc^\top-(t^{-1}-1)\delta\delta^\top$ and $\delta\delta^\top\preceq k_i\operatorname{diag}(h_i^2)$. The implementation uses $t=1/2$ when its rounded metric is verified positive. This strengthens the common curvature used by the reference certificate.

\begin{proposition}[Profiling fully observed columns]\label{prop:observed-profile}
Let $J$ contain every uncertain feature column, let $K$ be its complement, and put $A=\lambda I+X_K^\top W X_K\succ0$. For $t\in\mathbb R^{|J|}$, define
\[
 F_J(t)=\max_{X\in\mathcal X}\{J_X(\beta)-\min_b J_X(\theta_J=t,\theta_K=b)\}.
\]
For any reference $\vartheta$ with $\vartheta_J=t$, its fixed-completion objective equals
\begin{equation}
 J_X(\beta)-J_X(\vartheta)+g_{\vartheta,K}(X)^\top A^{-1}g_{\vartheta,K}(X).
 \label{eq:profile-identity}
\end{equation}
Each fixed-completion function is concave quadratic in $t$, with negative quadratic coefficient
\[
 S_X=M_{JJ}-M_{JK}A^{-1}M_{KJ}\preceq M_{JJ}.
\]
If $U_0\ge H^*$, then
\[
 H^*=\max_{|t_j-\beta_j|\le\sqrt{U_0/\lambda}}F_J(t).
\]
\end{proposition}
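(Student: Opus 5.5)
The plan is to fix a completion $X$ and work in the block coordinates $\theta=(\theta_J,\theta_K)$. Because $K$ contains only fully observed columns, the block $X_K$ does not depend on $X$. So $A=\lambda I+X_K^\top WX_K$ is a fixed positive definite matrix, and it is exactly the $KK$ block of $M_X$ for every completion.

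\textbf{Step 1: the identity \eqref{eq:profile-identity}.} Fix $X$ and $t$. The function $b\mapsto J_X(\theta_J=t,\theta_K=b)$ is a strictly convex quadratic with Hessian $2A$. Take any reference $\vartheta$ with $\vartheta_J=t$ and expand around $\vartheta_K$. The gradient in $b$ at $\vartheta_K$ is $2g_{\vartheta,K}(X)$, the $K$ block of $M_X\vartheta-r_X$. Completing the square gives
\[
\min_b J_X(t,b)=J_X(\vartheta)-g_{\vartheta,K}^\top A^{-1}g_{\vartheta,K}.
\]
Subtracting this from $J_X(\beta)$ gives \eqref{eq:profile-identity}. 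This is the same computation as \eqref{eq:reference-identity-v92}, restricted to the $K$ block.

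\textbf{Step 2: concavity in $t$.} The profiled minimum $\min_b J_X(t,b)$ is the partial minimization of a quadratic whose Hessian is $2M_X$. Its Hessian in $t$ is therefore twice the Schur complement $S_X=M_{JJ}-M_{JK}A^{-1}M_{KJ}$, which is positive definite. Since $J_X(\beta)$ does not depend on $t$, the fixed-completion objective is concave quadratic in $t$ with negative quadratic coefficient $S_X$. The inequality $S_X\preceq M_{JJ}$ holds because $M_{JK}A^{-1}M_{KJ}\succeq0$.

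\textbf{Step 3: the localization.} Pick a maximizing pair $(X^*,\theta^*)$ for $H^*$; one exists by compactness together with positive ridge. Set $v=\beta-\theta^*=M_{X^*}^{-1}g_{X^*}$. Then
\[
H^*=h_\beta(X^*)=v^\top M_{X^*}v\ge\lambda\|v\|^2,
\]
so every coordinate satisfies $|v_j|\le\sqrt{H^*/\lambda}\le\sqrt{U_0/\lambda}$. The $J$ block $t^*=\theta^*_J$ therefore lies in the stated box.

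For any $t$ we have $F_J(t)\le H^*$, because minimizing over the restricted comparator $\theta_J=t$ can only raise $\min J_X$. This gives $\max_{\text{box}}F_J\le H^*$. At $t^*$, the completion $X^*$ attains the full optimum, because the unrestricted minimizer already has $\theta_J=t^*$. Hence $F_J(t^*)\ge H^*$, and equality holds.

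The main care point is Step 3: one must use the full optimizer of the maximizing completion, not an arbitrary one. Every bound must also go through $M_X\succeq\lambda I$ rather than through a completion-dependent curvature.
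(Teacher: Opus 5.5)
Your proposal is correct and follows essentially the same route as the paper. It completes the square in $b$ for the identity, uses a second completion (the Schur complement) for the curvature $S_X$, and uses the bound $\lambda\|\theta_X^*-\beta\|^2\le h_\beta(X)\le U_0$ to place the full optimizer's $J$-block in the box. Your two-sided comparison, $F_J\le H^*$ with equality at $t^*=\theta^*_J$, spells out what the paper's last sentence leaves implicit. One small correction to your closing remark: the localization holds for every completion, not only the maximizing one, since $h_\beta(X)\le H^*\le U_0$ for all $X$.
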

\begin{proof}
Completing the square in $b$ proves~\eqref{eq:profile-identity}; a second completion of the square gives $S_X$. The full optimum satisfies $\lambda\|\theta_X^*-\beta\|^2\le h_\beta(X)\le U_0$, proving the containing box. Maximization over that box and the original completion domain recovers the full retraining optimum.
\end{proof}
For each corner query, a verified factor of $A$ bounds the last term in~\eqref{eq:profile-identity}. The transformed gradient of an uncertain row varies along the fixed direction obtained from its observed entries, giving a single algebraic row coordinate. A maximizing pair $(X,b)$ at fixed $t$ admits a row-edge completion by Theorem~\ref{thm:row}. Thus the compressed row certificate applies to each query. Theorem~\ref{thm:interp}, with
\[
 H_j=\lambda+\sum_iw_i\max\{L_{ij}^2,U_{ij}^2\},
\]
certifies every outer box. This computational reduction retains unrestricted retraining in $K$. Its search dimension is $|J|$; the residual-energy theorem separately controls certification dimension through residual structure.

\paragraph{Choosing the conic representation.}
The implementation compares the sizes of the two formulations using only $p$ and the uncertain-cell counts $k_i$. Put
\[
 \begin{aligned}
 q&=1+\sum_i k_i+\sum_i k_i(k_i+1)/2,\\
 C_{\rm moment}&=q(q+1)/2,
 \end{aligned}
\]
and
\[
 C_{\rm row}=\frac{(p+1)(p+2)}2+
 \sum_i\frac{(p+k_i+2)(p+k_i+3)}2.
\]
These count symmetric entries in the principal moment cone and the compressed block cones. The solver chooses the moment formulation when $C_{\rm moment}<C_{\rm row}$ and compressed rows otherwise. This geometry-based rule uses the same original objective and certificate tolerance in both branches.

\paragraph{Common general-coarsening protocol.}
Every condition uses rank-one/two/three structural selection and a one-second initial search. Both refinement branches start with 64 feasible starts. The moment branch uses eight reference-exchange rounds with 32 starts, one root-reference update, 60-second root and 40-second child proposals, and at most 33 evaluated nodes. A leading relaxed covariance direction and its mean propose complementary halfspaces; their nonnegative linear slacks and products with normalized variables enter the validated dual. The compressed branch uses sixteen reference exchanges with 32 starts and a 90-second root proposal. Remaining compressed cases with at most three uncertain columns use the profiled comparator search with at most 1023 evaluated boxes; other cases use fixed-reference residual slabs with at most 513 nodes and 60 seconds per proposal. The complete cap is 9000 seconds. Representation selection, input construction, reference fitting, numerical proposals, exact evaluations and completed partial-child work enter the clock. A parent stays active until both covering children are verified. A separate rational replay checks the representation rule, original witness, every query and the full cover.

\begin{algorithm}[t]
\caption{Original-table regret certification}\label{alg:certificate}
\begin{algorithmic}[1]
\REQUIRE Row boxes, $y,w,\lambda,\beta$, rank cap, tolerance $\varepsilon$, total budget $B$, structural-stage budget $B_s\le B$
\STATE Start the complete clock; charge input and preprocessing to $B$.
\STATE Form $G_0,r_0$ from complete rows and ridge once.
\STATE Propose dyadic $T$; verify its metric and an original upper $U_0$.
\STATE Propose uncertainty-gradient spans or signal-row spans plus an anchor-correction direction; verify the actual Gram/projector.
\STATE Select a span using verified spectral, moment or structural energy budgets.
\STATE Establish a finite root; initialize a feasible original lower $L$.
\WHILE{complete time is below $B_s$ and $U-L>\varepsilon$}
\STATE Pop the leaf with largest upper; bisect its weighted widest side.
\STATE Query row maxima; bound both children; update a feasible table.
\STATE Retrain that table in all $p$ features and update $L$.
\STATE Update $U$ from surviving leaf uppers, tail and $U_0$.
\ENDWHILE
\STATE If $U-L>\varepsilon$, apply the validated joint residual refinement; use observed-column elimination or row-edge subdivision for the general-coarsening protocol (Appendix~\ref{app:compact-profile}).
\STATE \textbf{return} $[L,U]$, completed-table witness, error ledger and stop reason
\end{algorithmic}
\end{algorithm}

\end{document}